\documentclass{article}
\usepackage{graphicx}
\usepackage{multirow}
\usepackage{tabularx}
\usepackage{bbding}
\usepackage{algorithm}
\usepackage{algorithmic}
\usepackage{amsmath}
\usepackage{amsthm} 
\usepackage{amssymb}
\usepackage{makecell}
\usepackage{colortbl}
\usepackage{subfig}
\usepackage{subcaption}
\usepackage{fancybox}
\usepackage{enumitem}
\usepackage{adjustbox}
\usepackage{threeparttable}
\definecolor{DownGreen}{RGB}{34, 139, 34}
\definecolor{UpOrange}{RGB}{210, 105, 30}
\definecolor{NeutralGray}{RGB}{120, 120, 120}
% \newcommand{\theHalgorithm}{\arabic{algorithm}}

% method macros

 % 
 % 
\definecolor{rowblue}{RGB}{235, 242, 250}

\newtheorem{lemma}{Lemma}[section]
\newtheorem{assumption}{Assumption}[section]
\newtheorem{theorem}{Theorem}[section]

\newtheorem{remark}{Remark}[section]

% if you need to pass options to natbib, use, e.g.:
%     \PassOptionsToPackage{numbers, compress}{natbib}
% before loading neurips_2025

% ready for submission
% \usepackage{neurips_2026}

% to compile a preprint version, e.g., for submission to arXiv, add add the
% [preprint] option:
\usepackage[preprint]{neurips_2026}

% to compile a camera-ready version, add the [final] option, e.g.:
%     \usepackage[final]{neurips_2025}

% to avoid loading the natbib package, add option nonatbib:
% \usepackage[nonatbib]{neurips_2025}

\usepackage[utf8]{inputenc} % allow utf-8 input
\usepackage[T1]{fontenc}    % use 8-bit T1 fonts
\usepackage{hyperref}       % hyperlinks
\usepackage{url}            % simple URL typesetting
\usepackage{booktabs}       % professional-quality tables
\usepackage{amsfonts}       % blackboard math symbols
\usepackage{nicefrac}       % compact symbols for 1/2, etc.
\usepackage{microtype}      % microtypography
\usepackage{xcolor}         % colors

\definecolor{customgray}{gray}{0.9}
% \title{Selective Updates via Momentum-Gradient Alignment with Convergence Guarantees}
% \title{MuonRC: Balancing Before Orthogonalization with Lightweight Row-Column Equilibration}

\title{On the Convergence of Muon and Beyond}

% The \author macro works with any number of authors. There are two commands
% used to separate the names and addresses of multiple authors: \And and \AND.
%
% Using \And between authors leaves it to LaTeX to determine where to break the
% lines. Using \AND forces a line break at that point. So, if LaTeX puts 3 of 4
% authors names on the first line, and the last on the second line, try using
% \AND instead of \And before the third author name.

% \author{%
%   David S.~Hippocampus\thanks{Use footnote for providing further information
%     about author (webpage, alternative address)---\emph{not} for acknowledging
%     funding agencies.} \\
%   Department of Computer Science\\
%   Cranberry-Lemon University\\
%   Pittsburgh, PA 15213 \\
%   \texttt{hippo@cs.cranberry-lemon.edu} \\
%   % examples of more authors
%   % \And
%   % Coauthor \\
%   % Affiliation \\
%   % Address \\
%   % \texttt{email} \\
%   % \AND
%   % Coauthor \\
%   % Affiliation \\
%   % Address \\
%   % \texttt{email} \\
%   % \And
%   % Coauthor \\
%   % Affiliation \\
%   % Address \\
%   % \texttt{email} \\
%   % \And
%   % Coauthor \\
%   % Affiliation \\
%   % Address \\
%   % \texttt{email} \\
% }

\author{Da Chang$^{134}$, Yongxiang Liu$^3$, Ganzhao Yuan$^{2}$\thanks{Corresponding author: yuanganzhao@foxmail.com} \\[3mm]
\centerline{\normalsize $^1$Shenzhen Institute of Advanced Technology, Chinese Academy of Sciences}\\
\centerline{\normalsize $^2$Shenzhen University of Advanced Technology}\\
\centerline{\normalsize $^3$Pengcheng Laboratory}\\
\centerline{\normalsize $^4$University of Chinese Academy of Sciences}
}
\begin{document}

\maketitle

\begin{abstract}
The Muon optimizer has demonstrated remarkable empirical success in handling matrix-structured parameters for training neural networks. However, a significant gap remains between its practical performance and theoretical understanding. Existing analyses show that the Muon variants achieve only a suboptimal ergodic convergence rate of $\mathcal{O}(T^{-1/4})$ in stochastic non-convex settings, where $T$ denotes the number of iterations. To study the theoretical limits of Muon, we analyze two momentum-based variance-reduced variants: the one-batch Muon-MVR1 and the two-batch Muon-MVR2. We provide the first rigorous proof that, under \textbf{horizon-free} learning-rate schedules, variance reduction enables Muon-MVR2 to attain the optimal anytime convergence rate $\widetilde{\mathcal{O}}(T^{-1/3})$, matching the lower bound for this problem class.
Under the Polyak--\L{}ojasiewicz (PL) condition, we establish anytime guarantees for Muon-MVR1 and Muon-MVR2: they attain best-iterate rates of $\widetilde{\mathcal{O}}(T^{-1/4})$ and $\widetilde{\mathcal{O}}(T^{-1/3})$ for the expected square-root suboptimality, and, given an additional uniform gradient bound along the iterates, achieve last-iterate rates of $\mathcal{O}(T^{-1/4})$ and $\mathcal{O}(T^{-1/3})$ for the objective gap, respectively.
Experiments on CIFAR-10 and C4 support the practical effectiveness of the proposed variance-reduced Muon variants. 
% Code is available at \href{https://anonymous.4open.science/r/muon-mvr}{Anonymous Repository}.
Code is available at \href{https://github.com/MaeChd/MUON-MVR}{Muon-MVR} Codebase.
\end{abstract}

\section{Introduction}

The immense computational cost of pre-training Large Language Models (LLMs) has spurred a surge of research into novel optimization methods designed to enhance parameter efficiency and training stability \cite{hoffmann2022training,Liu2023SophiaAS,Chen2023SymbolicDO,vyas2025soap,pethick2025training,Yuan2024MARSUT}. Among these, methods based on matrix orthogonalization have recently garnered significant attention from both academia and industry \cite{jordan2024muon,Liu2025MuonIS}. In particular, the Muon optimizer has emerged as a notable milestone due to its impressive empirical performance \cite{Liu2025MuonIS,An2025ASGOAS,liu2025cosmos,shah2025practical}. However, despite its practical success, the theoretical understanding of Muon's underlying mechanisms has surprisingly lagged behind, with existing convergence analyses being fraught with limitations and even critical fallacies.

The convergence theory for Muon remains incomplete. Existing analyses often yield bounds that depend on the problem dimension or become informative only when the batch size is sufficiently large. Recent concurrent work has strengthened the theory for both standard Muon and practical variants, but it does not cover the \textbf{horizon-free} variance-reduced setting studied in this paper~\cite{chen2025muon,Sato2025ConvergenceBA,sfyraki2025lions,nagashima2026improved,kim2026convergence}. Some earlier arguments also contain incorrect mathematical steps. One example is the use of $\|\mathbf{S}^{-1}\|_2 \le 1/\|\mathbf{S}\|_2$ at a critical point in the proof, which calls the corresponding convergence claims into question~\cite{Li2025ANO}. The most rigorous existing analysis focuses on standard Muon, that is, Algorithm~\ref{alg:muon} with Option MVR1 and $\gamma_t=0$, together with a simplified variant, and proves convergence to a non-standard $\epsilon$-nuclear norm stationary point~\cite{chen2025muon}. Other recent studies clarify several aspects of Muon, including finite-step Newton--Schulz orthogonalization~\cite{kim2026convergence}, spectral preconditioning~\cite{ma2026preconditioning}, and $\mu$P-compatible spectral control~\cite{zhao2026towards}. These works, however, do not establish convergence for the MVR1 and MVR2 estimators in Algorithm~\ref{alg:muon} under the standard stochastic nonconvex and PL settings considered here. Consequently, the convergence of Nesterov-Accelerated Muon, which corresponds to Algorithm~\ref{alg:muon} with Option MVR1 and is discussed in \cite{Liu2025MuonIS,Sato2025ConvergenceBA}, remains open. The same is true for Variance-Reduction Muon, which corresponds to Option MVR2.

% To bridge this theoretical gap, we establish a rigorous theoretical foundation for Muon. Our main contributions are threefold, all obtained under \textbf{horizon-free} learning-rate schedules and hence anytime:
To close this gap, we establish a rigorous theoretical foundation for Muon. Our contributions are threefold, all under \textbf{horizon-free} learning-rate schedules and hence anytime:
\begin{itemize}[
    leftmargin=1.0em,
    labelsep=0.5em,    
    itemsep=2pt,
    topsep=2pt,
    parsep=0pt,
    partopsep=0pt]
    \item In general non-convex settings, we analyze both the standard Muon (Algorithm \ref{alg:muon}, Option MVR1 with $\gamma_t=0$) and Muon-MVR1, recovering the baseline anytime convergence rate $\widetilde{\mathcal{O}}(T^{-1/4})$.
    \item We further provide, to the best of our knowledge, the first analysis in an unconstrained Muon-style setting showing that Muon-MVR2 (Algorithm~\ref{alg:muon}, Option MVR2) attains the optimal anytime convergence rate $\widetilde{\mathcal{O}}(T^{-1/3})$, matching the best-known complexity for variance-reduced momentum methods.
    % \item Under the PL condition, we establish anytime convergence guarantees for both best-iterate and last-iterate performance. 
    % Specifically, Muon-MVR1 and Muon-MVR2 attain best-iterate rates of $\widetilde{\mathcal{O}}(T^{-1/4})$ and $\widetilde{\mathcal{O}}(T^{-1/3})$ for the expected square-root suboptimality, together with last-iterate objective-gap rates of $\mathcal{O}(T^{-1/4})$ and $\mathcal{O}(T^{-1/3})$, respectively.
    \item Under the PL condition, we establish anytime guarantees for best-iterate performance and, with an additional uniform gradient bound along the iterates, for last-iterate performance. Specifically, Muon-MVR1 and Muon-MVR2 attain best-iterate rates of $\widetilde{\mathcal{O}}(T^{-1/4})$ and $\widetilde{\mathcal{O}}(T^{-1/3})$ for the expected square-root suboptimality, and last-iterate objective-gap rates of $\mathcal{O}(T^{-1/4})$ and $\mathcal{O}(T^{-1/3})$, respectively.
\end{itemize}
Table \ref{tab:compare} summarizes the main contributions of our work and compares them with existing methods.

\begin{table}[!h]
\centering
\resizebox{\textwidth}{!}{
% \begin{adjustbox}{max height=0.85\textheight,  max width=\textwidth}{
\begin{threeparttable}
\caption{Comparison of Existing Muon-type Analyses with Ours.}
\label{tab:compare}
\begin{tabular}{lccccc}
\toprule
     & Smooth$^a$ & \makecell{Stoc. Gradient \\Estimator $^b$} & \makecell{Batch\\Size} & \makecell{Ergodic Conv.\\ Rate $^e$}    
     & \makecell{Last-Iterate Conv.\\  Rate $^f$} \\
\midrule
Li et al.~\cite{Li2025ANO} & $L$ & MVR1($\gamma_t=0$) & $\mathcal{O}(1)$ & $\mathcal{O}(T^{-1/4})$ & \XSolidBrush \\
Sato et al.~\cite{Sato2025ConvergenceBA} & $L$ & MVR1($\gamma_t=0$) & $\mathcal{O}(\epsilon^{-1})$ & $\mathcal{O}(T^{-1})+\mathcal{O}(1)^c$ & \XSolidBrush \\
Sato et al.~\cite{Sato2025ConvergenceBA} & $L$ & MVR1 & $\mathcal{O}(\epsilon^{-1})$ & $\mathcal{O}(T^{-1})+\mathcal{O}(1)^c$ & \XSolidBrush \\
Sfyraki et al.~\cite{sfyraki2025lions} & $L_{+}$ & \textbf{MVR2} & $\mathcal{O}(\epsilon^{-1})^d$ & $\mathcal{O}(T^{-1/3})$ & \XSolidBrush \\
Shen et al.~\cite{shen2025convergence} & $L$ & MVR1($\gamma_t=0$) & $\mathcal{O}(1)$ & $\mathcal{O}(T^{-1/4})$ & \XSolidBrush \\
\midrule
\rowcolor{customgray}Ours & $L$ & MVR1($\gamma_t=0$) & $\mathcal{O}(1)$ & $\widetilde{\mathcal{O}}(T^{-1/4})$ & $\mathcal{O}(T^{-1/4})$ \\
\rowcolor{customgray} Ours & $L$ & MVR1 & $\mathcal{O}(1)$ & $\widetilde{\mathcal{O}}(T^{-1/4})$ & $\mathcal{O}(T^{-1/4})$  \\
\rowcolor{customgray}Ours & $L_{+}$ & \textbf{MVR2} & $\mathcal{O}(1)$ & $\widetilde{\mathcal{O}}(T^{-1/3})$ & $\mathcal{O}(T^{-1/3})$ \\
\bottomrule
\end{tabular}

\begin{tablenotes}
    \small
    \item[a] $L$ pertains to the smoothness of the overall function $f$, while $L_{+}$ pertains to the smoothness of its stochastic components $f(\cdot;\xi)$.
    \item[b] Option MVR1 ($\gamma_t=0$) is the standard momentum method, Option MVR1 is the one-batch variance-reduction momentum method, and Option MVR2 is the two-batch variance-reduction momentum method. We summarize them in Algorithm \ref{alg:muon}.
    \item[c] Although increasing the batch size can mitigate the impact of stochastic noise \cite{Sato2025ConvergenceBA}, these methods still fail to converge to a stationary point and cannot eliminate the influence of dimensionality.
    \item[d] The results from \cite{sfyraki2025lions} on Option MVR2 are the closest to ours. However, their method requires a large initial batch of size $\mathcal{O}(\epsilon^{-1})$, although the batch size can be reduced to 1 in subsequent iterations.
    \item[e]This column reports the rate measured by the average gradient norm $\frac{1}{T}\sum_{t=1}^T \mathbb{E}\|\nabla f(\mathbf{X}_t)\|_F$.
    \item[f] Under the PL condition and the uniform gradient-bound assumption along the iterates, this column reports the last-iterate objective-gap rate $\mathbb{E}[f(\mathbf{X}_T)]-f^*$.
\end{tablenotes}
\end{threeparttable}
}
% \end{adjustbox}
\end{table}

$\bullet$~\textbf{Organization}. 
The rest of the paper is organized as follows. 
Section~\ref{section-rw} reviews related work and situates our contributions among existing Muon-type methods. 
Section~\ref{section-method} presents the algorithmic setup and the Muon variants analyzed in this paper. 
Section~\ref{section-conv} establishes nonconvex stationarity rates and PL-based best-iterate and last-iterate convergence guarantees. 
Section~\ref{section-exp} reports empirical results validating the effectiveness of our methods. 
Finally, Section~\ref{section-conclusion} concludes the paper.

$\bullet$~\textbf{Notations}. We denote scalars by non-bold letters (e.g., $a, A$), vectors in $\mathbb{R}^d$ by bold lowercase letters (e.g., $\mathbf{a}$), and matrices by bold uppercase letters (e.g., $\mathbf{A}$). The space $\mathbb{R}^d$ is endowed with the Euclidean inner product $\langle \mathbf{x}, \mathbf{y} \rangle_2 := \mathbf{x}^\top \mathbf{y}$ and norm $\|\mathbf{x}\|_2$. For matrices, we employ the Frobenius inner product $\langle \mathbf{A}, \mathbf{B} \rangle_F := \text{tr}(\mathbf{A}^\top \mathbf{B})$ and the corresponding norm $\|\mathbf{A}\|_F$. The nuclear norm, denoted by $\|\mathbf{A}\|_*$, is defined as the sum of the singular values of the matrix, $\|\mathbf{A}\|_* = \sum_i \sigma_i(\mathbf{A})$. Throughout the paper, $[m]$ denotes the set of integers $\{1, 2, \ldots, m\}$, and $\mathbb{N}$ denotes the set of non-negative integers. 
For a matrix $\mathbf{M}\in\mathbb{R}^{m\times n}$ with rank $r$, let $\mathbf{M}=\mathbf{U}_r\Sigma_r\mathbf{V}_r^\top$
be its compact SVD over the positive singular values. We define the matrix-sign normalization as
$\operatorname{msign}(\mathbf{M}):=\mathbf{U}_r\mathbf{V}_r^\top,
\operatorname{msign}(0):=0.$
Equivalently,
\[
\operatorname{msign}(\mathbf{M})=\mathbf{M}(\mathbf{M}^\top \mathbf{M})^{\dagger/2}.
\]
where $A^{\dagger/2}$ denotes the Moore--Penrose pseudoinverse square root.
It satisfies
\[
\langle \mathbf{M},\operatorname{msign}(\mathbf{M})\rangle_F=\|\mathbf{M}\|_*,
\qquad
\|\operatorname{msign}(\mathbf{M})\|_F^2=\operatorname{rank}(\mathbf{M})\le n .
\]
The model is parameterized by a matrix $\mathbf{X} \in \mathbb{R}^{m \times n}$. Without loss of generality, we assume $m \ge n$, so the rank of the matrix is at most $n$. The model is optimized by minimizing the empirical loss function $f(\mathbf{X}) := \frac{1}{N} \sum_{i \in [N]} f_i(\mathbf{X})$, where $N$ is the number of training data points, and $f_i(\mathbf{X})$ is the loss function for $\mathbf{X} \in \mathbb{R}^{m \times n}$ with respect to the $i$-th training data point $\mathbf{z}_i$ (for $i \in [N]$). Let $\xi$ be a random variable that is independent of $\mathbf{X} \in \mathbb{R}^{m \times n}$, and let $\mathbb{E}_\xi[\mathbf{X}]$ denote the expectation of a random variable $\mathbf{X}$ with respect to $\xi$. 

\section{Related Work}
\label{section-rw}
\subsection{The Evolution of Optimization Algorithms}
Beyond Adam variants, research has explored other paradigms such as preconditioning methods that use parameter curvature.~\cite{Gupta2018ShampooPS} pioneered this direction with the Shampoo optimizer. Building on this work,~\cite{jordan2024muon} proposed Muon, which exploits matrix structure by orthogonalizing gradient momentum. Subsequent variants emerged, such as AdaMuon~\cite{si2025adamuon} which adds element-wise adaptivity, and COSMOS~\cite{liu2025cosmos} which integrates ideas from SOAP~\cite{vyas2025soap} for large model training. While these methods showed practical benefits, they often lacked convergence proofs. To bridge this theoretical gap for LMO-based methods, Gluon~\cite{riabinin2025gluon} introduces a novel layer-wise smoothness assumption, providing convergence guarantees that align with the practical implementations of optimizers like Muon and Scion~\cite{pethick2025training}. In the literature, other related preconditioning methods include ASGO~\cite{An2025ASGOAS}, PolarGrad~\cite{lau2025polargrad}, and AdaGO~\cite{Zhang2025AdaGradMM}, which introduces the Adagrad-Norm step size~\cite{ward2020adagrad} into a simplified version of Muon. 
Meanwhile, other high-performing optimizers not belonging to the Shampoo family also warrant attention, such as Sophia~\cite{Liu2023SophiaAS}, which improves second-moment estimation through efficient diagonal Hessian approximation and coordinate clipping, and Lion~\cite{Chen2023SymbolicDO}, a lightweight optimizer that only tracks momentum and uses the sign function to normalize updates. These methods are closely related to normalized SGD with momentum  variants, where the gradient is rescaled or truncated before the update~\cite{cutkosky2020momentum,cutkosky2021high}, in contrast to Muon and our Muon-MVR variants, which orthogonalize matrix-valued momentum through matrix-sign normalization.

\subsection{Analysis of Muon}
Recent theoretical work has clarified Muon's optimization structure, implicit bias, and convergence behavior. A first line of analysis interprets modern optimizers as steepest-descent or trust-region methods under non-Euclidean norm constraints, explaining Muon's orthogonalized update as the solution induced by a spectral-norm constraint~\cite{bernstein2024old,kovalev2025understanding}. Related constrained-optimization views connect Muon with weight decay to stochastic Frank-Wolfe methods~\cite{sfyraki2025lions}, characterize Muon as a special case of Lion-$\mathcal{K}$ with convergence to KKT points~\cite{chen2025muon}, and place it within the broader theory of norm-constrained stochastic conditional-gradient methods~\cite{pethick2025training}. Beyond this formal interpretation, Muon's implicit bias has also been studied. Fan et al.~\cite{fan2025implicit} show that Muon favors max-margin solutions measured by the spectral norm of the weight matrix, suggesting an implicit regularization effect distinct from Adam. A polar-decomposition-based preconditioning perspective further explains this difference by separating curvature from gradient anisotropy~\cite{lau2025polargrad}. Convergence analyses have established nonconvex guarantees under smoothness assumptions~\cite{shen2025convergence}, although some early arguments were later challenged due to incorrect inequalities~\cite{Li2025ANO}; other works prove convergence for Muon variants but often with slow rates or restrictive stationarity conditions~\cite{Sato2025ConvergenceBA}. Practical analyses further emphasize the role of weight decay in large-scale pre-training and propose RMS-based rules for transferring Adam learning rates to Muon~\cite{Liu2025MuonIS}. Concurrent work sharpens this picture by analyzing finite Newton--Schulz approximations to the matrix-sign, or polar, update ~\cite{kim2026convergence}, deriving sharper nonconvex bounds for standard and Nesterov-style Muon~\cite{nagashima2026improved}, studying simplified Muon dynamics with condition-number-independent linear convergence~\cite{ma2026preconditioning}, and identifying spectral conditions for $\mu$P-compatible Muon training~\cite{zhao2026towards}. These results are complementary to ours: they focus on standard Muon and its spectral mechanisms, whereas we study \textbf{horizon-free} variance-reduced Muon, particularly the two-batch MVR2 estimator under constant mini-batches.

\section{Revisiting the Muon Algorithms}
\label{section-method}
We consider the following optimization problem:
\begin{equation}
\label{problem}
\min_{\mathbf{X}\in \mathbb{R}^{m\times n}}f(\mathbf{X}),\mathrm{~where~}f(\mathbf{X})=\mathbb{E}_{\xi\sim\mathcal{D}}[f(\mathbf{X};\xi)],
\end{equation}
where $f: \mathbb{R}^{m\times n} \rightarrow \mathbb{R}$ is the loss function, $\mathbf{X}$ denotes the decision variable, and $\xi$ represents a random variable(e.g., a training data sample) drawn from an unknown distribution $\mathcal{D}$. We assume that $f$ is differentiable and possibly nonconvex.

The Muon optimizer begins by computing a momentum-based variance-reduced gradient update, similar in spirit to ADAM \cite{Kingma2014AdamAM}, STORM \cite{Cutkosky2019MomentumBasedVR}, and SGD with Nesterov momentum \cite{sutskever2013importance}.
% The momentum term $\mathbf{M}_t$ is then projected onto the set of orthogonal matrices. This orthogonalization step equalizes the singular values, ensuring that no principal component direction dominates the optimization.
The momentum term $\mathbf{M}_t$ is then orthogonalized through the matrix-sign normalization $\mathbf{O}_t=\operatorname{msign}(\mathbf{M}_t)$.
This step preserves the singular-vector structure of $\mathbf{M}_t$ while normalizing its nonzero singular values. The resulting orthogonalized momentum direction is used to update the model parameters.
The Muon algorithm is summarized in Algorithm \ref{alg:muon}.
%By balancing the updates across all directions, this approach improves training efficiency and accelerates convergence. 
%, particularly in large-scale neural networks such as transformers
\begin{algorithm*}
\caption{Muon-style Algorithm}
\label{alg:muon}
\begin{algorithmic}[1]
\STATE \textbf{Input:} Initial parameters $\mathbf{X}_1\in\mathbb{R}^{m\times n}$, learning rate $\eta_t > 0$, momentum parameter $\beta_t\in[0,1)$, variance-reduction parameter $\gamma_t \in [0, 1]$, initial momentum $\mathbf{M}_{0} = 0$. 
\FOR{$t = 1$ {\bf to} $T$}
    \STATE Compute stochastic gradient: $\nabla f(\mathbf{X}_t;\xi_t)$
    \STATE $\color{red}\text{Option MVR1: One-batch Momentum Variance-Reduction (MVR1)}$ \label{line:opt2}
    \STATE $\mathbf{M}_t = \beta_t \mathbf{M}_{t-1} + (1-\beta_t)\nabla f(\mathbf{X}_t;\xi_t) + \gamma_t\cdot \beta_t \cdot (\nabla f(\mathbf{X}_t;\xi_t)-\nabla f(\mathbf{X}_{t-1};\xi_{t-1}))$
    \STATE $\color{red}\text{Option MVR2: Two-batch Momentum Variance-Reduction (MVR2)}$ \label{line:opt3}
    \STATE $\mathbf{M}_t = \beta_t \mathbf{M}_{t-1} + (1-\beta_t)\nabla f(\mathbf{X}_t;\xi_t) + \gamma_t\cdot \beta_t \cdot (\nabla f(\mathbf{X}_t;\xi_t)-\nabla f(\mathbf{X}_{t-1};\xi_t))$
    % \STATE $\mathbf{O}_t \in \arg \min_\mathbf{O} \|\mathbf{O}-\mathbf{M}_t\|_F, \text{ s.t. }\mathbf{O}^\top \mathbf{O}=\mathbf{I}_n$
    \STATE $\mathbf{O}_t =\operatorname{msign}(\mathbf{M}_t)$
    \STATE $\mathbf{X}_{t+1} = \mathbf{X}_t - \eta_t \mathbf{O}_t$
\ENDFOR

\STATE \textbf{Output:} Final parameters $\mathbf{X}_{T+1}$
\end{algorithmic}
\end{algorithm*}

For notational convenience, we adopt the convention $\nabla f(\mathbf{X}_0;\zeta):=0, \forall\,\zeta,$
which is used only to initialize the variance-reduction correction at $t=1$. The operation in Line 8 is the exact matrix-sign normalization of $\mathbf{M}_t$. 
If $\mathbf{M}_t=\mathbf{U}_{t,r}\Sigma_{t,r}\mathbf{V}_{t,r}^\top$ is the compact SVD over its positive singular values, then
\[
\mathbf{O}_t=\operatorname{msign}(\mathbf{M}_t)=\mathbf{U}_{t,r}\mathbf{V}_{t,r}^\top
=\mathbf{M}_t(\mathbf{M}_t^\top \mathbf{M}_t)^{\dagger/2}.
\]
Thus the nonzero singular values of $\mathbf{M}_t$ are normalized to one, while rank-deficient directions remain zero. Since explicitly forming and factorizing $\mathbf{M}_t^\top\mathbf{M}_t$ is costly, the Muon implementation of \cite{jordan2024muon} approximates the inverse square root via a quintic Newton--Schulz iteration, yielding stable, rank-preserving orthogonalization in only a few steps. Recent theory shows that finite-step Newton--Schulz Muon matches the stationarity complexity order of the exact matrix-sign, or polar, update up to a step-dependent multiplicative factor~\cite{kim2026convergence}. To isolate the effect of momentum-based variance reduction, we analyze the exact $\operatorname{msign}$ update in this work.
% Practical Muon implementations~\cite{jordan2024muon} approximate this operation by a small fixed number of Newton--Schulz iterations. In this work, we isolate the effect of momentum-based variance reduction and analyze the exact matrix-sign version.
% Since forming and decomposing $\mathbf{M}_t^\top\mathbf{M}_t$ is still expensive, the Muon implementation of \cite{jordan2024muon} approximates the inverse square root with a quintic Newton--Schulz iteration. This recurrence converges in only a few steps (typically five), producing a numerically stable, rank-preserving orthogonalization of $\mathbf{M}_t$ that is nearly as accurate as SVD but far more efficient. Recent theory further shows that finite-step Newton--Schulz Muon can match the stationarity complexity order of the exact-polar update up to a step-dependent multiplicative factor \cite{kim2026convergence}. In this work, we focus on the core effect of momentum-based variance reduction and use exact orthogonalization.

Algorithm \ref{alg:muon} incorporates two distinct strategies for momentum-based variance reduction, termed Muon-MVR1 and Muon-MVR2. These options present a fundamental trade-off between computational efficiency and theoretical rigor. MVR2 implements a principled variance reduction scheme at the cost of two gradient evaluations per step, while MVR1 serves as a computationally cheaper, single-gradient approximation. We detail both below.

$\blacktriangleright$ \textbf{Option 1: Muon-MVR1 (One-batch Approximation)}. The first option, MVR1, augments the classical momentum update with a variance-reducing term that reuses the gradient from the previous step:
\begin{equation}
\label{o2}
\begin{aligned}
\mathbf{M}_t &= \beta_t \mathbf{M}_{t-1} + (1-\beta_t)\nabla f(\mathbf{X}_{t};\xi_t) 
+ \gamma_t\cdot \beta_t \cdot (\nabla f(\mathbf{X}_{t};\xi_t)-\nabla f(\mathbf{X}_{t-1};\xi_{t-1})).
\end{aligned}
\end{equation}
The primary advantage of this formulation is its computational efficiency, as it requires only one stochastic gradient evaluation per iteration. This update rule is flexible:

\textit{(i)} When $\gamma_t\equiv0$, Rule (\ref{o2}) degenerates to the standard exponential moving average (EMA) of gradients (Rule (\ref{o1})), a stochastic gradient estimator widely used in optimizers like Adam \cite{Kingma2014AdamAM}.
\begin{equation}
\label{o1}
\mathbf{M}_t = \beta_t \mathbf{M}_{t-1} + (1-\beta_t)\nabla f(\mathbf{X}_t;\xi_t).
\end{equation}
\textit{(ii)} By setting $\beta_t = \mu$ and $\gamma_t = 1 - \mu$, the update rule~(\ref{o2}) yields a momentum term that, after rescaling by $1/(1 - \mu)$, satisfies the recurrence $\widetilde{\mathbf{M}}_t = \mu \widetilde{\mathbf{M}}_{t-1} + \nabla f(\mathbf{X}_t;\xi_t) + \mu(\nabla f(\mathbf{X}_t;\xi_t) - \nabla f(\mathbf{X}_{t-1};\xi_{t-1}))$. This form is algebraically equivalent to the standard Muon optimizer \cite{jordan2024muon,Liu2025MuonIS} derived from Eq.~(\ref{practical}) \cite{Yuan2024MARSUT}, and inherently implements Nesterov acceleration via a first-order Taylor approximation of the gradient \cite{xie2024adan}.
\begin{equation}
\label{practical}
\begin{aligned}
    \mathbf{C}_{t} &= \mu \mathbf{C}_{t-1} + \nabla f(\mathbf{X}_{t};\xi_t),\\
    \mathbf{M}_t &= \mu \mathbf{C}_{t} + \nabla f(\mathbf{X}_{t};\xi_t).\\
\end{aligned} 
\end{equation}
$\blacktriangleright$ \textbf{Option 2: Muon-MVR2 (Two-batch Principled VR)}. The second option, MVR2, incorporates a more rigorous variance-reduction mechanism inspired by methods like SPIDER \cite{fang2018spider}, STORM \cite{Cutkosky2019MomentumBasedVR}, SVRG \cite{zhou2020stochastic}, SUPER-Adam \cite{Huang2021SUPERADAMFA}, and MARS \cite{Yuan2024MARSUT}:
% and its successors \cite{zhou2020stochastic,Cutkosky2019MomentumBasedVR,Huang2021SUPERADAMFA,Yuan2024MARSUT}:
\begin{equation}
\label{o3}
\begin{aligned}
\mathbf{M}_t &= \beta_t \mathbf{M}_{t-1} + (1-\beta_t)\nabla f(\mathbf{X}_{t};\xi_t) + \gamma_t \cdot \beta_t \cdot (\nabla f(\mathbf{X}_{t};\xi_t)-\nabla f(\mathbf{X}_{t-1};\xi_t)).
\end{aligned}
\end{equation}
The key distinction from MVR1 is the correction term. MVR2 subtracts the gradient computed on the previous parameters but with the current mini-batch, i.e., $\nabla f(\mathbf{X}_{t-1};\xi_t)$. This modification is crucial as it is designed to directly cancel the variance introduced by the mini-batch $\xi_t$ \cite{Cutkosky2019MomentumBasedVR,Huang2021SUPERADAMFA,Yuan2024MARSUT}. However, this theoretical benefit comes at the cost of requiring two gradient evaluations per step. MVR1 can be formally understood as a practical approximation of MVR2. The difference between their update rules is a single noise term, $\Delta^{\text{Noise}}_{t-1} = \nabla f(\mathbf{X}_{t-1};\xi_t)-\nabla f(\mathbf{X}_{t-1};\xi_{t-1})$. Under the standard assumption of bounded variance ($\mathbb{E}_{\xi}[\|\nabla f(\mathbf{X};\xi) - \nabla f(\mathbf{X})\|_F^2] \le \sigma^2$), the variance of this noise is well-controlled, satisfying $\mathbb{E}[\|\Delta^{\text{Noise}}_{t-1}\|_F^2] \le 2\sigma^2$. While MVR1 is often sufficient in practice, this structural difference leads to fundamentally different theoretical guarantees. The principled variance cancellation in MVR2 allows our algorithm to achieve a superior ergodic
convergence rate of $\widetilde{\mathcal{O}}(T^{-1/3})$, as we will formally establish in Theorem \ref{th-nonconvex-mvr2}.

\section{Convergence Analysis}
\label{section-conv}
We begin by placing our results in the standard framework of first-order stochastic optimization. The $\mathcal{O}(T^{-1/4})$ rate is a well-known bottleneck for SGD-type methods, and recent work by \cite{shen2025convergence} confirmed the same limitation for standard Muon. Accordingly, we first analyze Muon-MVR1 and recover this baseline anytime convergence rate in Theorem \ref{th-nonconvex-mvr1}. While the rate itself is standard, our main contribution here is a unified proof framework that also underpins the sharper results for Muon-MVR2.

% Our analysis uses \textbf{horizon-free} learning-rate schedules, with the associated momentum parameters chosen as functions of the current iteration $t$ rather than a prescribed terminal horizon $T$. This directly yields anytime guarantees. 
We focus on \textbf{horizon-free} schedules that yield anytime guarantees. Fixed-horizon constant-stepsize variants may remove the logarithmic factors, but are not pursued here.
% The same framework also covers the constant-stepsize setting, where it gives a tighter bound without the logarithmic factor $\ln(T)$.

To facilitate the analysis of convergence for Muon, we make the following assumptions:
\begin{assumption}
\label{ass:1}
The function $f$ is bounded from below. There exists $f^* > -\infty$ such that $f(\mathbf{X}) \geq f^*$, for all $\mathbf{X} \in \mathbb{R}^{m\times n}$.
\end{assumption}

\begin{assumption}
\label{ass:2}
The function $f$ is $L$-smooth: $\|\nabla f(\mathbf{Y}) - \nabla f(\mathbf{X})\|_F \leq L \|\mathbf{Y} - \mathbf{X}\|_F$.    
\end{assumption}

\begin{assumption}
\label{ass:2.2}
The function $f$ is $L$-smooth for any $\xi$: $\|\nabla f(\mathbf{Y};\xi) - \nabla f(\mathbf{X};\xi)\|_F \leq L \|\mathbf{Y} - \mathbf{X}\|_F$.    
\end{assumption}

\begin{assumption}
\label{ass:3}
The variance of unbiased stochastic gradient is finite. Specifically, there exists a constant $\sigma > 0$ such that for all $\mathbf{X} \in \mathbb{ R}^{m\times n}$, the following holds: $\mathbb{E}[\nabla f(\mathbf{X}; \xi)] = \nabla f(\mathbf{X})$ and $\mathbb{E}\|\nabla f(\mathbf{X};\xi) - \nabla f(\mathbf{X})\|_F^2 \leq \sigma^2$.
\end{assumption}

These assumptions are quite common~\cite{Yuan2024MARSUT,Zhou2018OnTC,Chen2018OnTC,Huang2021SUPERADAMFA,Guo2021ANC,Li2023ConvergenceOA,Wang2023ClosingTG,xie2024adan,Chang2026MuonEqBB}.  

\subsection{Ergodic Convergence of Muon}
\label{section-conv.1}
In this subsection, we establish the ergodic convergence of Muon-MVR1 and Muon-MVR2.
\subsubsection{Option MVR1}
We begin our analysis with Option MVR1, a straightforward implementation of one-batch momentum-based variance reduction. The following theorem establishes its ergodic convergence rate, demonstrating that the algorithm converges to a stationary point at a rate of $\widetilde{\mathcal{O}}(T^{-1/4})$ for specific choices of learning rate and momentum schedules.

\begin{theorem}
\label{th-nonconvex-mvr1}
Suppose Assumptions \ref{ass:1}, \ref{ass:2}, and \ref{ass:3} hold. Let $\{\mathbf{X}_t\}_{t\ge 1}$ be generated by Algorithm \ref{alg:muon} with stepsize $\eta_t = t^{-3/4}$. Consider the following two \textbf{MVR1} parameterizations:
\begin{enumerate}[leftmargin=*, itemsep=0pt, topsep=0pt]
    \item For \textbf{MVR1} ($\gamma_t = 0$), set $\beta_t = 1 - t^{-1/2}$, and define $A_1^{(1)} = 2L^{-1}\sigma^2 + 4\sqrt{2}Ln + Ln + L/2$ and $A_1^{(2)} = 4L^{-1}\sigma^2 + 4\sqrt{2}Ln + Ln + L/2$.
    \item For \textbf{MVR1} ($\gamma_t = t^{-1/2}$), set $\beta_t = 1 - (t+1)^{-1/2}$, and define $A_2^{(1)} = 4L^{-1}\sigma^2 + 8\sqrt{2}Ln + Ln + L/2$ and $A_2^{(2)} = 12L^{-1}\sigma^2 + 8\sqrt{2}Ln + Ln + L/2$.
\end{enumerate}
Then, for either choice $i \in \{1,2\}$ and any integer $T \ge 1$,
\[
\frac{1}{T}\sum_{t=1}^T \mathbb{E}\|\nabla f(\mathbf{X}_t)\|_F
\le
\frac{f(\mathbf{X}_1)-f^* + A_i^{(1)} \ln T + A_i^{(2)}}{T^{1/4}}.
\]
See Appendix \ref{proof:th-nonconvex-mvr1} for details.
\end{theorem}
\begin{remark}
% Theorem \ref{th-nonconvex-mvr1} establishes that as $T \to \infty$, the leading terms diminish to zero, and the algorithm converges to zero. 
% Theorem \ref{th-nonconvex-mvr1} implies that the averaged stationarity measure vanishes as $T \to \infty$, thereby establishing asymptotic stationarity.
% Notably, this convergence bound is free of any non-vanishing additive error term that depends on the dimension $n$ \cite{Sato2025ConvergenceBA}. We absorb the dimensional dependence into the $\mathcal{O}$-notation to define $\widetilde{\mathcal{O}}$, which yields the following convergence rate for the algorithm:
% \[
% \min_{t=1,\dots,T} \mathbb{E}\|\nabla f(\mathbf{X}_{t})\|_F \le \widetilde{\mathcal{O}}(T^{-1/4}).
% \]
Theorem \ref{th-nonconvex-mvr1} implies that the averaged stationarity measure vanishes as $T \to \infty$, establishing asymptotic stationarity.
Notably, this convergence bound is free of any non-vanishing additive error term that depends on the dimension $n$ \cite{Sato2025ConvergenceBA}. We absorb the dimensional dependence into the $\mathcal{O}$-notation to define $\widetilde{\mathcal{O}}$, which yields the convergence rate for the algorithm:
\[
\min_{t=1,\dots,T} \mathbb{E}\|\nabla f(\mathbf{X}_{t})\|_F \le \widetilde{\mathcal{O}}(T^{-1/4}).
\]
\end{remark}

\subsubsection{Option MVR2}
While prior work has established an ergodic convergence rate of $\widetilde{\mathcal{O}}(T^{-1/3})$ for variance-reduction methods in non-convex settings, including SGD-type methods \cite{fang2018spider,Cutkosky2019MomentumBasedVR,zhou2020stochastic} and Adam-type methods \cite{Huang2021SUPERADAMFA,Yuan2024MARSUT}, its effect on Muon-style orthogonalized updates remains theoretically unclear. Recent analyses have sharpened the theory of standard Muon and practical Newton--Schulz Muon \cite{kim2026convergence,nagashima2026improved}, but they do not establish an anytime $\widetilde{\mathcal{O}}(T^{-1/3})$ stochastic non-convex rate for the two-batch MVR2 estimator in Eq. (\ref{o3}) with constant mini-batches. Theorem \ref{th-nonconvex-mvr2} fills this gap by proving that Option MVR2 of Algorithm \ref{alg:muon} (Muon-MVR2) attains the same $\widetilde{\mathcal{O}}(T^{-1/3})$ ergodic convergence rate in the general non-convex setting.

\begin{theorem}
\label{th-nonconvex-mvr2}
Suppose Assumptions \ref{ass:1}, \ref{ass:2.2}, and \ref{ass:3} hold. Let $\{\mathbf{X}_t\}_{t\ge 1}$ be generated by Algorithm \ref{alg:muon} under \textbf{MVR2}, corresponding to update rule (\ref{o3}), with $\beta_t = 1-\eta_t$, $\eta_t = t^{-2/3}$, and $\gamma_t = 1$. Then, for any integer $T \ge 1$,
\[
\frac{1}{T}\sum_{t=1}^T \mathbb{E}\|\nabla f(\mathbf{X}_t)\|_F
\le
\frac{G_T}{T^{1/3}},
\]
where $G_T = f(\mathbf{X}_1) - f^* + 2L^{-1}\sigma^2 + 2Ln + \left(16Ln + 4L^{-1}\sigma^2 + Ln/2\right)(1+\ln T) + \sqrt{4\sigma^2 + (32L^2n + 8\sigma^2)(1+\ln T)}$.

See Appendix \ref{proof:th-nonconvex-mvr2} for details.
\end{theorem}
\begin{remark}
Theorem \ref{th-nonconvex-mvr2} shows that, for an unconstrained Muon-style algorithm with momentum-based variance reduction, we can match the current state-of-the-art $\widetilde{\mathcal{O}}(T^{-1/3})$ ergodic convergence rate in a constant fixed mini-batch setting. This complements prior work \cite{sfyraki2025lions}, which achieves the same rate using a growing batch size $b = \mathcal{O}(T^{1/3})$ to control gradient variance.
This is achieved by a specific hyperparameter schedule in which the learning rate $\eta_t = t^{-2/3}$ and the momentum parameter $\beta_t = 1 - \eta_t$ are tightly coupled. This schedule balances optimization progress with control of the stochastic gradient variance.
\end{remark}

\begin{remark}
As noted in \cite{Yuan2024MARSUT}, a more sophisticated, adaptive setting for the variance-reduction parameter $\gamma_t$ can be employed. Specifically, by setting $\gamma_t = 1-\frac{A_t}{\beta_t}$, where $A_t$ is defined in Lemma \ref{lemma:b2}, a key term in the analysis becomes $P_t = - \mathbb{E}\|\nabla f(\mathbf{X}_{t};\xi_{t})-\nabla f(\mathbf{X}_{t-1};\xi_{t})\|_F^2 \cdot A_{t}^2$, which is non-positive. This leads to a tighter convergence bound, as it effectively introduces an additional beneficial term into the recurrence. However, computing this adaptive $\gamma_t$ is often impractical as it depends on quantities that are difficult to estimate during training. Consequently, we adhere to the common and more practical approach of using a constant $\gamma_t \le 1$.
\end{remark}

\begin{remark}
\label{rem:param_dependence}
For \textbf{MVR1}, the prefactor in the $\widetilde{\mathcal{O}}(T^{-1/4})$ bound depends linearly on the noise variance and the dimension, i.e., it is of order $L^{-1}\sigma^2 + Ln + L$, so there is no super-linear growth in $n$.
For \textbf{MVR2}, the leading stochastic term behaves like $\sigma + L\sqrt{n}$ (up to a $\sqrt{1+\ln T}$ factor), while the remaining constant term scales as $f(\mathbf{X}_1)-f^* + L^{-1}\sigma^2 + Ln$, which reveals a mixed $\sqrt{n}$- and $n$-dependence.
In contrast to standard parameter-agnostic complexity results for variance-reduced SGD and adaptive methods, which typically hide the dependence on $L$, $\sigma$, and $n$ inside the $\widetilde{\mathcal{O}}(\cdot)$ notation (see, e.g., \cite{fang2018spider,Cutkosky2019MomentumBasedVR,zhou2020stochastic,Huang2021SUPERADAMFA,Yuan2024MARSUT}), our analysis keeps this structure explicit and highlights how the Muon geometry interacts with variance reduction in the matrix-valued setting.
\end{remark}

% \subsection{Non-Ergodic Convergence of Muon}
\subsection{PL-Based Convergence of Muon}
\label{section-conv.2}
% In this subsection, we examine the performance of Muon-MVR1 and Muon-MVR2 under the Polyak--\L{}ojasiewicz (PL) condition. The results below are stated for the expected square-root suboptimality. They are obtained in an ergodic form and can be equivalently expressed either for a uniformly sampled iterate or for the best iterate among the first $T$ iterations.
This subsection studies Muon-type methods under the Polyak–Łojasiewicz condition. We present two types of guarantees. First, Theorems \ref{th-best-mvr1} and \ref{th-best-mvr2} give best-iterate bounds for the expected square-root suboptimality. Second, Theorems \ref{th-last-mvr1} and \ref{th-last-mvr2} establish non-ergodic last-iterate bounds for the expected objective gap.
% In this subsection, we examine the performance of Muon-MVR1 and Muon-MVR2 under the setting where the non-convex objective functions satisfy the Polyak-Łojasiewicz (PL) condition. 
Our analysis is based on the following additional assumption:
\begin{assumption}
\label{ass:6}
We assume the function $f$ is $\mu$-PL, i.e., $\|\nabla f(\mathbf{X})\|_F^2 \ge 2\mu(f(\mathbf{X}) - f^*)$.
\end{assumption}

\begin{remark}
The PL condition has been widely employed in the convergence analysis of various first-order algorithms \cite{karimi2016linear,xie2020linear,li2022simple}, though typically under restricted settings. 
% Note that if $f(\cdot)$ is strongly convex, then it is necessarily convex and satisfies the PL condition. However, the converse does not hold in general; a counterexample is given by $f(x) = x^2 + 3\sin^2(x)$.
Note that if $f(\cdot)$ is strongly convex, then it is necessarily convex and satisfies the PL condition. However, the PL condition alone does not imply convexity or strong convexity; a counterexample is given by $f(x)=x^2+3\sin^2(x)$.
\end{remark}
\begin{theorem}
\label{th-best-mvr1}
Suppose Assumptions \ref{ass:1}, \ref{ass:2}, \ref{ass:3}, and \ref{ass:6} hold. Let $\Delta_1 := \mathbb{E}[f(\mathbf{X}_1)] - f^*$, and let $\{\mathbf{X}_t\}_{t\ge 1}$ be generated by Algorithm \ref{alg:muon} with stepsize $\eta_t = t^{-3/4}$. Consider the following two \textbf{MVR1} parameterizations:
\begin{enumerate}[leftmargin=*, itemsep=0pt, topsep=0pt]
    \item For \textbf{MVR1} ($\gamma_t = 0$), set $\beta_t = 1 - t^{-1/2}$, and define $C_1 = 6L^{-1}\sigma^2 + (8\sqrt{2}+2)Ln + L$.
    \item For \textbf{MVR1} ($\gamma_t = t^{-1/2}$), set $\beta_t = 1 - (t+1)^{-1/2}$, and define $C_2 = 16L^{-1}\sigma^2 + (16\sqrt{2}+2)Ln + L$.
\end{enumerate}
Then, for either choice $i \in \{1,2\}$ and any integer $T \ge 2$,
\[
\min_{1 \le t \le T} \mathbb{E}\bigl[\sqrt{f(\mathbf{X}_t)-f^*}\bigr]
\le
\frac{\Delta_1 + C_i(1+\ln T)}{\sqrt{2\mu} T^{1/4}}.
\]
\end{theorem}

\begin{theorem}
\label{th-best-mvr2}
Suppose Assumptions \ref{ass:1}, \ref{ass:2.2}, \ref{ass:3}, and \ref{ass:6} hold. Let $\Delta_1 := \mathbb{E}[f(\mathbf{X}_1)] - f^*$, and let $\{\mathbf{X}_t\}_{t\ge 1}$ be generated by Algorithm \ref{alg:muon} under \textbf{MVR2}, corresponding to update rule (\ref{o3}). Set $\eta_t = t^{-2/3}$, $\beta_t = 1 - t^{-2/3}$, and $\gamma_t = 1$, and define $C_3 = 20L^{-1}\sigma^2 + 66Ln + L$. Then, for any integer $T \ge 2$,
\[
\min_{1 \le t \le T} \mathbb{E}\bigl[\sqrt{f(\mathbf{X}_t)-f^*}\bigr]
\le
\frac{\Delta_1 + C_3(1+\ln T)}{\sqrt{2\mu} T^{1/3}}.
\]
\end{theorem}

Detailed proofs for Theorems \ref{th-best-mvr1} and \ref{th-best-mvr2} are provided in Appendix \ref{proof:th-best-mvr1} and \ref{proof:th-best-mvr2}, respectively. 

% The preceding PL results are ergodic and therefore do not directly control the terminal iterate. We next establish last-iterate convergence rates. This requires the following additional bounded-gradient condition, which is used only to convert the PL inequality into a linear recursion for the expected objective gap.
To bridge the gap between ergodic results and last-iterate convergence, we leverage an additional bounded-gradient assumption. This enables the transformation of the PL inequality into a linear recursion governing the expected objective error.

\begin{assumption}
\label{ass:7}
There exists a constant $G>0$ such that $\|\nabla f(\mathbf{X}_t)\|_F \le G$ for all $t\ge 1$. 
\end{assumption}

\begin{theorem}
\label{th-last-mvr1}
Suppose Assumptions \ref{ass:1}, \ref{ass:2}, \ref{ass:3}, \ref{ass:6}, and \ref{ass:7} hold. Let $\Delta_t := \mathbb{E}[f(\mathbf{X}_t)] - f^*$ and $\kappa := 2\mu/G$. Let $\{\mathbf{X}_t\}_{t\ge 1}$ be generated by Algorithm \ref{alg:muon} with \textbf{MVR1} and stepsize $\eta_t = \eta t^{-3/4}$, where $0< \eta \le \min\{1,\frac{1}{4\kappa}\}$. Define $c := 4\kappa\eta$ and $C_{1/3} := \frac{1}{\kappa}\left(2^{4/3} + \frac{4}{e\kappa\eta}\right)$. Consider the following two \textbf{MVR1} parameterizations:
\begin{enumerate}[leftmargin=*, itemsep=0pt, topsep=0pt]
    \item For \textbf{MVR1} ($\gamma_t = 0$), set $\beta_t = 1 - t^{-1/2}$, and define $C_1^{(1)} = e^c\Delta_1 + 4\eta^{2/3}L^{-1}\sigma^2 e^{2^{1/4}c}$ and $C_1^{(2)} = 8\sqrt{2}Ln\eta^{5/3} + 4L^{-1}\sigma^2\eta^{-1/3} + (L/2 + Ln)\eta^{1/3}$.
    \item For \textbf{MVR1} ($\gamma_t = t^{-1/2}$), set $\beta_t = 1 - (t+1)^{-1/2}$, and define $C_2^{(1)} = e^c\Delta_1 + 8\eta^{2/3}L^{-1}\sigma^2 e^{2^{1/4}c}$ and $C_2^{(2)} = 16\sqrt{2}Ln\eta^{5/3} + 10L^{-1}\sigma^2\eta^{-1/3} + (L/2 + Ln)\eta^{1/3}$.
\end{enumerate}
Then, for either choice $i \in \{1,2\}$ and any integer $T \ge 2$,
\[
\mathbb{E}[f(\mathbf{X}_T)] - f^*
\le
C_i^{(1)} \exp(-cT^{1/4}) + C_i^{(2)} C_{1/3} T^{-1/4}.
\]
\end{theorem}

\begin{theorem}
\label{th-last-mvr2}
Suppose Assumptions \ref{ass:1}, \ref{ass:2.2}, \ref{ass:3}, \ref{ass:6}, and \ref{ass:7} hold. Let $\Delta_t := \mathbb{E}[f(\mathbf{X}_t)] - f^*$ and $\kappa := 2\mu/G$. Let $\{\mathbf{X}_t\}_{t\ge 1}$ be generated by Algorithm \ref{alg:muon} under \textbf{MVR2}, corresponding to update rule (\ref{o3}). Set $\eta_t = \eta t^{-2/3}$, $\beta_t = 1 - t^{-2/3}$, and $\gamma_t = 1$, where $0<\eta \le \min\{1,\frac{1}{8\kappa}\}$. Define $c := 3\kappa\eta$, $C_{1/2} := \frac{1}{\kappa}\left(2^{3/2} + \frac{4}{e\kappa\eta}\right)$, $C_3^{(1)} = e^c\Delta_1 + 8\eta^{1/2}L^{-1}\sigma^2 e^{2^{1/3}c}$, and $C_3^{(2)} = 64Ln\eta^{3/2} + 16L^{-1}\sigma^2\eta^{-1/2} + (L/2 + Ln)\eta^{1/2}$. Then, for any integer $T \ge 2$,
\[
\mathbb{E}[f(\mathbf{X}_T)] - f^*
\le
C_3^{(1)} \exp(-cT^{1/3}) + C_3^{(2)} C_{1/2} T^{-1/3}.
\]
\end{theorem}

\begin{remark}
While the PL condition is not intended as a universal global model for modern ML/DL objectives, it remains a standard geometric regularity assumption that captures a favorable sharpness regime beyond strong convexity, without requiring convexity \cite{karimi2016linear}. Our PL-based analysis therefore complements the general nonconvex theory by identifying a structured regime in which Muon admits explicit rates, helping fill a gap in the current theoretical understanding of momentum variance reduction for normalized methods. This perspective is consistent with the broader Kurdyka--\L{}ojasiewicz (KL) framework, which is local in nature and applies to broad classes of tame objectives, including real-analytic, semialgebraic, and subanalytic functions \cite{bolte2014proximal}. Moreover, many machine-learning objectives are built from real-analytic or semialgebraic components, and KL-type convergence analyses have been established for broad families of deep-learning training problems \cite{zeng2019global}.
\end{remark}

% \begin{remark}
% \label{rem:pl-root-gap}
% The bounds in Theorems \ref{th-best-mvr1} and \ref{th-best-mvr2} are stated for the expected square-root suboptimality. In general, such bounds do not imply an upper bound of the same form for $\mathbb{E}[f(\mathbf{X}_t)-f^*]$. Indeed, for any nonnegative random variable $Y$, Jensen's inequality gives $\mathbb{E}[\sqrt{Y}] \le \sqrt{\mathbb{E}[Y]}$, and there is no reverse inequality without additional assumptions.
% \end{remark}
\begin{remark}
% See Appendix~\ref{appendix:synthetic-rates} for two minimal synthetic sanity checks, included only to illustrate that the observed slopes are qualitatively consistent with the rates predicted by our theory.
For completeness, Appendix~\ref{appendix:synthetic-rates} provides two minimal synthetic sanity checks whose sole purpose is to illustrate qualitative agreement with the predicted rates.
\end{remark}

Detailed proofs for Theorems \ref{th-last-mvr1} and \ref{th-last-mvr2} are provided in Appendix \ref{thm:last_iter-mvr1} and \ref{thm:last_iter-mvr2}, respectively. 

\section{Experiments}
\label{section-exp}
% In this section, we evaluate the performance of the Muon-variant optimizers on pretraining tasks. All experiments were conducted using 8x Ascend 910C (64GB) NPUs and 4x NVIDIA RTX 4090 (24GB) GPUs.  The theoretical results in Section \ref{section-conv} are established for idealized \textbf{horizon-free} schedules. In contrast, the experiments below use standard practical training recipes, including cosine learning-rate decay, warmup, fixed momentum, weight decay, and tuned constant $\gamma_t \equiv \gamma$. Detailed experimental settings are provided in Appendix \ref{appendix:exp}.

% In this section, we evaluate the performance of the Muon-variant optimizers on pretraining tasks. All experiments were conducted using 8x Ascend 910C (64GB) NPUs and 4x NVIDIA RTX 4090 (24GB) GPUs. 
% The theoretical results in Section \ref{section-conv} are established for idealized \textbf{horizon-free} schedules. Appendix \ref{appendix:synthetic-rates} provides minimal synthetic sanity checks under theory-aligned settings, illustrating qualitative agreement with the predicted rates. In contrast, the experiments below use standard practical training recipes, including cosine learning-rate decay, warmup, fixed momentum, weight decay, and tuned constant $\gamma_t \equiv \gamma$. Detailed experimental settings are provided in Appendix \ref{appendix:exp}.
In this section, we evaluate the performance of the Muon-variant optimizers on pretraining tasks. All experiments were conducted using 8x Ascend 910C (64GB) NPUs and 4x NVIDIA RTX 4090 (24GB) GPUs. The theoretical results in Section \ref{section-conv} are established for idealized \textbf{horizon-free} schedules. For completeness, Appendix \ref{appendix:synthetic-rates} provides two minimal synthetic sanity checks under theory-aligned settings, illustrating qualitative agreement with the predicted rates. In contrast, the experiments below use standard practical training recipes, including cosine learning-rate decay, warmup, fixed momentum, weight decay, and tuned constant $\gamma_t \equiv \gamma$. Detailed experimental settings are provided in Appendix \ref{appendix:exp}.
% In this section, we evaluate the performance of the Muon-variant optimizers on pretraining tasks. All experiments were conducted using 8x Ascend 910C (64GB) NPUs and 4x NVIDIA RTX 4090 (24GB) GPUs. The theoretical results in Section \ref{section-conv} are established for idealized \textbf{horizon-free} schedules. Appendix \ref{appendix:synthetic-rates} provides two minimal synthetic sanity checks under theory-aligned settings, illustrating qualitative agreement with the predicted rates. In contrast, the experiments below use standard practical training recipes, including cosine learning-rate decay, warmup, fixed momentum, weight decay, and tuned constant $\gamma_t \equiv \gamma$. Detailed experimental settings are provided in Appendix \ref{appendix:exp}.

$\blacktriangleright$ \textbf{ResNet18 on CIFAR10 Dataset.} We train ResNet-18~\cite{he2016deep} on CIFAR-10 for 100 epochs (batch size 128), comparing Muon variants against SGD and Adam over five random seeds. For each optimizer, the learning rate is tuned via grid search over $\{1\mathrm{e}{-4}, 5\mathrm{e}{-4}, 1\mathrm{e}{-3}, 5\mathrm{e}{-3}, 1\mathrm{e}{-2}, 5\mathrm{e}{-2}, 1\mathrm{e}{-1}\}$. As shown in Figures~\ref{fig:cifar_acc} and \ref{fig:cifar_loss}, Muon variants demonstrate faster initial convergence and lower final test error than the baselines, with Muon-MVR2 achieving the best overall performance.

$\blacktriangleright$ \textbf{LLaMA2 on C4 Dataset.} 
We pre-train LLaMA2-130M~\cite{Touvron2023Llama2O} on C4 to benchmark Muon-MVR variants against AdamW and MARS-AdamW. We perform a grid search over learning rates $\{3\mathrm{e}{-4}, 5\mathrm{e}{-4}, 8\mathrm{e}{-4}, 1\mathrm{e}{-3}, 2\mathrm{e}{-3}, 4\mathrm{e}{-3}, 6\mathrm{e}{-3}, 8\mathrm{e}{-3}\}$ and, for MARS-AdamW, Muon-MVR1, and Muon-MVR2, over the gamma parameter $\gamma \in \{0.01, 0.025, 0.05\}$.  Models are trained for 22k steps ($\sim$12B tokens); refer to Appendix~\ref{appendix:exp} for full details. Figure~\ref{fig:llama_val_loss}, and~\ref{fig:llama_time} show that while Muon-MVR2 achieves the lowest per-step loss, it doubles the wall-clock time. Consequently, we prioritize the highly efficient Muon-MVR1 in subsequent experiments, as it attains comparable performance despite the theoretical complexity gap.  Figure~\ref{Fig:FG4-1} reports the final validation loss on C4 Dataset 12B across different learning rates for all optimizers. Each method exhibits a reasonably wide range of stable learning rates, with Muon-type optimizers achieving lower validation loss than AdamW at their respective best settings. For Muon-MVR2, we additionally sweep the algorithmic parameter $\gamma$ around the optimal learning rate and visualize the resulting validation perplexity as a heatmap in Figure~\ref{Fig:FG4-2}. The heatmap shows that Muon-MVR2 is relatively insensitive to the choice of $\gamma$ in a neighborhood of the best learning rate, suggesting that $\gamma$ does not require fine-grained tuning in practice.

\begin{figure}[!htbp]
    \centering
    \subfloat[CIFAR-10: train and test accuracy]{\label{fig:cifar_acc}%
    \begin{minipage}[t]{0.31\textwidth}
        \centering
        \includegraphics[width=\linewidth]{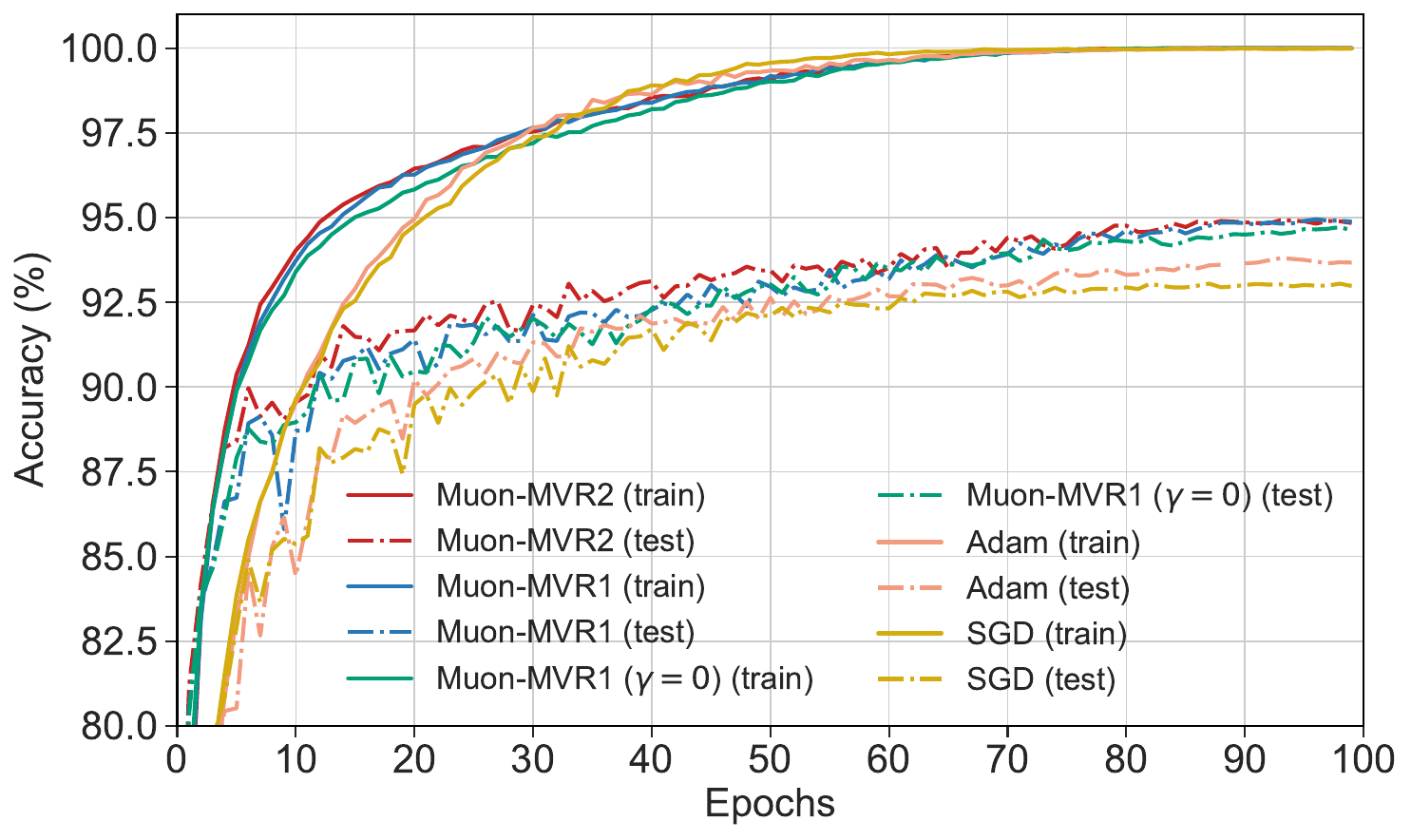}
    \end{minipage}
    }
    \subfloat[CIFAR-10: train and test loss]{\label{fig:cifar_loss}%
    \begin{minipage}[t]{0.31\textwidth}
        \centering
        \includegraphics[width=\linewidth]{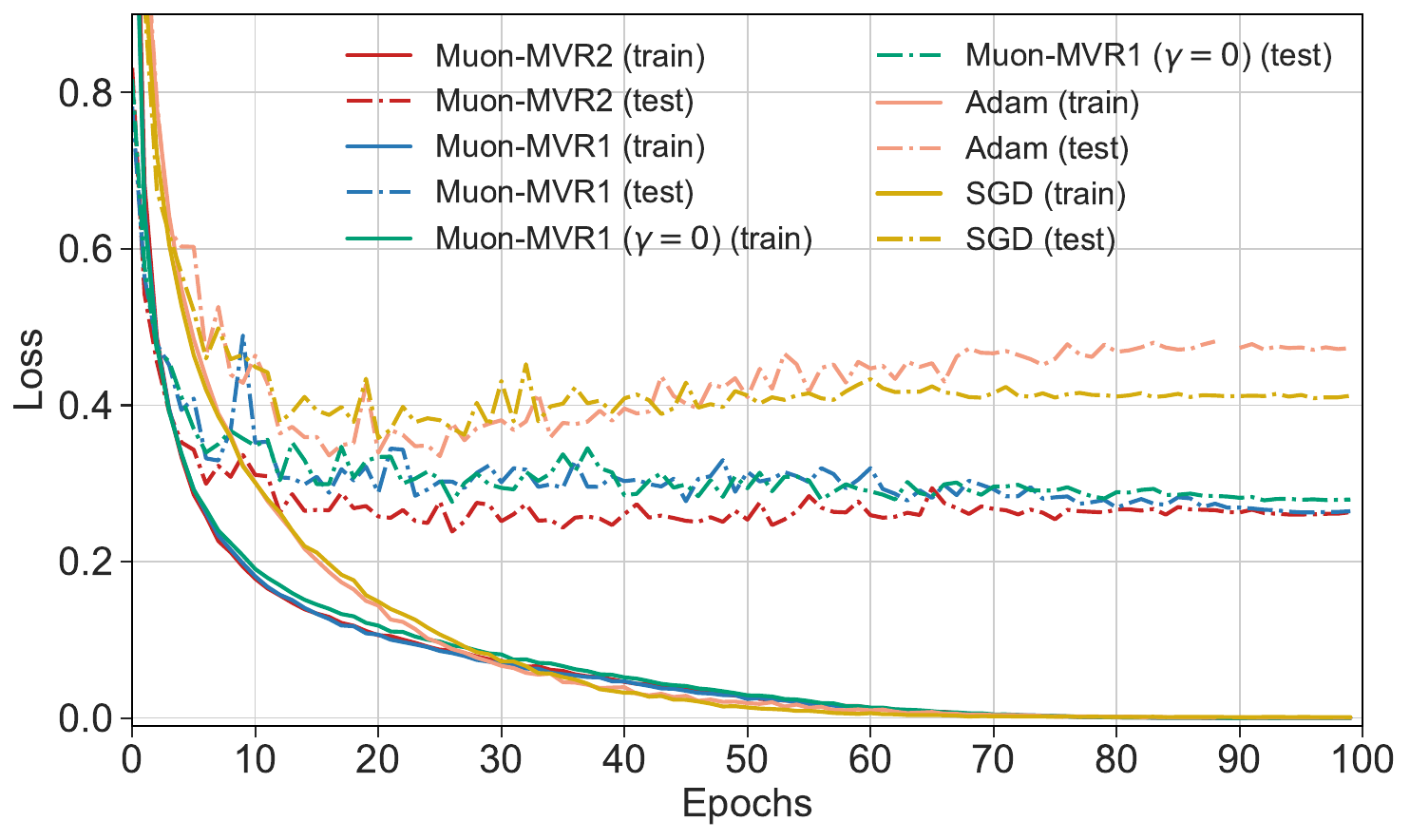}
    \end{minipage}
    }
    \subfloat[CIFAR-10: test accuracy vs. time]{\label{fig:cifar_time}%
    \begin{minipage}[t]{0.31\textwidth}
        \centering
        \includegraphics[width=\linewidth]{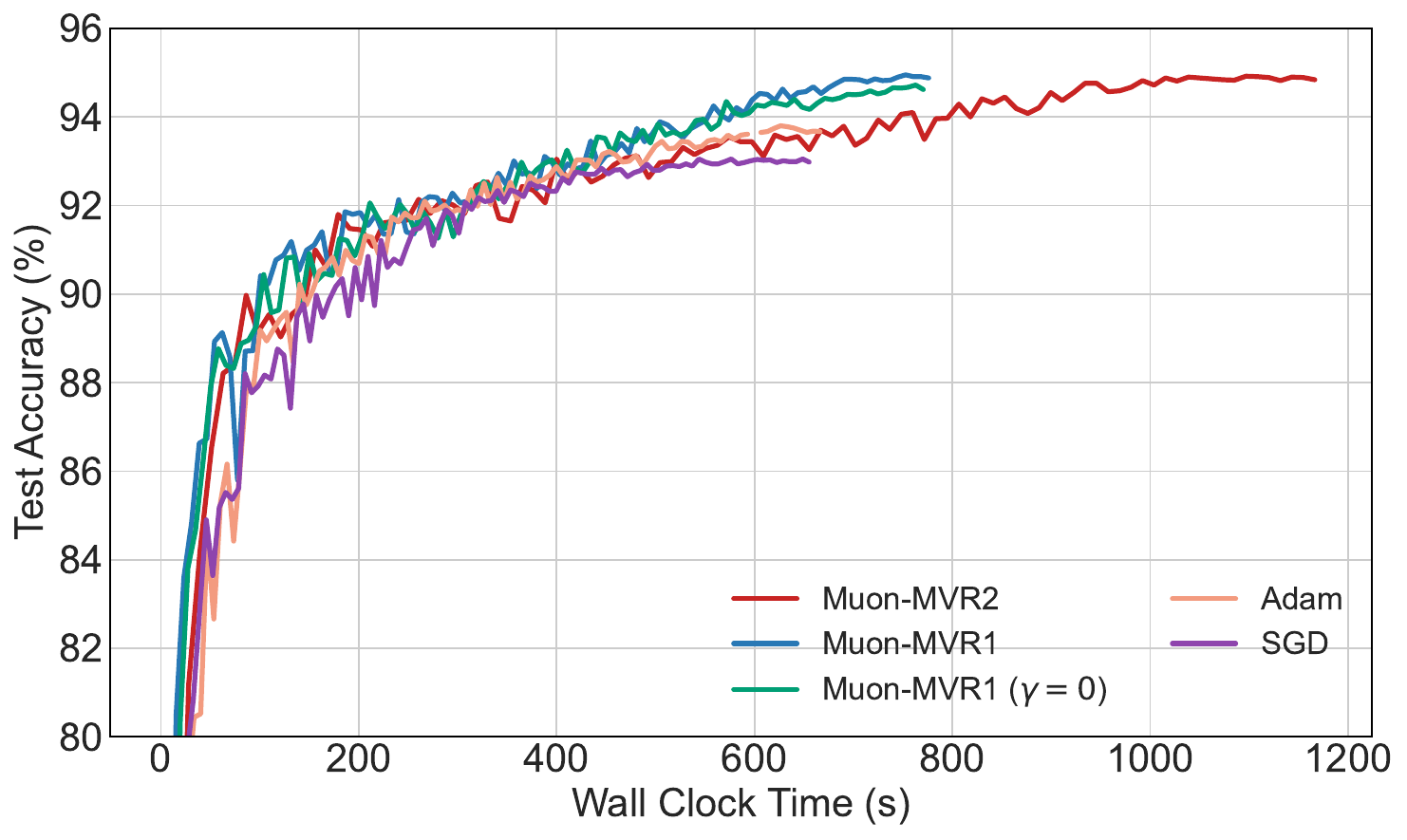}
    \end{minipage}
    }
    \vspace{-10pt}
    \subfloat[C4: training loss]{\label{fig:llama_train_loss}%
    \begin{minipage}[t]{0.31\textwidth}
        \centering
        \includegraphics[width=\linewidth]{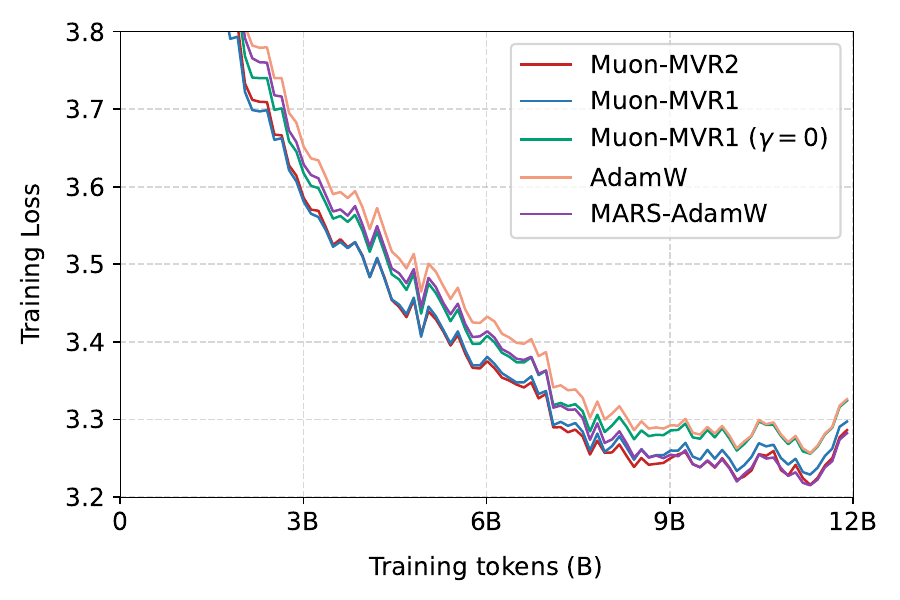}
    \end{minipage}
    }
    \subfloat[C4: validation loss]{\label{fig:llama_val_loss}%
    \begin{minipage}[t]{0.31\textwidth}
        \centering
        \includegraphics[width=\linewidth]{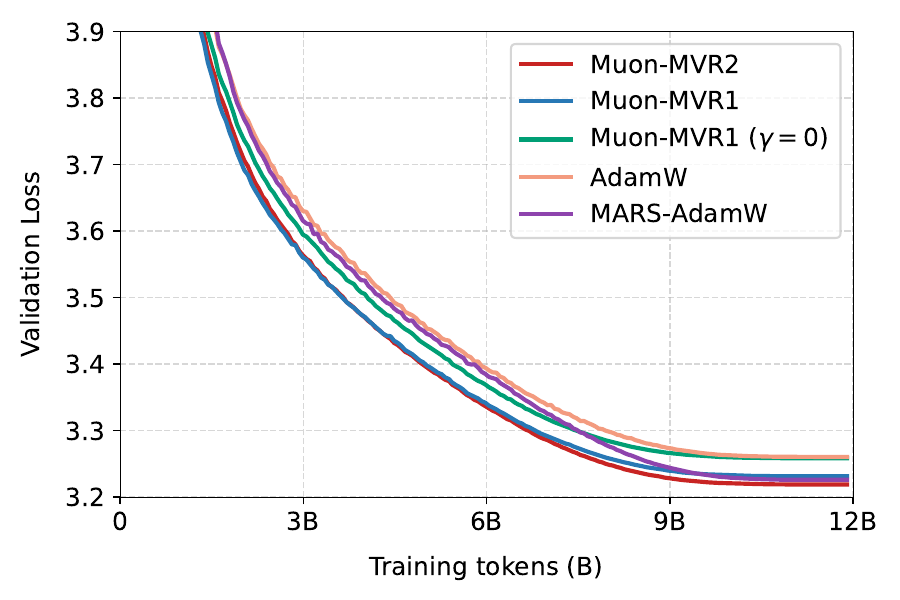}
    \end{minipage}
    }
    \subfloat[C4: validation loss vs. time]{\label{fig:llama_time}%
    \begin{minipage}[t]{0.31\textwidth}
        \centering
        \includegraphics[width=\linewidth]{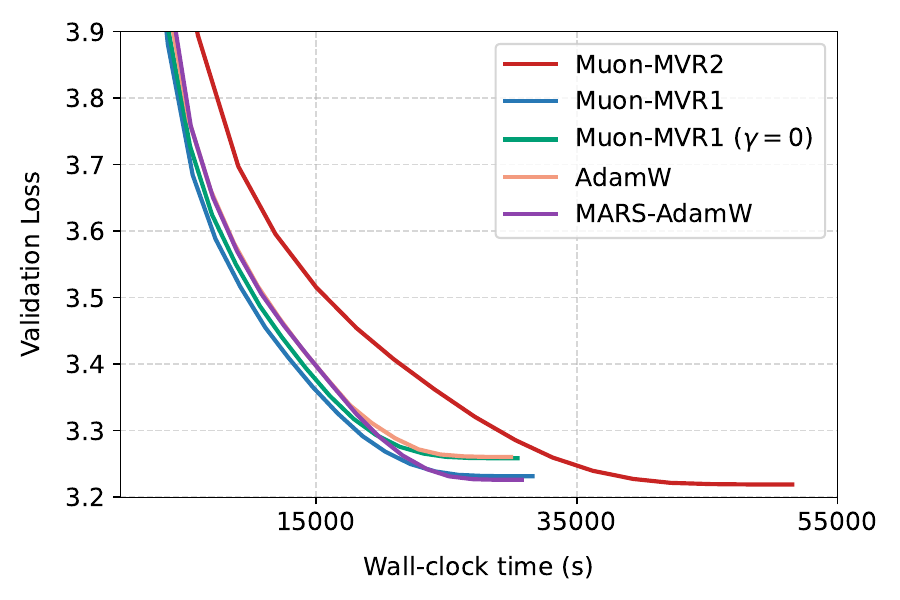}
    \end{minipage}
    }
    \caption{Training dynamics of Muon variants vs. baseline optimizers. Row 1: training/test accuracy and loss vs. epochs; test accuracy vs. wall-clock time for ResNet-18 on CIFAR-10. Row 2: Training and validation loss vs. steps, and validation loss vs. wall-clock time for LLaMA2-130M on C4.}
    \label{fig:training_dynamics}
\end{figure}

\begin{figure}[!htbp]
	\centering
	\subfloat[C4 Dataset 12B (LLaMA2-130M)]{\label{Fig:FG4-1}%
	\begin{minipage}[h]{0.4\textwidth}
	\centering
    \includegraphics[width=\textwidth]{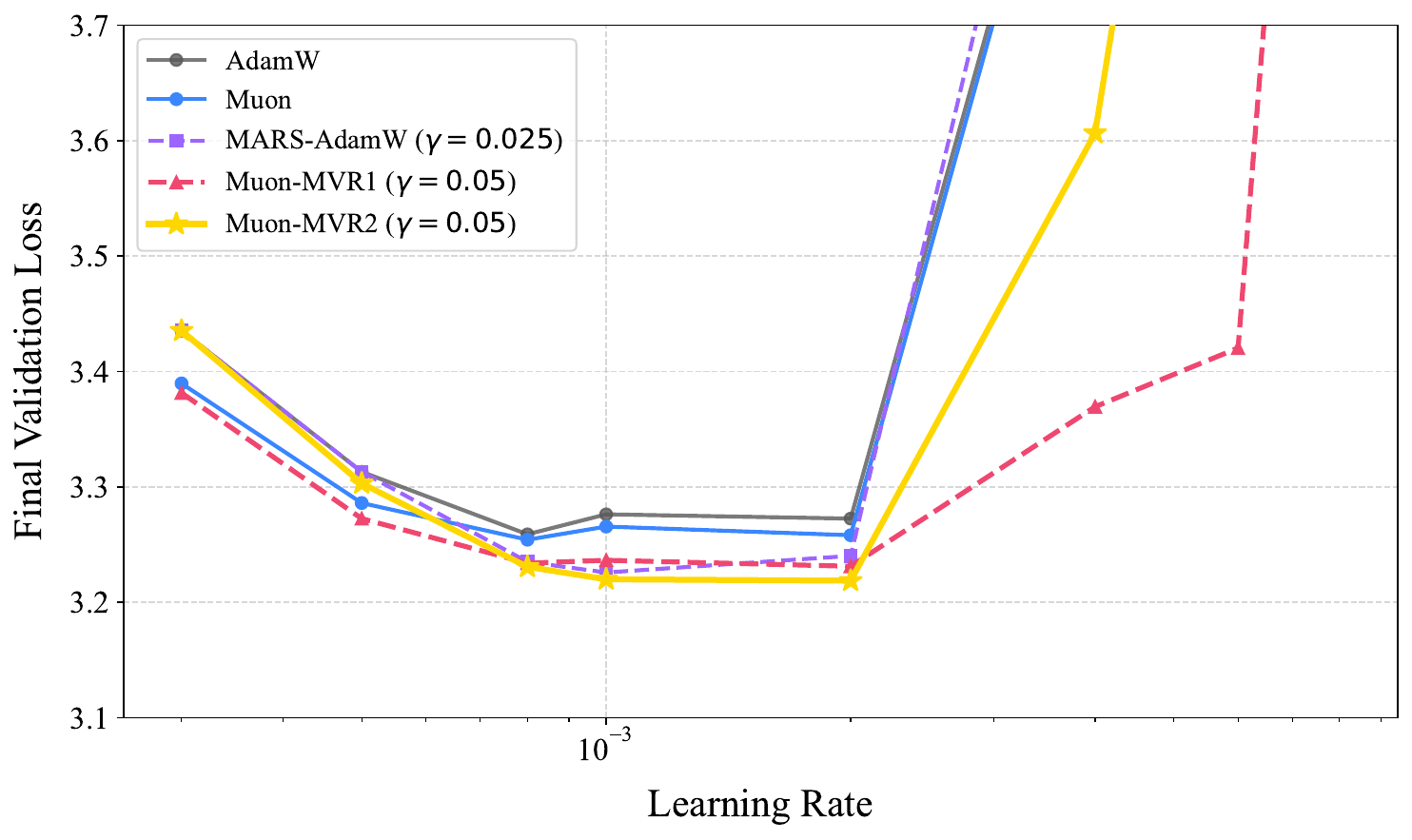}   %这里的图片宽度是相对于子页的
    \end{minipage}
    }
    \quad
    \subfloat[Muon-MVR2 Sensitivity]{\label{Fig:FG4-2}%
    \begin{minipage}[h]{0.32\textwidth}
	\centering
    \includegraphics[width=\textwidth]{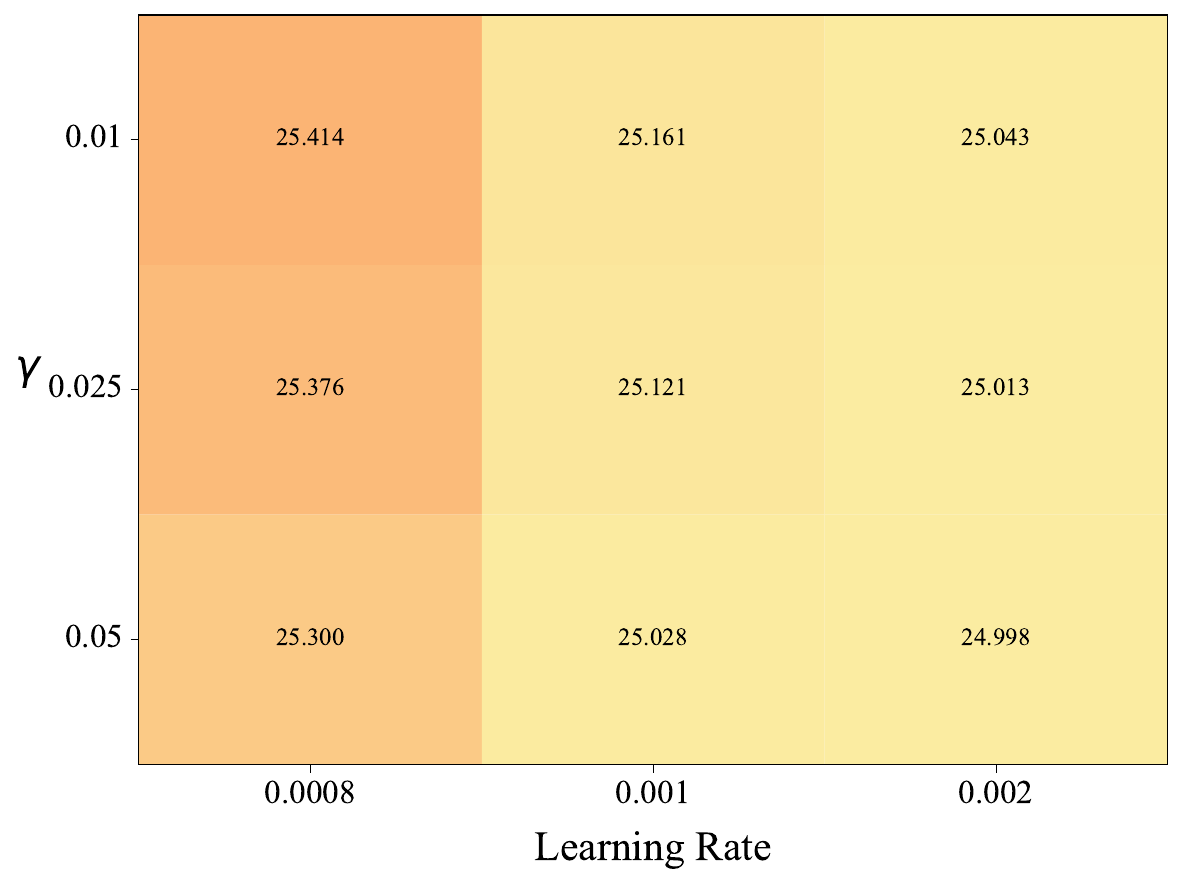}
    \end{minipage}
    }
    \caption{(a) Final validation loss with varying learning rates on C4 Dataset 12B ; (b) Heatmap of the final validation perplexity of the Muon-MVR2 model for different $\gamma$ values around the optimal learning rate.}
    \label{fig:hyper_c4}
\end{figure}

% \begin{remark}
% The precise variance-reduction formulation Muon-MVR2 (Eq.~\ref{o3}) is more closely aligned with stochastic optimization theory and, as confirmed by our experiments, consistently attains the lowest validation accuracy. However, it requires two gradient evaluations per step and can be prohibitive in large-scale settings (see Fig.~\ref{fig:cifar_time} and \ref{fig:llama_time}). In regimes where the cost difference between one and two mini-batch gradient evaluations is negligible, we therefore recommend Muon-MVR2; when computational efficiency is the primary concern, the approximate Muon-MVR1 variant offers a pragmatic alternative with only minimal performance degradation. Thus, the two variants serve complementary roles rather than one uniformly dominating the other. 
% \end{remark}

\begin{remark}
Our experiments follow standard deep-learning recipes, including cosine learning-rate decay, warmup, weight decay, fixed momentum, tuned constant $\gamma$, and finite Newton--Schulz approximations to $\operatorname{msign}(\cdot)$. These practical components are omitted from the formal analysis, which isolates the core exact-$\operatorname{msign}$ momentum variance-reduction mechanism.
\end{remark}

\begin{remark}
Muon-MVR2 (Eq.~\ref{o3}) is more closely aligned with stochastic variance-reduction theory and empirically achieves the strongest validation performance, namely higher accuracy on CIFAR-10 and lower loss on C4. However, its two gradient evaluations per step can be costly in large-scale settings (see Figs.~\ref{fig:cifar_time} and~\ref{fig:llama_time}). We therefore recommend Muon-MVR2 when this extra cost is affordable, while Muon-MVR1 provides a more efficient alternative with only minor performance degradation. Hence, the two variants are complementary rather than uniformly comparable.
\end{remark}

\section{Conclusion}
\label{section-conclusion}
In this work, we develop a rigorous theoretical foundation for the Muon optimizer, narrowing the gap between its empirical success and formal analysis. We study two momentum-based variance-reduced variants, Muon-MVR1 and Muon-MVR2. For stochastic nonconvex optimization, we provide the first anytime convergence proof showing that Muon-MVR2 achieves the optimal ergodic convergence rate of $\widetilde{\mathcal{O}}(T^{-1/3})$, matching the lower bound \cite{arjevani2023lower}. Under the PL condition, we establish anytime guarantees for both best and last iterates: the best-iterate expected square-root suboptimality rates are $\widetilde{\mathcal{O}}(T^{-1/4})$ for Muon-MVR1 and $\widetilde{\mathcal{O}}(T^{-1/3})$ for Muon-MVR2; with an additional uniform gradient-bound assumption, the corresponding last-iterate objective-gap rates are $\mathcal{O}(T^{-1/4})$ and $\mathcal{O}(T^{-1/3})$.
Our experimental results support the practical effectiveness of the proposed variance-reduced Muon variants on CIFAR-10 and C4.

% In the unusual situation where you want a paper to appear in the
% references without citing it in the main text, use \nocite
\bibliographystyle{unsrt}
\bibliography{manuscript}

%%%%%%%%%%%%%%%%%%%%%%%%%%%%%%%%%%%%%%%%%%%%%%%%%%%%%%%%%%%%%%%%%%%%%%%%%%%%%%%
%%%%%%%%%%%%%%%%%%%%%%%%%%%%%%%%%%%%%%%%%%%%%%%%%%%%%%%%%%%%%%%%%%%%%%%%%%%%%%%
% APPENDIX
%%%%%%%%%%%%%%%%%%%%%%%%%%%%%%%%%%%%%%%%%%%%%%%%%%%%%%%%%%%%%%%%%%%%%%%%%%%%%%%
%%%%%%%%%%%%%%%%%%%%%%%%%%%%%%%%%%%%%%%%%%%%%%%%%%%%%%%%%%%%%%%%%%%%%%%%%%%%%%%
\newpage
\appendix

\onecolumn
\section*{\LARGE Appendix}
\section{Clarification of Theoretical Novelty}
\noindent$\blacktriangleright$ \textbf{Relation to recent Muon-type convergence analyses.}
Our work differs from recent Muon-type convergence analyses in both setting and scope. 
Existing results mainly study standard Muon/MVR1-type updates, compact or constrained formulations, finite-step Newton--Schulz orthogonalization, or general spectral/non-Euclidean interpretations~\cite{chen2025muon,Sato2025ConvergenceBA,kim2026convergence,nagashima2026improved,sfyraki2025lions,kovalev2025understanding}. 
These results are complementary to ours, but they do not establish a \textbf{horizon-free} $\widetilde{\mathcal{O}}(T^{-1/3})$ stochastic nonconvex rate, equivalently a $\widetilde{\mathcal{O}}(\varepsilon^{-3})$ iteration complexity, for the unconstrained polar Muon update with the two-batch MVR2 estimator under constant mini-batches. 
They also do not provide the PL-based best-iterate guarantees and last-iterate objective-gap guarantees studied here.

\noindent$\blacktriangleright$ \textbf{Relation to STORM/MARS-style variance reduction.}
We do not claim that the two-batch recursive gradient estimator itself is new. 
The MVR2 estimator is closely related to STORM/MARS-style~\cite{Cutkosky2019MomentumBasedVR,Yuan2024MARSUT} variance reduction, using a same-sample gradient difference to reduce stochastic noise. 
The novelty lies in showing that this mechanism remains effective when coupled with the polar-normalized Muon update.

This coupling is not a direct Euclidean substitution. 
Classical STORM-type analyses study updates of the form
\[
\mathbf{x}_{t+1}=\mathbf{x}_t-\eta_t\mathbf{d}_t,
\]
where the recursive estimator itself is the descent direction. 
Muon instead updates
\[
\mathbf{X}_{t+1}=\mathbf{X}_t-\eta_t\mathbf{O}_t,\qquad
\mathbf{O}_t=\operatorname{msign}(\mathbf{M}_t).
\]
Thus the proof must control the polar-normalized direction rather than $\mathbf{M}_t$ directly. 
The key identity
\[
\langle \mathbf{M}_t,\mathbf{O}_t\rangle_F=\|\mathbf{M}_t\|_*
\]
replaces the Euclidean quadratic term based on $\|\mathbf{M}_t\|_F^2$. 
Consequently, the analysis must jointly control the tracking error between $\mathbf{M}_t$ and $\nabla f(\mathbf{X}_t)$ and the effect of polar normalization in the descent inequality. 
This nuclear-norm/polar geometry is specific to Muon-style methods and is not covered by standard Euclidean STORM~\cite{Cutkosky2019MomentumBasedVR} or MARS~\cite{Yuan2024MARSUT} analyses.

The resulting stepsize balance is also Muon-specific. 
Writing $a_t:=1-\beta_t$, our MVR2 schedule uses $\eta_t=t^{-2/3}, a_t=t^{-2/3}$, whereas classical STORM analyses typically use an optimization stepsize of order $t^{-1/3}$ and an averaging coefficient of order $t^{-2/3}$. 
This difference reflects the normalized spectral geometry induced by polar orthogonalization. 
Under this balance, MVR2 improves the stochastic nonconvex rate from the Muon/MVR1 baseline $\widetilde{\mathcal{O}}(T^{-1/4})$ to $\widetilde{\mathcal{O}}(T^{-1/3})$.

\noindent$\blacktriangleright$ \textbf{Relation to normalized SGDM-style analyses.}
Muon is also related to normalized gradient and SGDM-style methods~\cite{cutkosky2020momentum,cutkosky2021high,Chen2023SymbolicDO}, but our setting differs in three ways. 
First, the geometry is matrix-valued and polar-normalized: the descent term is governed by the nuclear-norm identity above, rather than by vector normalization or coordinate-wise sign normalization. 
Second, the stepsize schedules are tied to this polar geometry: MVR1 uses $\eta_t=\Theta(t^{-3/4})$, while MVR2 uses $\eta_t=\Theta(t^{-2/3})$ together with $1-\beta_t=\Theta(t^{-2/3})$, yielding \textbf{horizon-free} anytime guarantees. 
Third, our PL analysis targets Muon-style polar updates, proving best-iterate bounds for the expected square-root suboptimality and last-iterate objective-gap bounds under an additional uniform-gradient assumption. 
These arguments differ from standard Euclidean normalized-SGD analyses.

\noindent$\blacktriangleright$ \textbf{Scope of the contribution.}
In summary, our contribution is not a new variance-reduction estimator. 
Rather, we prove that a known two-batch variance-reduction mechanism can be made compatible with the polar geometry of Muon and can achieve the first-order stochastic nonconvex rate $\widetilde{\mathcal{O}}(T^{-1/3})$ under a \textbf{horizon-free}, constant-mini-batch schedule. 
This addresses a gap left by existing Muon analyses, which focus on standard Muon/MVR1-type updates, compact or constrained formulations, larger-batch mechanisms, or SGD-type $\widetilde{\mathcal{O}}(T^{-1/4})$ stochastic behavior.

\section{Synthetic sanity checks for the predicted rates}
\label{appendix:synthetic-rates}
\begin{figure}[!htbp]
	\centering
	\subfloat[General nonconvex]{\label{Fig:FG5-1}%
	\begin{minipage}[h]{0.3\textwidth}
	\centering
    \includegraphics[width=\textwidth]{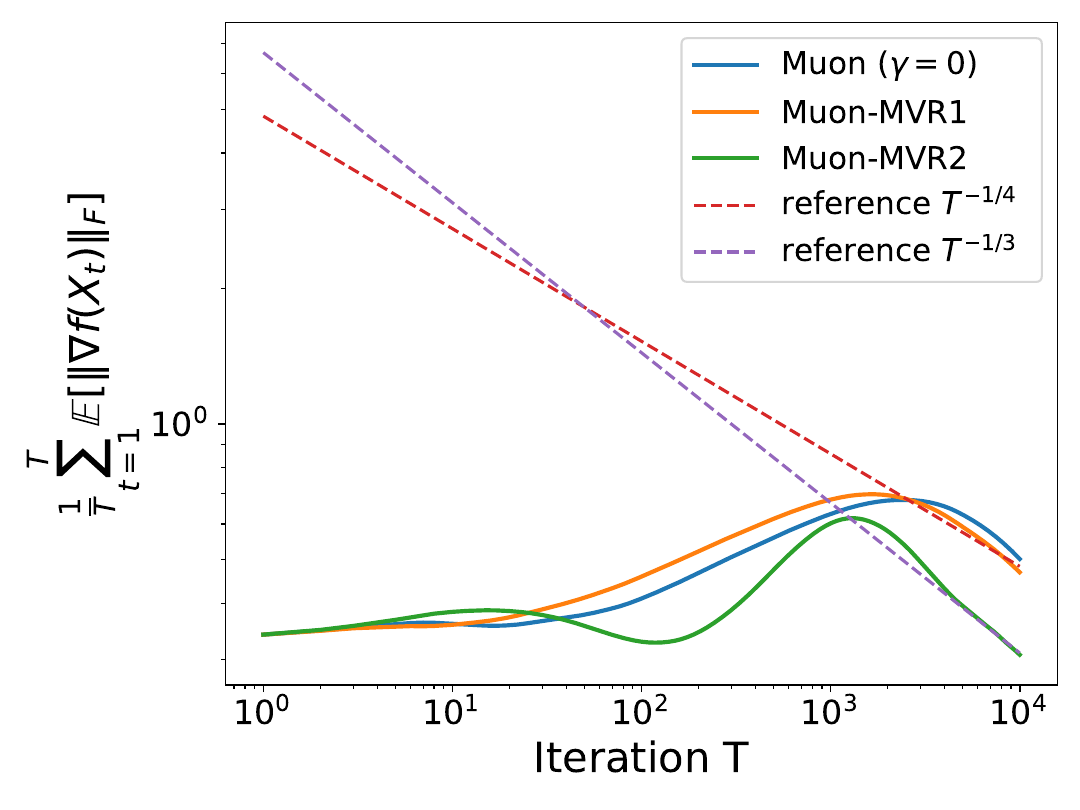}   %这里的图片宽度是相对于子页的
    \end{minipage}
    }
    \subfloat[PL-condition]{\label{Fig:FG5-2}%
    \begin{minipage}[h]{0.3\textwidth}
	\centering
    \includegraphics[width=\textwidth]{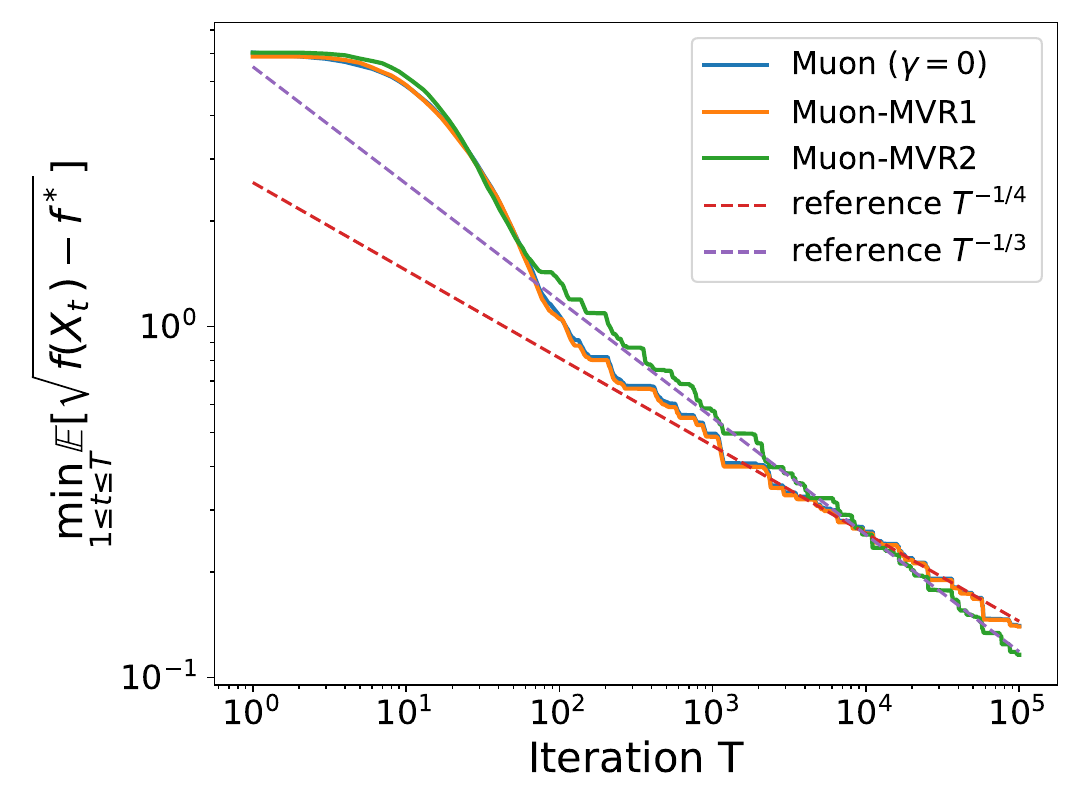}
    \end{minipage}
    }
    \subfloat[PL-condition]{\label{Fig:FG5-3}%
    \begin{minipage}[h]{0.3\textwidth}
	\centering
    \includegraphics[width=\textwidth]{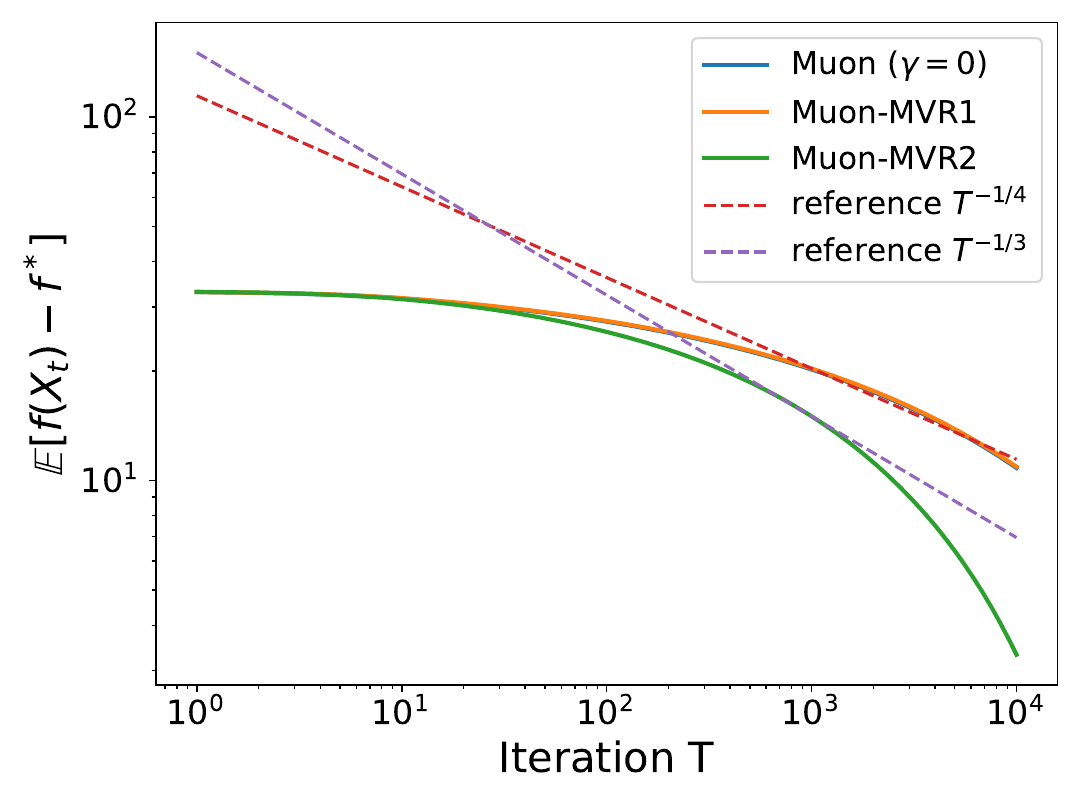}
    \end{minipage}
    }
    \caption{Synthetic sanity checks for the theoretical rates. 
    (a) Nonconvex single-neuron quadratic model 
    $\hat y(\mathbf{x};\mathbf{X})=(\mathbf{x}^\top \mathbf{X})^2$ 
    with a single $d\times 1$ parameter, using the exact $\operatorname{msign}$ update computed by SVD and the diminishing step sizes used in the theory. The y-axis reports the prefix-averaged full empirical gradient norm. 
    (b,c) PL teacher--student linear regression with a single matrix parameter under the same orthogonalization and step-size choices. Panel (b) reports the running minimum of the seed-averaged square-root suboptimality, while panel (c) reports the seed-averaged excess risk $\mathbb{E}[f(\mathbf{X}_t)-f^*]$. 
    In all panels, dashed lines show normalized reference slopes $T^{-1/4}$ and $T^{-1/3}$, corresponding to the Muon-MVR1 and Muon-MVR2 rate benchmarks. The observed empirical decays are qualitatively consistent with the corresponding nonconvex and PL theory.}
    \label{fig:toy_check}
\end{figure}
To complement the large-scale experiments, we include two minimal synthetic problems whose evaluation metrics are directly computable rather than estimated through a noisy proxy. Their purpose is not to provide an additional benchmark comparison, but to check whether the observed power-law behavior is consistent with the theory in Section \ref{section-conv}. In both cases, we optimize a single matrix parameter, use the exact $\operatorname{msign}$ update in Algorithm \ref{alg:muon}, and follow the same diminishing step-size schedules as in the theoretical analysis. The expectation is approximated by averaging over multiple random seeds.
\paragraph{General nonconvex setting.}
Figure~\ref{Fig:FG5-1} considers a minimal quadratic-network objective with a single $d \times 1$ parameter matrix $\mathbf{X}$, defined by $\hat y(\mathbf{x};\mathbf{X})=(\mathbf{x}^\top \mathbf{X})^2$. This objective is genuinely nonconvex, while its full empirical gradient can still be computed exactly at every iterate. We therefore report the ergodic quantity $A_T := \frac{1}{T}\sum_{t=1}^T \mathbb{E}\|\nabla f(\mathbf{X}_t)\|_F$ on log--log scales. The measured slopes are again close to $T^{-1/4}$ for Muon/MVR1 and $T^{-1/3}$ for MVR2, matching the qualitative prediction of the nonconvex theory. Since the constants depend on the problem instance, the relevant comparison here is the slope rather than the final ordinate.

\paragraph{PL setting.}
Figures~\ref{Fig:FG5-2} and~\ref{Fig:FG5-3} consider a teacher--student linear regression problem with a single matrix parameter $\mathbf{X} \in \mathbb{R}^{d \times d}$. 
This objective satisfies the PL condition, and the optimal value $f^*$ is available in closed form. 
We report two complementary diagnostics on log--log scales. 
Figure~\ref{Fig:FG5-2} shows the running minimum of the seed-averaged square-root suboptimality,
\[
\min_{1 \le s \le t} \mathbb{E}\left[\sqrt{f(\mathbf{X}_s)-f^*}\right],
\]
which directly matches the best-iterate PL guarantee. 
Figure~\ref{Fig:FG5-3} reports the seed-averaged excess risk,
\[
\mathbb{E}\left[f(\mathbf{X}_t)-f^*\right],
\]
as an additional last-iterate diagnostic. 
The square-root suboptimality curve broadly follows the reference slopes $T^{-1/4}$ for Muon-MVR1 and $T^{-1/3}$ for Muon-MVR2, in line with the best-iterate PL guarantee. 
The excess-risk curve shows a qualitatively similar decay trend, although it is a complementary diagnostic rather than the exact quantity controlled by the theorem.

\section{Lemmas for Theorem \ref{th-nonconvex-mvr1}}
\subsection{Lemma \ref{lemma:a1}}
\begin{lemma}
\label{lemma:a1}
For Algorithm \ref{alg:muon}, choosing an arbitrary parameter $\alpha > 0$, we have the following inequality:
\[
f(\mathbf{X}_{t+1}) \le f(\mathbf{X}_{t}) - \eta_t\|\mathbf{M}_t\|_F + \frac{\eta_t\alpha}{2}\|\nabla f(\mathbf{X}_{t}) - \mathbf{M}_t\|_F^2 + \frac{\eta_t n}{2\alpha} + \frac{L\eta_t^2n}{2}.
\]
\end{lemma}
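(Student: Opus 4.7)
The plan is to apply the $L$-smoothness of $f$ (Assumption \ref{ass:2}) with the update rule $\mathbf{X}_{t+1} = \mathbf{X}_t - \eta_t \mathbf{O}_t$ and then control the resulting first-order inner product using the spectral structure of $\mathbf{O}_t$. From smoothness we immediately get
\begin{equation*}
f(\mathbf{X}_{t+1}) \le f(\mathbf{X}_t) - \eta_t \langle \nabla f(\mathbf{X}_t), \mathbf{O}_t \rangle_F + \frac{L\eta_t^2}{2}\|\mathbf{O}_t\|_F^2,
\end{equation*}
and since $\mathbf{O}_t^\top \mathbf{O}_t = \mathbf{I}_n$, the last Frobenius norm is exactly $n$, yielding the $\tfrac{L\eta_t^2 n}{2}$ term in the claim (with the typo $\eta^2 \to \eta_t^2$).

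Next I would split the cross term by adding and subtracting $\mathbf{M}_t$:
\begin{equation*}
-\langle \nabla f(\mathbf{X}_t), \mathbf{O}_t \rangle_F = -\langle \mathbf{M}_t, \mathbf{O}_t \rangle_F + \langle \mathbf{M}_t - \nabla f(\mathbf{X}_t), \mathbf{O}_t \rangle_F.
\end{equation*}
Here I would use the compact SVD $\mathbf{M}_t = \mathbf{U}_r \mathbf{\Sigma}_r \mathbf{V}_r^\top$ recalled right after Algorithm \ref{alg:muon}, which gives $\mathbf{O}_t = \mathbf{U}_r \mathbf{V}_r^\top$ and therefore $\langle \mathbf{M}_t, \mathbf{O}_t \rangle_F = \operatorname{tr}(\mathbf{\Sigma}_r) = \|\mathbf{M}_t\|_*$. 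Combined with the elementary inequality $\|\mathbf{M}_t\|_* = \sum_i \sigma_i(\mathbf{M}_t) \ge \sqrt{\sum_i \sigma_i(\mathbf{M}_t)^2} = \|\mathbf{M}_t\|_F$, this produces the $-\eta_t \|\mathbf{M}_t\|_F$ term.

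The remaining inner product is handled by Young's inequality with parameter $\alpha > 0$:
\begin{equation*}
\langle \mathbf{M}_t - \nabla f(\mathbf{X}_t), \mathbf{O}_t \rangle_F \le \frac{\alpha}{2}\|\nabla f(\mathbf{X}_t) - \mathbf{M}_t\|_F^2 + \frac{1}{2\alpha}\|\mathbf{O}_t\|_F^2 = \frac{\alpha}{2}\|\nabla f(\mathbf{X}_t) - \mathbf{M}_t\|_F^2 + \frac{n}{2\alpha},
\end{equation*}
where the last equality reuses $\|\mathbf{O}_t\|_F^2 = n$. Multiplying by $\eta_t$ and summing all the bounds gives precisely the four-term upper bound in the lemma.

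The only nontrivial step is the algebraic identity $\langle \mathbf{M}_t, \mathbf{O}_t \rangle_F = \|\mathbf{M}_t\|_*$ together with the passage from nuclear to Frobenius norm; everything else is a routine combination of smoothness, the orthogonality $\mathbf{O}_t^\top \mathbf{O}_t = \mathbf{I}_n$, and Young's inequality. The main conceptual obstacle to watch for is that the preceding excerpt emphasizes that $\mathbf{O}_t$ solves the Procrustes problem only when $\mathbf{M}_t$ is full-rank, so I would explicitly invoke the compact SVD description rather than the Procrustes characterization to keep the identity $\langle \mathbf{M}_t, \mathbf{O}_t \rangle_F = \|\mathbf{M}_t\|_*$ valid in the rank-deficient case as well.
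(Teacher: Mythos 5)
Your proof is correct and follows essentially the same route as the paper's: descent lemma from $L$-smoothness, splitting the inner product via $\mathbf{M}_t$, the identity $\langle \mathbf{M}_t, \mathbf{O}_t\rangle_F = \|\mathbf{M}_t\|_* \ge \|\mathbf{M}_t\|_F$, and Young's inequality with parameter $\alpha$. The only minor caveat is that in the rank-deficient case ($\mathbf{O}_t = \mathbf{U}_r\mathbf{V}_r^\top$ with $r<n$) one has $\|\mathbf{O}_t\|_F^2 = r \le n$ rather than $=n$, which is the inequality the paper uses and is in the favorable direction, so your bound still holds.
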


\begin{proof}
According to Assumption \ref{ass:2}, we have the upper bound for the function value:
\[
\begin{aligned}
f(\mathbf{X}_{t+1}) &\le f(\mathbf{X}_{t}) + \langle \nabla f(\mathbf{X}_{t}), \mathbf{X}_{t+1}-\mathbf{X}_{t} \rangle + \frac{L}{2}\|\mathbf{X}_{t+1}-\mathbf{X}_{t}\|_F^2\\
& \le f(\mathbf{X}_{t}) - \eta_t \langle \nabla f(\mathbf{X}_{t}), \mathbf{O}_t \rangle + \frac{L\eta_t^2}{2}\|\mathbf{O}_t\|_F^2\\
&\le f(\mathbf{X}_{t}) - \eta_t \langle \mathbf{M}_t, \mathbf{O}_t \rangle - \eta_t \langle \nabla f(\mathbf{X}_{t}) - \mathbf{M}_t, \mathbf{O}_t \rangle + \frac{L\eta_t^2}{2}\|\mathbf{O}_t\|_F^2.
\end{aligned}
\]
Now we bound the three terms on the right-hand side respectively:

% Descent Term: According to the definition of $\mathbf{O}_t$ and the norm property $\|\mathbf{M}_t\|_* \ge \|\mathbf{M}_t\|_F$, we have:
% \[
% -\eta_t \langle \mathbf{M}_t, \mathbf{O}_t \rangle = -\eta_t \langle \mathbf{M}_t, \mathbf{U}_r \mathbf{V}_r^\top \rangle = -\eta_t \|\mathbf{M}_t\|_* \le -\eta_t \|\mathbf{M}_t\|_F.
% \]
Descent Term: 
% Since $\mathbf{O}_t \in \arg\min_{\mathbf{O}^\top \mathbf{O}=\mathbf{I}_n}\|\mathbf{O}-\mathbf{M}_t\|_F$ and
% $\|\mathbf{O}\|_F^2=n$ on this set, we equivalently have
% \[
% \mathbf{O}_t \in \arg\max_{\mathbf{O}^\top \mathbf{O}=\mathbf{I}_n}\langle \mathbf{M}_t,\mathbf{O}\rangle_F .
% \]
% Therefore,
% \[
% \langle \mathbf{M}_t,\mathbf{O}_t\rangle_F
% =
% \max_{\mathbf{O}^\top \mathbf{O}=\mathbf{I}_n}\langle \mathbf{M}_t,\mathbf{O}\rangle_F
% =
% \|\mathbf{M}_t\|_*,
% \]
% and hence
% \[
% -\eta_t\langle \mathbf{M}_t,\mathbf{O}_t\rangle_F
% =
% -\eta_t\|\mathbf{M}_t\|_*
% \le
% -\eta_t\|\mathbf{M}_t\|_F .
% \]
By the definition \(\mathbf{O}_t=\operatorname{msign}(\mathbf{M}_t)\), we have
\[
\langle \mathbf{M}_t,\mathbf{O}_t\rangle_F=\|\mathbf{M}_t\|_*,
\qquad
\|\mathbf{O}_t\|_F^2=\operatorname{rank}(\mathbf{M}_t)\le n .
\]
Cross Term: This is the key to eliminating the dimension-dependent error. We use Young's inequality with a parameter ($ab \le \frac{\alpha}{2}a^2 + \frac{1}{2\alpha}b^2$) and the fact that $\|\mathbf{O}_t\|_F^2\le\sum_{i=1}^n 1 = n$:
\[
\begin{aligned}
-\eta_t \langle \nabla f(\mathbf{X}_{t}) - \mathbf{M}_t, \mathbf{O}_t \rangle &\le\eta_t \|\nabla f(\mathbf{X}_{t}) - \mathbf{M}_t\|_F \|\mathbf{O}_t\|_F \\
&\le \eta_t \left( \frac{\alpha}{2}\|\nabla f(\mathbf{X}_{t})-\mathbf{M}_t\|_F^2 + \frac{1}{2\alpha}\|\mathbf{O}_t\|_F^2 \right)\\
&\le \eta_t \left( \frac{\alpha}{2}\|\nabla f(\mathbf{X}_{t})-\mathbf{M}_t\|_F^2 + \frac{n}{2\alpha} \right).
\end{aligned}
\]
Quadratic Term:
\[
\frac{L\eta_t^2}{2}\|\mathbf{O}_t\|_F^2 \le \frac{L\eta_t^2n}{2}.
\]
Substituting these three bounds into the inequality for $f(\mathbf{X}_{t+1})$:
\[
f(\mathbf{X}_{t+1}) \le f(\mathbf{X}_{t}) - \eta_t\|\mathbf{M}_t\|_F + \frac{\eta_t\alpha}{2}\|\nabla f(\mathbf{X}_{t}) - \mathbf{M}_t\|_F^2 + \frac{\eta_t n}{2\alpha} + \frac{L\eta_t^2n}{2}.
\]
This completes the proof.
\end{proof}

\subsection{Lemma \ref{lemma:a2}}
\begin{lemma}
\label{lemma:a2}

For Algorithm \ref{alg:muon} option MVR1 ($\gamma=0$), the accumulated error between the momentum term and the true gradient is bounded:
\[
\begin{aligned}
\mathbb{E}\left[\|\mathbf{M}_{t+1}-\nabla f(\mathbf{X}_{t+1})\|_F^2\right] \leq \beta_{t+1}\mathbb{E}\left[\|\mathbf{M}_{t}-\nabla f(\mathbf{X}_{t})\|_F^2\right]+\frac{\beta_{t+1}^2}{1-\beta_{t+1}}L^2\eta_t^2 n+(1-\beta_{t+1})^2\sigma^2.
\end{aligned}
\]
\end{lemma}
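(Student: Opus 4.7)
The plan is to start from the MVR1 recursion $\mathbf{M}_{t+1} = \beta_{t+1}\mathbf{M}_t + (1-\beta_{t+1})\nabla f(\mathbf{X}_{t+1};\xi_{t+1})$ and expose three distinct sources of error: the old momentum error at step $t$, the gradient drift caused by the update $\mathbf{X}_t \mapsto \mathbf{X}_{t+1}$, and the zero-mean stochastic noise at step $t+1$. Concretely, I would add and subtract $\beta_{t+1}\nabla f(\mathbf{X}_t)$ and $\beta_{t+1}\nabla f(\mathbf{X}_{t+1})$ inside $\mathbf{M}_{t+1}-\nabla f(\mathbf{X}_{t+1})$, rewriting it as
\begin{equation*}
\beta_{t+1}\bigl(\mathbf{M}_t-\nabla f(\mathbf{X}_t)\bigr) + \beta_{t+1}\bigl(\nabla f(\mathbf{X}_t)-\nabla f(\mathbf{X}_{t+1})\bigr) + (1-\beta_{t+1})\bigl(\nabla f(\mathbf{X}_{t+1};\xi_{t+1})-\nabla f(\mathbf{X}_{t+1})\bigr).
\end{equation*}

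Next, I would take the conditional expectation with respect to $\xi_{t+1}$ given the history. Since $\xi_{t+1}$ is independent of $\mathbf{X}_{t+1}$ and $\mathbf{M}_t$, Assumption~\ref{ass:3} makes the third term mean zero, so all cross products with it vanish, leaving a clean Pythagorean split:
\begin{equation*}
\mathbb{E}\|\mathbf{M}_{t+1}-\nabla f(\mathbf{X}_{t+1})\|_F^2 = \mathbb{E}\bigl\|\beta_{t+1}(\mathbf{M}_t-\nabla f(\mathbf{X}_t))+\beta_{t+1}(\nabla f(\mathbf{X}_t)-\nabla f(\mathbf{X}_{t+1}))\bigr\|_F^2 + (1-\beta_{t+1})^2\sigma^2.
\end{equation*}

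The main quantitative step is to peel the drift from the accumulated error with Young's inequality $\|a+b\|_F^2 \le (1+\alpha)\|a\|_F^2+(1+1/\alpha)\|b\|_F^2$. To get the target coefficient $\beta_{t+1}$ on the old error, I would pick $\alpha=(1-\beta_{t+1})/\beta_{t+1}$, which yields $\beta_{t+1}^2(1+\alpha)=\beta_{t+1}$ and $\beta_{t+1}^2(1+1/\alpha)=\beta_{t+1}^2/(1-\beta_{t+1})$. Choosing this particular $\alpha$, which exactly balances the coefficients to enable later telescoping, is the one non-routine choice in the argument and is really the heart of the lemma.

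Finally I would bound the drift term using Assumption~\ref{ass:2} and the update $\mathbf{X}_{t+1}=\mathbf{X}_t-\eta_t\mathbf{O}_t$ together with $\|\mathbf{O}_t\|_F^2\le n$ (since $\mathbf{O}_t^\top\mathbf{O}_t=\mathbf{I}_n$ and hence $\|\mathbf{O}_t\|_F^2=\operatorname{tr}(\mathbf{I}_n)=n$), giving $\|\nabla f(\mathbf{X}_t)-\nabla f(\mathbf{X}_{t+1})\|_F^2\le L^2\eta_t^2 n$. Substituting this bound together with the variance bound on the noise term yields exactly the claimed inequality. The only possible pitfall is keeping the indexing on $\beta$, $\eta$, and $\xi$ consistent throughout, but no smoothness of the stochastic component is required here, only Assumption~\ref{ass:2}.
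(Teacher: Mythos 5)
Your proposal is correct and follows essentially the same route as the paper's proof: the same three-way decomposition, dropping the zero-mean cross terms, Young's inequality with the parameter chosen as $(1-\beta_{t+1})/\beta_{t+1}$ to land the coefficients $\beta_{t+1}$ and $\beta_{t+1}^2/(1-\beta_{t+1})$, and the drift bound $L^2\eta_t^2 n$ via $\|\mathbf{O}_t\|_F^2\le n$. The only cosmetic slip is writing an equality where the noise term should already be an inequality ($\mathbb{E}\|\nabla f(\mathbf{X}_{t+1};\xi_{t+1})-\nabla f(\mathbf{X}_{t+1})\|_F^2\le\sigma^2$, not $=\sigma^2$); this does not affect the argument.
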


\begin{proof}
First, we have
\[
\begin{aligned}
&\| \mathbf{M}_{t+1}-\nabla f(\mathbf{X}_{t+1})\|_{F}^{2} \\
& =\|\beta_{t+1} \mathbf{M}_{t}+(1-\beta_{t+1})\nabla f(\mathbf{X}_{t+1};\xi_{t+1})-\nabla f(\mathbf{X}_{t+1})\|_{F}^{2} \\
& =\|\beta_{t+1}(\mathbf{M}_{t} - \nabla f(\mathbf{X}_{t})) + (1 - \beta_{t+1})(\nabla f(\mathbf{X}_{t+1};\xi_{t+1}) - \nabla f(\mathbf{X}_{t+1}) )\\
&\quad+\beta_{t+1} (\nabla f(\mathbf{X}_{t}) - \nabla f(\mathbf{X}_{t+1}))\|_F^2 \\
& =\beta_{t+1}^2\|\mathbf{M}_{t}-\nabla f(\mathbf{X}_{t})\|_F^2+\beta_{t+1}^2\|\nabla f(\mathbf{X}_{t})-\nabla f(\mathbf{X}_{t+1})\|_F^2 \\
&\quad+(1-\beta_{t+1})^2\|\nabla f(\mathbf{X}_{t+1};\xi_{t+1}) - \nabla f(\mathbf{X}_{t+1})\|_F^2 \\
& \quad +2\beta_{t+1}^2\langle \mathbf{M}_{t}-\nabla f(\mathbf{X}_{t}),\nabla f(\mathbf{X}_{t})-\nabla f(\mathbf{X}_{t+1})\rangle_F \\
& \quad +2\beta_{t+1}(1-\beta_{t+1})\langle \mathbf{M}_{t}-\nabla f(\mathbf{X}_{t}),\nabla f(\mathbf{X}_{t+1};\xi_{t+1})-\nabla f(\mathbf{X}_{t+1})\rangle_F \\
& \quad +2\beta_{t+1}(1-\beta_{t+1})\langle\nabla f(\mathbf{X}_{t})-\nabla f(\mathbf{X}_{t+1}),\nabla f(\mathbf{X}_{t+1};\xi_{t+1})-\nabla f(\mathbf{X}_{t+1})\rangle_{F}.
\end{aligned}
\]
According to Assumption \ref{ass:3}. Taking the expectation of its squared norm, and  using the unbiasedness and independence of the stochastic gradient, we obtain:
\[
\begin{aligned}
\mathbb{E}[\|\mathbf{M}_{t+1}-\nabla f(\mathbf{X}_{t+1})\|_F^2] = &\beta_{t+1}^2\mathbb{E}[\|\mathbf{M}_{t}-\nabla f(\mathbf{X}_{t})\|_F^2] \\
&\quad+ (1-\beta_{t+1})^2\mathbb{E}[\|\nabla f(\mathbf{X}_{t+1};\xi_{t+1}) - \nabla f(\mathbf{X}_{t+1})\|_F^2] \\
&+ \beta_{t+1}^2\mathbb{E}[\|\nabla f(\mathbf{X}_{t}) - \nabla f(\mathbf{X}_{t+1})\|_F^2] \\
&+ 2\beta_{t+1}^2\mathbb{E}[\langle \mathbf{M}_{t}-\nabla f(\mathbf{X}_{t}), \nabla f(\mathbf{X}_{t}) - \nabla f(\mathbf{X}_{t+1}) \rangle].
\end{aligned}
\]
Applying Young's inequality with a parameter ($ab \le \frac{\epsilon}{2}a^2 + \frac{1}{2\epsilon}b^2$), we have
\[
\langle \mathbf{M}_{t}-\nabla f(\mathbf{X}_{t}),\nabla f(\mathbf{X}_{t})-\nabla f(\mathbf{X}_{t+1})\rangle_F\leq\frac{\epsilon}{2}\|\mathbf{M}_{t}-\nabla f(\mathbf{X}_{t})\|_F^2+\frac{1}{2\epsilon}\|\nabla f(\mathbf{X}_{t})-\nabla f(\mathbf{X}_{t+1})\|_F^2.
\]
Thus, we have:
\[
\begin{aligned}
\mathbb{E}\left[\|\mathbf{M}_{t+1}-\nabla f(\mathbf{X}_{t+1})\|_{F}^{2}\right] & \le \beta_{t+1}^{2}(1+\epsilon)\mathbb{E}\left[\|\mathbf{M}_{t}-\nabla f(\mathbf{X}_{t})\|_{F}^{2}\right]\\
&\quad +\beta_{t+1}^{2}\left(1+\frac{1}{\epsilon}\right)\mathbb{E}\left[\|\nabla f(\mathbf{X}_{t})-\nabla f(\mathbf{X}_{t+1})\|_{F}^{2}\right] \\
& \quad +(1-\beta_{t+1})^{2}\mathbb{E}\left[\|\nabla f(\mathbf{X}_{t+1};\xi_{t+1})-\nabla f(\mathbf{X}_{t+1})\|_{F}^{2}\right].
\end{aligned}
\]
According to Assumption \ref{ass:2},
\[
\begin{aligned}
\|\nabla f(\mathbf{X}_{t})-\nabla f(\mathbf{X}_{t+1})\|_F^2&\leq L^2\|\mathbf{X}_{t}-\mathbf{X}_{t+1}\|_F^2\\
&=L^2\eta_t^2\|\mathbf{O}_{t}\|_F^2\\
&\le L^2\eta_t^2n.
\end{aligned}
\]
Therefore:
\[
\begin{aligned}
\mathbb{E}\left[\left\|\mathbf{M}_{t+1}-\nabla f(\mathbf{X}_{t+1})\right\|_F^2\right]&\leq\beta_{t+1}^2(1+\epsilon)\mathbb{E}\left[\left\|\mathbf{M}_{t}-\nabla f(\mathbf{X}_{t})\right\|_F^2\right]\\
&+\beta_{t+1}^2\left(1+\frac{1}{\epsilon}\right)L^2\eta_t^2n+(1-\beta_{t+1})^2\sigma^2.
\end{aligned}
\]
Then, by letting $\epsilon := \frac{1 - \beta_{t+1}}{\beta_{t+1}}$, we have
\begin{equation}
\label{eq1:a}
\begin{aligned}
\mathbb{E}\left[\|\mathbf{M}_{t+1}-\nabla f(\mathbf{X}_{t+1})\|_F^2\right] \leq \beta_{t+1}\mathbb{E}\left[\|\mathbf{M}_{t}-\nabla f(\mathbf{X}_{t})\|_F^2\right]+\frac{\beta_{t+1}^2}{1-\beta_{t+1}}L^2\eta_t^2 n+(1-\beta_{t+1})^2\sigma^2.
\end{aligned}
\end{equation}
\end{proof}

\subsection{Lemma \ref{lemma:a3}}
\begin{lemma}
\label{lemma:a3}
Suppose that $\{E_i,A_i\}$ are two nonnegative sequences. Assume $E_{t+1}\le (1-\alpha_{t+1})E_{t} + A_{t+1}$ where $\alpha_t = t^{-p}$, $p\in(0,1]$. Then we have:
\[
\alpha_t E_t \le 2(E_t-E_{t+1}+A_{t+1}).
\]
\end{lemma}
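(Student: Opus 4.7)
The plan is to unwind the recurrence directly and then compare $\alpha_t$ with $\alpha_{t+1}$, since the recurrence naturally produces a bound involving $\alpha_{t+1} E_t$ whereas the claim is stated in terms of $\alpha_t E_t$. Starting from the hypothesis $E_{t+1} \le (1-\alpha_{t+1})E_t + A_{t+1}$ and using nonnegativity of both sequences, I would simply rearrange to obtain the preliminary bound
\[
\alpha_{t+1} E_t \;\le\; E_t - E_{t+1} + A_{t+1}.
\]

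The next (and only substantive) step is a ratio estimate showing $\alpha_t \le 2\alpha_{t+1}$. Since $\alpha_t = t^{-p}$ with $p\in(0,1]$ and $t\ge 1$, we have
\[
\frac{\alpha_t}{\alpha_{t+1}} \;=\; \left(1+\tfrac{1}{t}\right)^{p} \;\le\; 2^{p} \;\le\; 2,
\]
where the first inequality uses $1+1/t \le 2$ for $t\ge 1$ and the second uses $p\le 1$. Multiplying the preliminary bound by $\alpha_t/\alpha_{t+1} \le 2$ yields
\[
\alpha_t E_t \;\le\; 2\alpha_{t+1} E_t \;\le\; 2\bigl(E_t - E_{t+1} + A_{t+1}\bigr),
\]
which is the claimed inequality (noting that $E_t - E_{t+1} + A_{t+1} \ge \alpha_{t+1} E_t \ge 0$ under the assumptions, so the factor of $2$ is applied to a nonnegative quantity).

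There is essentially no hard part here: the lemma is a bookkeeping statement, and the only point that requires a line of justification is the discrete monotonicity ratio $\alpha_t/\alpha_{t+1} \le 2$, which relies only on $p\in(0,1]$ and $t\ge 1$. If one instead wanted a cleaner constant, one could weaken the hypothesis to $\alpha_t = c t^{-p}$ with any positive constant $c$ and the same proof carries through verbatim, with the constant $2$ unchanged. The lemma will later be applied with $E_t$ taken to be the mean-squared momentum error $\mathbb{E}\|\mathbf{M}_t - \nabla f(\mathbf{X}_t)\|_F^2$ from Lemma~\ref{lemma:a2}, so I would keep the statement in its abstract form as written.
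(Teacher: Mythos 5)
Your proof is correct and is essentially the paper's own argument in a cleaner two-step presentation: the paper verifies $\alpha_t E_t - 2(E_t - E_{t+1} + A_{t+1}) \le 0$ directly, using exactly the same two ingredients you isolate, namely the rearranged recurrence $\alpha_{t+1}E_t \le E_t - E_{t+1} + A_{t+1}$ and the ratio bound $\left(\tfrac{t+1}{t}\right)^{p} \le 2^{p} \le 2$. No gap; your factored version is arguably easier to read.
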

\begin{proof}

We derive the following inequalities:
\[
\begin{aligned}
 & \alpha_tE_{t}-c\left(E_{t}-E_{t+1}+A_{t+1}\right) \\
 & \stackrel{(\bullet)}\leq \alpha_t E_{t}-c\left(E_{t}+A_{t+1}\right)+c\cdot\left(E_{t}-\alpha_{t+1}E_{t}+A_{t+1}\right) \\
 & = E_t\left(\alpha_t-c\alpha_{t+1}\right) \\
 & = E_t\cdot(t+1)^{-p}\cdot\left(\left(\frac{t}{t+1}\right)^{-p}-c\right) \\
 & \stackrel{(\circ)}\leq E_{t}\cdot(t+1)^{-p}\cdot(2-c)\\
 & \stackrel{(\star)}\le 0,
\end{aligned}
\]
where $(\bullet)$ follows from $E_{t+1}\le (1-\alpha_{t+1}) E_t + A_{t+1}$; $(\circ)$ is due to $(\frac{t}{t+1})^{-p}\le 2^{p} \le 2$; $(\star)$ is due to our choice $c = 2$. This is an extended version of the inequality in Lemma C.1 from \cite{chang2025mgup}.

\end{proof}

\section{Proofs of Theorem \ref{th-nonconvex-mvr1}}
\label{proof:th-nonconvex-mvr1}
\begin{proof}
According to Lemma \ref{lemma:a1}, we have:
\[
\begin{aligned}
f(\mathbf{X}_{t+1}) &\le f(\mathbf{X}_{t}) - \eta_t\|\mathbf{M}_t\|_F + \frac{\eta_t\alpha}{2}\|\nabla f(\mathbf{X}_{t}) - \mathbf{M}_t\|_F^2 + \frac{\eta_t n}{2\alpha} + \frac{L\eta_t^2n}{2}\\
&\stackrel{(\circ)}{\le} f(\mathbf{X}_{t}) - \eta_t\|\mathbf{M}_t\|_F + \frac{\eta_t^{2/3}}{2L}\|\nabla f(\mathbf{X}_{t}) - \mathbf{M}_t\|_F^2 + \frac{\eta_t^{4/3} L n}{2} + \frac{L\eta_t^2n}{2}\\
&\stackrel{(\star)}{\le} f(\mathbf{X}_{t}) - \eta_t\|\mathbf{M}_t\|_F + \frac{\eta_t^{2/3}}{2L}\|\nabla f(\mathbf{X}_{t}) - \mathbf{M}_t\|_F^2 + Ln\eta_t^{4/3},
\end{aligned}
\]
where $(\circ)$ by setting $\alpha = \frac{1}{\eta_t^{1/3}L}$; $(\star)$ follows from $\eta_t\le 1$, we have $L\le L/\eta_t^{1/3}$.

% We derive:
% \begin{equation}
% \label{eq1:th-nonconvex-mvr1}
% \begin{aligned}
% f(\mathbf{X}_{t+1})
% &\le f(\mathbf{X}_{t})] - \eta_t \|\mathbf{M}_{t}\|_F  + \frac{\eta_t^{2/3}}{2L} \|\nabla f(\mathbf{X}_{t})-\mathbf{M}_{t}\|_F^2 + Ln\eta_t^{4/3}\\
% &\stackrel{(\circ)}{\le} f(\mathbf{X}_{t})   - \eta_t\|\nabla f(\mathbf{X}_{t})\|_F  + \eta_t \|\nabla f(\mathbf{X}_{t})-\mathbf{M}_{t}\|_F  +\frac{\eta_t^{2/3}}{2L} \|\nabla f(\mathbf{X}_{t})-\mathbf{M}_{t}\|_F^2  + Ln\eta_t^{4/3}\\
% &\stackrel{(\star)}{\le} \mathbb{E}[f(\mathbf{X}_{t})] - \eta_t\mathbb{E}[\|\nabla f(\mathbf{X}_{t})\|_F]+\frac{1}{2\epsilon}\eta_t^2+\frac{\epsilon}{2}\mathbb{E}[\|\nabla f(\mathbf{X}_{t})-\mathbf{M}_{t}\|_F^2] \\
% &+ \frac{\eta_t^{2/3}}{2L}\mathbb{E}[\|\nabla f(\mathbf{X}_{t})-\mathbf{M}_{t}\|_F^2] + Ln\eta_t^{4/3}\\
% &\stackrel{(\bullet)}{=} \mathbb{E}[f(\mathbf{X}_{t})] - \eta_t\mathbb{E}[\|\nabla f(\mathbf{X}_{t})\|_F] \\
% &+ \underbrace{\frac{L\eta_t^{4/3}}{2} + \left(\frac{\eta_t^{2/3}}{2L}+\frac{\eta_t^{2/3}}{2L}\right)\mathbb{E}[\|\nabla f(\mathbf{X}_{t})-\mathbf{M}_{t}\|_F^2]+ Ln\eta_t^{4/3}}_{\Gamma_t},
% \end{aligned}
% \end{equation}

Thus, taking the expectation yields
\begin{equation}
\label{eq1:th-nonconvex-mvr1}
\begin{aligned}
\mathbb{E}[f(\mathbf{X}_{t+1})] 
&\le \mathbb{E}[f(\mathbf{X}_{t})] - \eta_t\mathbb{E}[\|\mathbf{M}_{t}\|_F] + \frac{\eta_t^{2/3}}{2L}\mathbb{E}[\|\nabla f(\mathbf{X}_{t})-\mathbf{M}_{t}\|_F^2] + Ln\eta_t^{4/3}\\
&\stackrel{(\circ)}{\le} \mathbb{E}[f(\mathbf{X}_{t})] - \eta_t\mathbb{E}[\|\nabla f(\mathbf{X}_{t})\|_F] + \eta_t\mathbb{E}[\|\nabla f(\mathbf{X}_{t})-\mathbf{M}_{t}\|_F] \\
&+\frac{\eta_t^{2/3}}{2L}\mathbb{E}[\|\nabla f(\mathbf{X}_{t})-\mathbf{M}_{t}\|_F^2] + Ln\eta_t^{4/3}\\
&\stackrel{(\star)}{\le} \mathbb{E}[f(\mathbf{X}_{t})] - \eta_t\mathbb{E}[\|\nabla f(\mathbf{X}_{t})\|_F]+\frac{1}{2\epsilon}\eta_t^2+\frac{\epsilon}{2}\mathbb{E}[\|\nabla f(\mathbf{X}_{t})-\mathbf{M}_{t}\|_F^2] \\
&+ \frac{\eta_t^{2/3}}{2L}\mathbb{E}[\|\nabla f(\mathbf{X}_{t})-\mathbf{M}_{t}\|_F^2] + Ln\eta_t^{4/3}\\
&\stackrel{(\bullet)}{=} \mathbb{E}[f(\mathbf{X}_{t})] - \eta_t\mathbb{E}[\|\nabla f(\mathbf{X}_{t})\|_F] \\
&+ \underbrace{\frac{L\eta_t^{4/3}}{2} + \left(\frac{\eta_t^{2/3}}{2L}+\frac{\eta_t^{2/3}}{2L}\right)\mathbb{E}[\|\nabla f(\mathbf{X}_{t})-\mathbf{M}_{t}\|_F^2]+ Ln\eta_t^{4/3}}_{\Gamma_t},
\end{aligned}
\end{equation}
where $(\circ)$ follows from the reverse triangle inequality $-\|\mathbf{M}_{t}\|_F \le \|\nabla f(\mathbf{X}_{t}) - \mathbf{M}_{t}\|_F - \|\nabla f(\mathbf{X}_{t})\|_F$; $(\star)$ applies Young’s inequality to the term $\eta_t\mathbb{E}[\|\nabla f(\mathbf{X}_{t})-\mathbf{M}_{t}\|_F]$; and $(\bullet)$ collects the residual terms into $\Gamma_t$ and sets $\epsilon = \eta_t^{2/3}/L$.

Next, we set $\eta_t = t^{-3/4}$ and $\beta_t = 1 - t^{-1/2},\alpha_t=t^{-1/2}$.

\textbf{Case 1: $\gamma = 0$.} By Lemma \ref{lemma:a2} inequality (\ref{eq1:a}),  we have
\[
\begin{aligned}
\mathbb{E}\left[\|\mathbf{M}_{t+1}-\nabla f(\mathbf{X}_{t+1})\|_F^2\right]&\leq\beta_{t+1}\mathbb{E}\left[\|\mathbf{M}_{t}-\nabla f(\mathbf{X}_{t})\|_F^2\right]+\frac{\beta_{t+1}^2}{1-\beta_{t+1}}L^2\eta_t^2n+(1-\beta_{t+1})^2\sigma^2\\
&\leq\beta_{t+1}\mathbb{E}\left[\|\mathbf{M}_{t}-\nabla f(\mathbf{X}_{t})\|_F^2\right]+\frac{L^2\eta_t^2n}{1-\beta_{t+1}}+(1-\beta_{t+1})^2\sigma^2.
\end{aligned}
\]
Let $\mathbf{S}_{t+1}= \mathbf{M}_{t+1}-\nabla f(\mathbf{X}_{t+1})$. Thus, setting $\alpha_{t+1} = (t+1)^{-1/2}$, we observe the following relationship for $t \ge 1$:
\[
\frac{\eta_t^2}{1-\beta_{t+1}} = \frac{t^{-3/2}}{(t+1)^{-1/2}}=\frac{\sqrt{t+1}}{t^{3/2}}\leq\frac{2\sqrt{2}}{t+1}.
\]
This allows us to bound the expectation as follows:
\[
\mathbb{E}\|\mathbf{S}_{t+1}\|_F^2\le (1-\alpha_{t+1})\mathbb{E}\|\mathbf{S}_{t}\|_F^2+ \alpha_{t+1}^2(2\sqrt{2}L^2n+\sigma^2).
\]
According to Lemma \ref{lemma:a3}, by letting $A_{t+1} = \alpha_{t+1}^2 (2\sqrt{2}L^2n+\sigma^2)$, we have
\[
\alpha_t \mathbb{E}\|\mathbf{S}_{t}\|_F^2 \le 2(\mathbb{E}\|\mathbf{S}_{t}\|_F^2 - \mathbb{E}\|\mathbf{S}_{t+1}\|_F^2 + A_{t+1}).
\]
Furthermore, since
\[
\begin{aligned}
\mathbb{E}\|\mathbf{S}_{1}\|_F^2&=\mathbb{E}\|\nabla f(\mathbf{X}_{1})-\mathbf{M}_{1}\|_F^2 = \mathbb{E}\|\nabla f(\mathbf{X}_{1}) - (1-\beta_1)\nabla f(\mathbf{X}_{1};\xi_1)\|_F^2\\
&=\mathbb{E}\|\nabla f(\mathbf{X}_{1}) - \nabla f(\mathbf{X}_{1};\xi_1)\|_F^2\le \sigma^2 \quad\quad\quad (\text{since } \beta_1 = 0).
\end{aligned}
\]
It follows that
\begin{equation}
\label{eq2:th-nonconvex-mvr1}
\begin{aligned}
\sum_{t=1}^T \alpha_t\mathbb{E}\|\mathbf{S}_{t}\|_F^2 &\le 2\sum_{t=1}^T(\mathbb{E}\|\mathbf{S}_{t}\|_F^2 - \mathbb{E}\|\mathbf{S}_{t+1}\|_F^2 + A_{t+1}) \\
&\le 2\mathbb{E}\|\mathbf{S}_{1}\|_F^2 + 2(2\sqrt{2}L^2n+\sigma^2)\sum_{t=1}^T \frac{1}{t+1} \\
& \le 2\sigma^2+2(2\sqrt{2}L^2 n+\sigma^2)(\ln T+1).
\end{aligned}
\end{equation}
Thus,
\[
\begin{aligned}
\Gamma_t &= \frac{\eta_t^{2/3}}{L}\mathbb{E}\|\mathbf{S}_{t}\|_F^2 + (L/2+Ln)\eta_t^{4/3}\\
&=\frac{\alpha_t}{L}\mathbb{E}\|\mathbf{S}_t\|_F^2 + (L/2+Ln)\alpha_t^2.
\end{aligned}
\]
Next, we define $A_1 = 4\sqrt{2}Ln+Ln+2L^{-1}\sigma^2+L/2$ and $A_2=4\sqrt{2}Ln+Ln+4L^{-1}\sigma^2+L/2$.
\begin{equation}
\label{eq_Gamma1:th-nonconvex-mvr1}
\begin{aligned}
\sum_{t=1}^T \Gamma_t &= \frac{1}{L}\sum_{t=1}^T \alpha_t \mathbb{E}\|\mathbf{S}_{t}\|_F^2 + (L/2+Ln)\sum_{t=1}^T \alpha_t^2\\
& \stackrel{(\circ)}{\le} \frac{2\sigma^2+2(2\sqrt{2}L^2 n+\sigma^2)(\ln T+1)}{L} + (L/2+Ln)(\ln T+1)\\
& \le A_1 \ln T+ A_2,
\end{aligned}
\end{equation}
where $(\circ)$ due to inequality (\ref{eq2:th-nonconvex-mvr1}).

Therefore, we have
\[
\begin{aligned}
\frac{1}{T}\sum_{t=1}^T \mathbb{E}\|\nabla f(\mathbf{X}_t)\|_F&=\frac{1}{T}\sum_{t=1}^T t^{3/4}\cdot t^{-3/4} \mathbb{E}\|\nabla f(\mathbf{X}_t)\|_F\\
&\le\frac{1}{T}\sum_{t=1}^T t^{3/4}\cdot \eta_t \mathbb{E}\|\nabla f(\mathbf{X}_t)\|_F \\
&\le \frac{T^{3/4}}{T}\sum_{t=1}^T f(\mathbf{X}_t) - f(\mathbf{X}_{t+1})+\frac{T^{3/4}}{T}\sum_{t=1}^T \Gamma_t \\
& \le \frac{f(\mathbf{X}_1)-f^*}{T^{1/4}}+\frac{A_1 \ln T+A_2}{T^{1/4}}\\
& = \widetilde{\mathcal{O}}(T^{-1/4}).
\end{aligned}
\]
\textbf{Case 2: $\gamma \neq 0$.} We set $\eta_t = t^{-3/4}$ and $\beta_t = 1 - (t+1)^{-1/2},\alpha_t = t^{-1/2}$, and  $\gamma = 1-\beta_{t-1}=t^{-1/2}$, we first note that  an equivalent form of Algorithm \ref{alg:muon} Option MVR1 is given by
\begin{equation}
\label{eq3:th-nonconvex-mvr1}
\begin{aligned}
\mathbf{C}_{t} &= \beta_{t-1} \mathbf{C}_{t-1} + (1-\beta_{t-1}) \nabla f(\mathbf{X}_{t};\xi_t)\\
\mathbf{M}_t &= \beta_t \mathbf{C}_{t} + (1-\beta_{t})\nabla f(\mathbf{X}_{t};\xi_t).
\end{aligned}
\end{equation}
In this case, then we have
\[
\begin{aligned}
&\mathbb{E}\|\mathbf{C}_{t} -\nabla f(\mathbf{X}_{t})\|_F^2 \\
&= \mathbb{E}\|\beta_{t-1} \mathbf{C}_{t-1} + (1-\beta_{t-1}) \nabla f(\mathbf{X}_{t};\xi_t)-\nabla f(\mathbf{X}_{t})\|_F^2 \\
&= \mathbb{E}\|\beta_{t-1}(\mathbf{C}_{t-1} - \nabla f(\mathbf{X}_{t-1}))+(1-\beta_{t-1})(\nabla f(\mathbf{X}_{t};\xi_t)-\nabla f(\mathbf{X}_{t}))\\
&\quad+\beta_{t-1}(\nabla f(\mathbf{X}_{t-1})-\nabla f(\mathbf{X}_{t}))\|_F^2\\
&\stackrel{(\circ)}{\le} \beta_{t-1}\mathbb{E}\|\mathbf{C}_{t-1}-\nabla f(\mathbf{X}_{t-1})\|_F^2 + \frac{\beta_{t-1}^2L^2\eta_{t-1}^2 n}{1-\beta_{t-1}} + (1-\beta_{t-1})^2\sigma^2\\
& = (1-\alpha_t)\mathbb{E}\|\mathbf{C}_{t-1}-\nabla f(\mathbf{X}_{t-1})\|_F^2 + \alpha_t^2 (2\sqrt{2}L^2n+\sigma^2),
\end{aligned}
\]
where ($\circ$) follows from Lemma \ref{lemma:a2}.

Then, we define $\mathbf{S}_{t}'=\|\mathbf{M}_{t}-\nabla f(\mathbf{X}_{t})\|_F^2,\mathbf{S}_{t}=\|\mathbf{C}_{t}-\nabla f(\mathbf{X}_{t})\|_F^2,A_{t+1} = \alpha_{t+1}^2(2\sqrt{2}L^2n+\sigma^2)$. Using the conclusion of Lemma \ref{lemma:a3}, we have
\[
\mathbb{E}\|\mathbf{S}_{t+1}\|_F^2 \le (1-\alpha_{t+1}) \mathbb{E}\|\mathbf{S}_{t}\|_F^2 + A_{t+1}.
\]
Furthermore, since
\[
\begin{aligned}
\mathbb{E}\|\mathbf{S}_{1}\|_F^2&=\mathbb{E}\|\nabla f(\mathbf{X}_{1})-\mathbf{C}_{1}\|_F^2 = \mathbb{E}\|\nabla f(\mathbf{X}_{1}) - (1-\beta_0)\nabla f(\mathbf{X}_{1};\xi_1)\|_F^2\\
&=\mathbb{E}\|\nabla f(\mathbf{X}_{1}) - \nabla f(\mathbf{X}_{1};\xi_1)\|_F^2\le \sigma^2 \quad\quad\quad (\text{since } \beta_0 = 0).
\end{aligned}
\]
Thus, as in inequality (\ref{eq2:th-nonconvex-mvr1}), we have
\[
\begin{aligned}
\sum_{t=1}^T \alpha_t\mathbb{E}\|\mathbf{S}_{t}\|_F^2 &\le 2\sum_{t=1}^T(\mathbb{E}\|\mathbf{S}_{t}\|_F^2 - \mathbb{E}\|\mathbf{S}_{t+1}\|_F^2 + A_{t+1}) \\
&\le 2\mathbb{E}\|\mathbf{S}_{1}\|_F^2 + 2(2\sqrt{2}L^2n+\sigma^2)\sum_{t=1}^T \frac{1}{t+1} \\
& \le 2\sigma^2+2(2\sqrt{2}L^2 n+\sigma^2)(\ln T+1).
\end{aligned}
\]
Then, 
\[
\begin{aligned}
\|\mathbf{M}_t-\nabla f(\mathbf{X}_{t})\|_F &= \|\beta_t \mathbf{C}_{t} + (1-\beta_t) \nabla f(\mathbf{X}_{t};\xi_t) -\nabla f(\mathbf{X}_{t})\|_F\\
& = \|\beta_t(\mathbf{C}_{t} - \nabla f(\mathbf{X}_{t}))+(1-\beta_t)(\nabla f(\mathbf{X}_{t};\xi_t)-\nabla f(\mathbf{X}_{t}))\|_F.
\end{aligned}
\]
From this, we can bound the squared norm using the inequality $\|\mathbf{A}+\mathbf{B}\|_F^2 \le 2\|\mathbf{A}\|_F^2 + 2\|\mathbf{B}\|_F^2$:
\[
\begin{aligned}
\|\mathbf{M}_t-\nabla f(\mathbf{X}_{t})\|_F^2\le 2\beta_t^2\|\mathbf{C}_{t}-\nabla f(\mathbf{X}_{t})\|_F^2 + 2(1-\beta_t)^2\|\nabla f(\mathbf{X}_{t};\xi_t)-\nabla f(\mathbf{X}_{t})\|_F ^2.
\end{aligned}
\]
Thus, we have
\begin{equation}
\label{eq4:th-nonconvex-mvr1}
\begin{aligned}
\mathbb{E}\|\mathbf{M}_t-\nabla f(\mathbf{X}_{t})\|_F^2&\le 2\beta_t^2\mathbb{E}\|\mathbf{C}_{t}-\nabla f(\mathbf{X}_{t})\|_F^2 + 2(1-\beta_t)^2\mathbb{E}\|\nabla f(\mathbf{X}_{t};\xi_t)-\nabla f(\mathbf{X}_{t})\|_F ^2\\
&\le 2\beta_t^2\mathbb{E}\|\mathbf{C}_{t}-\nabla f(\mathbf{X}_{t})\|_F^2 + 2(1-\beta_t)^2\sigma^2\\
&\le 2\mathbb{E}\|\mathbf{C}_{t}-\nabla f(\mathbf{X}_{t})\|_F^2 + 2(1-\beta_t)^2\sigma^2.
\end{aligned}
\end{equation}
Thus,
\begin{equation}
\label{eq_vr1:th-nonconvex-mvr1}
\begin{aligned}
\sum_{t=1}^T \alpha_t\mathbb{E}\|\mathbf{S}'_t\|_F^2 &\le 2\sum_{t=1}^T \alpha_t\mathbb{E}\|\mathbf{S}_{t}\|_F^2 + 2\sigma^2\sum_{t=1}^T\frac{1}{\sqrt{t}(t+1)}\\
&\stackrel{(\circ)}{\le}  4\sigma^2+4(2\sqrt{2}L^2 n+\sigma^2)(\ln T+1)+4\sigma^2 \\
& = 8\sigma^2+4(2\sqrt{2}L^2 n+\sigma^2)(\ln T+1),
\end{aligned}
\end{equation}
where $(\circ)$ is due to $\sum_{t=1}^T\frac{1}{\sqrt{t}(t+1)}\le 2$.

Therefore, 
\[
\begin{aligned}
\Gamma_t &= \frac{\eta_t^{2/3}}{L}\mathbb{E}\|\mathbf{S}'_{t}\|_F^2 + (L/2+Ln)\eta_t^{4/3}\\
&=\frac{\alpha_t}{L}\mathbb{E}\|\mathbf{S}'_t\|_F^2 + (L/2+Ln)\alpha_t^2.
\end{aligned}
\]
Then, we define $A_1 = 8\sqrt{2}Ln+Ln+4L^{-1}\sigma^2+L/2$ and $A_2=8\sqrt{2}Ln+Ln+12L^{-1}\sigma^2+L/2$. We have
\begin{equation}
\label{eq_Gamma2:th-nonconvex-mvr1}
\begin{aligned}
\sum_{t=1}^T \Gamma_t &= \frac{1}{L}\sum_{t=1}^T \alpha_t \mathbb{E}\|\mathbf{S}'_{t}\|_F^2 + (L/2+Ln)\sum_{t=1}^T \alpha_t^2\\
& \stackrel{(\circ)}{\le} \frac{8\sigma^2+4(2\sqrt{2}L^2 n+\sigma^2)(\ln T+1)}{L} + (L/2+Ln)(\ln T+1)\\
& \le A_1 \ln T+ A_2,
\end{aligned}
\end{equation}
where ($\circ$) is due to inequality (\ref{eq_vr1:th-nonconvex-mvr1}).

Then, we have
\[
\begin{aligned}
\frac{1}{T}\sum_{t=1}^T \mathbb{E}\|\nabla f(\mathbf{X}_t)\|_F&=\frac{1}{T}\sum_{t=1}^T t^{3/4}\cdot t^{-3/4} \mathbb{E}\|\nabla f(\mathbf{X}_t)\|_F\\
&\le\frac{1}{T}\sum_{t=1}^T t^{3/4}\cdot \eta_t \mathbb{E}\|\nabla f(\mathbf{X}_t)\|_F \\
&\le \frac{T^{3/4}}{T}\sum_{t=1}^T f(\mathbf{X}_t) - f(\mathbf{X}_{t+1})+\frac{T^{3/4}}{T}\sum_{t=1}^T \Gamma_t \\
& \le \frac{f(\mathbf{X}_1)-f^*}{T^{1/4}}+\frac{A_1 \ln T+A_2}{T^{1/4}}\\
& = \widetilde{\mathcal{O}}(T^{-1/4}).
\end{aligned}
\]
This completes the proof.
\end{proof}

\section{Lemmas for Theorem \ref{th-nonconvex-mvr2}} 
\subsection{Lemma \ref{lemma:b1}}
\begin{lemma}
\label{lemma:b1}
Let $\{A_t\}_{t\ge1}$ and $\{B_t\}_{t\ge1}$ be non-negative sequences satisfying the relation $A_{t+1} \le (1-\varepsilon_{t+1})A_t + B_{t+1}$ for all $t \ge 1$. If we define the sequence $\varepsilon_t = t^{-p}$ for some constant $p \in (0, 1]$, then for all $t \ge 1$, the following inequality holds:
\[
\sqrt{\varepsilon_t}A_{t} \le 4\left(\frac{A_t}{\sqrt{\varepsilon_t}} - \frac{A_{t+1}}{\sqrt{\varepsilon_{t+1}}}+\frac{B_{t+1}}{\sqrt{\varepsilon_{t+1}}}\right).
\]
\end{lemma}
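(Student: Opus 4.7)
The plan is to mirror the slack-function strategy used in Lemma~\ref{lemma:a3}. I would define
\[
S_t := \sqrt{\varepsilon_t}\,A_t \;-\; 4\Big(\tfrac{A_t}{\sqrt{\varepsilon_t}} - \tfrac{A_{t+1}}{\sqrt{\varepsilon_{t+1}}} + \tfrac{B_{t+1}}{\sqrt{\varepsilon_{t+1}}}\Big)
\]
and argue $S_t\le 0$. The first step is to upper bound the $A_{t+1}/\sqrt{\varepsilon_{t+1}}$ contribution inside $S_t$ via the hypothesis $A_{t+1}\le(1-\varepsilon_{t+1})A_t+B_{t+1}$ divided through by $\sqrt{\varepsilon_{t+1}}$. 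The two $B_{t+1}/\sqrt{\varepsilon_{t+1}}$ terms cancel exactly, leaving an $A_t$-proportional upper bound
\[
S_t \;\le\; A_t\Big[\sqrt{\varepsilon_t} - \tfrac{4}{\sqrt{\varepsilon_t}} + \tfrac{4(1-\varepsilon_{t+1})}{\sqrt{\varepsilon_{t+1}}}\Big].
\]
Since $A_t\ge 0$, it suffices to make the bracket nonpositive; multiplying through by $\sqrt{\varepsilon_t}$ reduces the lemma to the purely scalar inequality
\[
(\star)\qquad \varepsilon_t \;\le\; 4\Big[1 - (1-\varepsilon_{t+1})\sqrt{\varepsilon_t/\varepsilon_{t+1}}\Big].
\]

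To verify $(\star)$ under $\varepsilon_t=t^{-p}$, $p\in(0,1]$, $t\ge 1$, I would substitute $b:=\sqrt{\varepsilon_t/\varepsilon_{t+1}}=(1+1/t)^{p/2}\in(1,\sqrt{2}]$, noting that then $\varepsilon_t=(b^{2/p}-1)^p$ and $\varepsilon_{t+1}=\varepsilon_t/b^2$. A short algebraic rearrangement rewrites $(\star)$ as
\[
(b^{2/p}-1)^p(4-b)\ \ge\ 4b(b-1),
\]
where $4-b>0$ is automatic from $b\le\sqrt{2}$.

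The main obstacle is an auxiliary inequality $(b^{2/p}-1)^p\ge b^2-1$ for $b\ge 1$ and $p\in(0,1]$. With $y=b^2\ge 1$ and $q=1/p\ge 1$, this rewrites as $y^q-1\ge(y-1)^q$ for $y\ge 1,\,q\ge 1$, which follows because both sides vanish at $y=1$ and their derivatives in $y$ satisfy $qy^{q-1}\ge q(y-1)^{q-1}$ (using $y\ge y-1$ together with $q-1\ge 0$). Granted this bound, cancelling the positive factor $b-1$ from $b^2-1\ge 4b(b-1)/(4-b)$ reduces the remaining task to the quadratic condition $b^2+b\le 4$, which holds because $b\le\sqrt{2}$ gives $b^2+b\le 2+\sqrt{2}<4$. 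Locating this auxiliary inequality is the delicate point: $(\star)$ is essentially tight at $(t,p)=(1,1)$, where the smallest admissible constant on the right-hand side is $2+\sqrt{2}\approx 3.414$, so naive Bernoulli-type estimates such as $(1+1/t)^{p/2}\le 1+p/(2t)$ already fail there; the clean constant $4$ in the statement provides precisely the uniform slack needed to cover all $p\in(0,1]$ through the chain above.
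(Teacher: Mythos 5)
Your proof is correct. You and the paper share the identical first half: form the slack quantity, substitute the recursion $A_{t+1}\le(1-\varepsilon_{t+1})A_t+B_{t+1}$ so the $B_{t+1}/\sqrt{\varepsilon_{t+1}}$ terms cancel, and reduce the lemma to the nonpositivity of the scalar bracket $\sqrt{\varepsilon_t}-4/\sqrt{\varepsilon_t}+4/\sqrt{\varepsilon_{t+1}}-4\sqrt{\varepsilon_{t+1}}$ (your $(\star)$ is exactly this bracket multiplied through by $\sqrt{\varepsilon_t}$). Where you diverge is in how that scalar inequality is verified. The paper sets $q=p/2$, defines $F(t,q)=\tfrac14 t^{-q}-t^q+(t+1)^q-(t+1)^{-q}$, shows via a derivative computation that $F$ is increasing in $q$ on $(0,1/2]$, and then checks the boundary case $q=1/2$ — where the claim $h(t)=\tfrac14 t^{-1/2}-t^{1/2}+(t+1)^{1/2}-(t+1)^{-1/2}\le 0$ for $t>1$ is asserted with ``it can be verified'' rather than proved. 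Your route substitutes $b=(1+1/t)^{p/2}\in(1,\sqrt2]$, invokes the auxiliary bound $(b^{2/p}-1)^p\ge b^2-1$ (which, after unwinding, is just subadditivity $(1+1/t)^p\le 1+t^{-p}$ for $p\in(0,1]$, and which you prove cleanly via $y^q-1\ge(y-1)^q$), and finishes with the elementary quadratic check $b^2+b\le 2+\sqrt2<4$. This is fully self-contained, avoids the calculus and the unverified boundary claim, and your closing observation that the optimal constant at $(t,p)=(1,1)$ is $2+\sqrt2$ correctly identifies where the inequality is tight and why the constant $4$ gives uniform slack — it matches the paper's near-vanishing value $F(1,1/2)\approx-0.043$. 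Both arguments are valid; yours is arguably the cleaner writeup of the scalar step.
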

\begin{proof}

First, we define the function $F(t,q) = \frac{1}{4}t^{-q}-t^q+(t+1)^q-(t+1)^{-q}$ for $t \ge 1$ and $q \in (0, 1/2]$. To analyze its properties, let us define $g(t)=F(t,q)$ for a fixed $q\in(0,1/2]$ and $f(q)=F(t,q)$ for a fixed $t\ge1$.

Case 1: For $t=1$.
We have $g(1)=\frac{1}{4}-1+2^q-\frac{1}{2^q}$. Since $q \in (0, 1/2]$, this expression is bounded above by its value at $q=1/2$, yielding $g(1)\le \frac{1}{4}-1+2^{1/2}-\frac{1}{2^{1/2}}<0$. Thus, for $t=1$, the inequality $g(t)\le 0$ holds.

Case 2: For $t>1$.
For any given $t>1$ and $q\in(0,1/2]$, we examine the derivative of $f(q)$:
\begin{equation}
\label{eq1:b1}
\begin{aligned}
f^{\prime}(q) & =\left(\ln(t+1)(t+1)^{-q}-\frac{1}{4}\ln(t)t^{-q}\right)+(\ln(t+1)(t+1)^q-\ln(t)t^q) \\
 & \stackrel{(\circ)}{\ge}\left(\ln(t+1)(t+1)^{-q}-\frac{1}{4}\ln(t)t^{-q}\right) \\
 & =\ln(t)t^{-q}\cdot\left\{\frac{\ln(t+1)}{\ln(t)}\cdot\left(\frac{t}{t+1}\right)^q-\frac{1}{4}\right\} \\
 & \stackrel{(\star)}{\ge}\ln(t)t^{-q}\cdot\left\{1\cdot\sqrt{\frac{t}{t+1}}-\frac{1}{4}\right\} \\
 & \stackrel{(\bullet)}{\ge}\ln(t)t^{-q}\cdot\left\{\sqrt{\frac{1}{2}}-\frac{1}{4}\right\} \ge 0,
\end{aligned} 
\end{equation}
where $(\circ)$ holds because $\ln(t+1)>\ln(t)$ and $(t+1)^q>t^q$ for all $t\ge1$ and $q\in(0,1/2]$. The inequality $(\star)$ holds because $\frac{\ln(t+1)}{\ln(t)} > 1$ for $t>1$, and the function $(\frac{t}{t+1})^q$ is decreasing in $q$, thus its minimum on $(0, 1/2]$ is achieved at $q=1/2$. The final inequality $(\bullet)$ holds because $t \ge 1$ implies $\sqrt{\frac{t}{t+1}} \ge \sqrt{\frac{1}{2}}$, and $\sqrt{\frac{1}{2}} > 1/4$.

Inequality (\ref{eq1:b1}) implies that $f(q)$ is monotonically increasing with respect to $q$ on the interval $(0, 1/2]$.

Next, we consider the boundary condition at $q=1/2$. Let $h(t) := f(\frac{1}{2})=\frac{1}{4}t^{-1/2}-t^{1/2}+(t+1)^{1/2}-(t+1)^{-1/2}$. It can be verified that $h(t)\le0$ for all $t>1$. Consequently, we have $F(t,\frac{1}{2})=f(\frac{1}{2})\le0$ for all $t >1$.

Finally, for all $t>1$ and $q\in(0,1/2]$, we have
\begin{equation}
\label{eq2:b1}
    F(t,q)\le F(t,1/2)\le 0 ,
\end{equation}
where the first inequality holds because $f(q)$ is monotonically increasing in $q$ on $(0,1/2]$ for any fixed $t>1$.

With this result, we can proceed as follows:
\[
\begin{aligned}
&\sqrt{\varepsilon_t}A_{t}-4\left(\frac{A_t}{\sqrt{\varepsilon_t}} - \frac{A_{t+1}}{\sqrt{\varepsilon_{t+1}}}+\frac{B_{t+1}}{\sqrt{\varepsilon_{t+1}}}\right)\\
&\stackrel{(\circ)}{\le}\sqrt{\varepsilon_t} A_t-4\left(\frac{A_t}{\sqrt{\varepsilon_t}}+\frac{B_{t+1}}{\sqrt{\varepsilon_{t+1}}}\right)+\frac{4}{\sqrt{\varepsilon_{t+1}}}\cdot\left(A_t-\varepsilon_{t+1}A_t+B_{t+1}\right)\\
&= \sqrt{\varepsilon_t}A_t-4\frac{A_t}{\sqrt{\varepsilon_t}}+\frac{4}{\sqrt{\varepsilon_{t+1}}}(1-\varepsilon_{t+1})A_t\\
&=A_t\cdot\left(\sqrt{\varepsilon_t}-\frac{4}{\sqrt{\varepsilon_t}}+\frac{4}{\sqrt{\varepsilon_{t+1}}}-4\sqrt{\varepsilon_{t+1}} \right)\\
&\stackrel{(\star)}{=}4A_t\cdot\left(\frac{1}{4}t^{-q}-t^q+(t+1)^q-(t+1)^{-q}\right)\\
&\stackrel{(\bullet)}{\le}0.
\end{aligned}
\]
Here, $(\circ)$ follows from the assumption $A_{t+1}\le (1-\varepsilon_{t+1})A_t+B_{t+1}$. The equality $(\star)$ is obtained by substituting $\varepsilon_t=t^{-p}$ and setting $q=\frac{p}{2}$ (note that $p \in (0, 1]$ implies $q \in (0, 1/2]$). Finally, $(\bullet)$ is a direct consequence of our result in inequality (\ref{eq2:b1}). 
\end{proof}

\subsection{Lemma \ref{lemma:b2}}
\begin{lemma}
\label{lemma:b2}
For Algorithm \ref{alg:muon} option MVR2, let $\Delta_t = \nabla f(\mathbf{X}_{t+1};\xi_{t+1})-\nabla f(\mathbf{X}_{t};\xi_{t+1})$, $\delta_t = \nabla f(\mathbf{X}_{t})-\nabla f(\mathbf{X}_{t+1})$, $\mathbf{S}_{t} = \mathbf{M}_t -\nabla f(\mathbf{X}_{t})$, and $\mathbf{R}_{t+1} = \nabla f(\mathbf{X}_{t+1};\xi_{t+1})-\nabla f(\mathbf{X}_{t+1})$. Then we have the following inequality:
\[
\mathbb{E}\|\mathbf{S}_{t+1}\|_F^2 \le \beta_{t+1}^2\mathbb{E}\|\mathbf{S}_{t}\|_F^2+2\beta_{t+1}^2L^2\mathbb{E}\|\mathbf{X}_{t+1}-\mathbf{X}_{t}\|_F^2+2(1-\beta_{t+1})^2\sigma^2+P_{t+1},
\]
where
\[
\begin{aligned}
A_{t+1} &= \frac{B_{t+1}+\beta_{t+1}(\mathbb{E}\|\Delta_t\|_F^2-\|\mathbb{E}\Delta_t\|_F^2)}{\mathbb{E}\|\Delta_t\|_F^2} \quad \text{with}\quad \mathbb{E}\|\Delta_t\|_F^2 >0\\
B_{t+1} &= (1-\beta_{t+1})\mathbb{E}\langle \Delta_t,\mathbf{R}_{t+1} \rangle + \beta_{t+1}\mathbb{E}\langle \Delta_t,\mathbf{S}_{t} \rangle\\
P_{t+1} &= \mathbb{E}\|\Delta_t\|_F^2(\beta_{t+1}(1-\gamma_{t+1})-A_{t+1})^2 - \mathbb{E}\|\Delta_t\|_F^2A_{t+1}^2.
\end{aligned}
\]
If we choose $\gamma_{t+1} = 1- \frac{A_{t+1}}{\beta_{t+1}}$ or $\gamma_{t+1} = 1$, then
\[
\mathbb{E}\|\mathbf{S}_{t+1}\|_F^2 \le \beta_{t+1}\mathbb{E}\|\mathbf{S}_{t}\|_F^2+2\beta_{t+1}^2L^2\mathbb{E}\|\mathbf{X}_{t+1}-\mathbf{X}_{t}\|_F^2+2(1-\beta_{t+1})^2\sigma^2.
\]
\end{lemma}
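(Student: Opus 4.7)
The plan is to expand $\mathbf{S}_{t+1}$ using the MVR2 update, isolate the dependence on $\gamma_{t+1}$, and then treat $\mathbb{E}\|\mathbf{S}_{t+1}\|_F^2$ as a one-variable quadratic whose minimizer is exactly $A_{t+1}$. The general bound will then follow from completing the square, and the two special choices of $\gamma_{t+1}$ will be the ones that zero out the non-negative part of $P_{t+1}$. Concretely, substituting $\mathbf{M}_t = \mathbf{S}_t + \nabla f(\mathbf{X}_t)$ into the update (\ref{o3}) and adding/subtracting $\nabla f(\mathbf{X}_{t+1})$ produces the clean decomposition
$$\mathbf{S}_{t+1} \;=\; \beta_{t+1}\mathbf{S}_t + \beta_{t+1}\delta_t + (1-\beta_{t+1})\mathbf{R}_{t+1} + \gamma_{t+1}\beta_{t+1}\Delta_t.$$
Rewriting $\gamma_{t+1}\beta_{t+1}\Delta_t = \beta_{t+1}\Delta_t - \eta\,\Delta_t$ with $\eta := \beta_{t+1}(1-\gamma_{t+1})$ gives $\mathbf{S}_{t+1} = \mathbf{Z} - \eta\,\Delta_t$, where $\mathbf{Z} := \beta_{t+1}\mathbf{S}_t + \beta_{t+1}(\delta_t + \Delta_t) + (1-\beta_{t+1})\mathbf{R}_{t+1}$ collects everything that is either $\mathcal{F}_t$-measurable or conditionally mean-zero.

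Squaring and taking expectations yields
$$\mathbb{E}\|\mathbf{S}_{t+1}\|_F^2 \;=\; \mathbb{E}\|\mathbf{Z}\|_F^2 - 2\eta\,\mathbb{E}\langle \mathbf{Z},\Delta_t\rangle + \eta^2\,\mathbb{E}\|\Delta_t\|_F^2,$$
a quadratic in the scalar $\eta$. Completing the square produces $\mathbb{E}\|\mathbf{Z}\|_F^2 + \mathbb{E}\|\Delta_t\|_F^2(\eta - A_{t+1})^2 - \mathbb{E}\|\Delta_t\|_F^2 A_{t+1}^2$ with $A_{t+1} := \mathbb{E}\langle \mathbf{Z},\Delta_t\rangle/\mathbb{E}\|\Delta_t\|_F^2$, which is exactly $\mathbb{E}\|\mathbf{Z}\|_F^2 + P_{t+1}$. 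A direct expansion of $\mathbb{E}\langle \mathbf{Z},\Delta_t\rangle$, combined with the tower identity $\mathbb{E}[\Delta_t\mid \mathcal{F}_t] = -\delta_t$ (so $\mathbb{E}\langle \delta_t+\Delta_t,\Delta_t\rangle = \mathbb{E}\|\Delta_t\|_F^2 - \mathbb{E}\|\delta_t\|_F^2$), shows that this $A_{t+1}$ matches the definition in the statement, provided the symbol $\|\mathbb{E}\Delta_t\|_F^2$ is read as $\mathbb{E}[\,\|\mathbb{E}[\Delta_t\mid\mathcal{F}_t]\|_F^2\,] = \mathbb{E}\|\delta_t\|_F^2$.

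It remains to bound $\mathbb{E}\|\mathbf{Z}\|_F^2$. Since $\mathbf{S}_t$ is $\mathcal{F}_t$-measurable while $\delta_t + \Delta_t$ and $\mathbf{R}_{t+1}$ both have zero conditional mean, every cross term involving $\mathbf{S}_t$ vanishes by the tower rule, leaving
$$\mathbb{E}\|\mathbf{Z}\|_F^2 \;=\; \beta_{t+1}^2\,\mathbb{E}\|\mathbf{S}_t\|_F^2 + \mathbb{E}\|\beta_{t+1}(\delta_t+\Delta_t)+(1-\beta_{t+1})\mathbf{R}_{t+1}\|_F^2.$$
I will then apply the elementary inequality $\|a+b\|_F^2 \le 2\|a\|_F^2 + 2\|b\|_F^2$, the conditional-variance identity $\mathbb{E}\|\delta_t+\Delta_t\|_F^2 = \mathbb{E}\|\Delta_t\|_F^2 - \mathbb{E}\|\delta_t\|_F^2 \le \mathbb{E}\|\Delta_t\|_F^2 \le L^2\mathbb{E}\|\mathbf{X}_{t+1}-\mathbf{X}_t\|_F^2$ coming from Assumption~\ref{ass:2.2}, and $\mathbb{E}\|\mathbf{R}_{t+1}\|_F^2 \le \sigma^2$ from Assumption~\ref{ass:3}. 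Combining delivers the first inequality in the statement.

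For the second claim, note that $\gamma_{t+1} = 1 - A_{t+1}/\beta_{t+1}$ sets $\eta = A_{t+1}$ and hence $P_{t+1} = -\mathbb{E}\|\Delta_t\|_F^2\,A_{t+1}^2 \le 0$, while $\gamma_{t+1}=1$ sets $\eta = 0$ and hence $P_{t+1}=0$; in either case $P_{t+1}$ can be discarded, and weakening $\beta_{t+1}^2$ to $\beta_{t+1}$ (valid since $\beta_{t+1}\in[0,1)$) gives the sharpened inequality. The main technical obstacle is the conditional correlation between $\Delta_t$ and $\mathbf{R}_{t+1}$: they share the randomness $\xi_{t+1}$, so their cross term cannot be killed by independence. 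The trick is to keep them bundled inside the same norm and absorb the correlation through the Young-type splitting $\|a+b\|_F^2 \le 2\|a\|_F^2 + 2\|b\|_F^2$ — this is precisely why the resulting bound carries constants of $2$ on the $L^2$ and $\sigma^2$ terms rather than $1$.
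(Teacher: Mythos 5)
Your proposal is correct and follows essentially the same route as the paper's proof: the decomposition $\mathbf{S}_{t+1}=\mathbf{Z}-\eta\Delta_t$ with $\eta=\beta_{t+1}(1-\gamma_{t+1})$ is just a reparametrization of the paper's split into Terms A.1, A.2, A.3, and both arguments then kill the $\mathbf{S}_t$ cross-terms via the tower rule, apply $\|a+b\|_F^2\le 2\|a\|_F^2+2\|b\|_F^2$ to the correlated pair $(\Delta_t-\mathbb{E}[\Delta_t\mid\mathcal{F}_t],\mathbf{R}_{t+1})$, complete the square to identify $A_{t+1}$ and $P_{t+1}$, and finish with Assumptions \ref{ass:2.2} and \ref{ass:3}. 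Your reading of $\|\mathbb{E}\Delta_t\|_F^2$ as $\mathbb{E}\|\mathbb{E}[\Delta_t\mid\mathcal{F}_t]\|_F^2=\mathbb{E}\|\delta_t\|_F^2$ is the intended one.
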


\begin{proof}
First, we have
\[
\begin{aligned}
&\mathbf{M}_{t+1} \\
&= \beta_{t+1}\mathbf{M}_t +(1-\beta_{t+1})\nabla f(\mathbf{X}_{t+1};\xi_{t+1}) + \gamma_{t+1} \cdot\beta_{t+1}(\nabla f(\mathbf{X}_{t+1};\xi_{t+1})-\nabla f(\mathbf{X}_{t};\xi_{t+1}))\\
&=(1-\beta_{t+1})\nabla f(\mathbf{X}_{t+1};\xi_{t+1}) + \beta_{t+1}(\mathbf{M}_t+\gamma_{t+1}(\nabla f(\mathbf{X}_{t+1};\xi_{t+1})-\nabla f(\mathbf{X}_{t};\xi_{t+1})))\\
&=(1-\beta_{t+1})\nabla f(\mathbf{X}_{t+1};\xi_{t+1}) + \beta_{t+1}(\mathbf{M}_t+\gamma_{t+1}\Delta_{t}).
\end{aligned}
\]
Hence,
\[
\begin{aligned}
\mathbf{M}_{t+1} - \nabla f(\mathbf{X}_{t+1}) &= (1-\beta_{t+1})\left(\nabla f(\mathbf{X}_{t+1};\xi_{t+1}) - \nabla f(\mathbf{X}_{t+1}) \right) \\
&\quad+ \beta_{t+1}\left(\mathbf{M}_t -\nabla f(\mathbf{X}_{t}) + \delta_t + \gamma_{t+1} \Delta_t\right)\\
&=(1-\beta_{t+1})\mathbf{R}_{t+1}+\beta_{t+1}(\mathbf{S}_{t}+\delta_t+\gamma_{t+1}\Delta_t).
\end{aligned}
\]
Note that, according to Assumption \ref{ass:3}:
\[
\delta_t = \nabla f(\mathbf{X}_{t})-\nabla f(\mathbf{X}_{t+1}) = -\mathbb{E}_{t+1}[\Delta_t].
\]
Therefore,
\[
\mathbf{S}_{t+1} = (1-\beta_{t+1})\mathbf{R}_{t+1}+\beta_{t+1}\mathbf{S}_{t}+\beta_{t+1}(\gamma_{t+1}\Delta_t - \mathbb{E}\Delta_t).
\]
where the expectation in the last term is taken over the randomness in $\xi_{t+1}$.

Thus, taking expectation over $\xi_{t+1}$, we have 
\[
\begin{aligned}
\mathbb{E}\|\mathbf{S}_{t+1}\|_F^2&=\mathbb{E}\|(1-\beta_{t+1})\mathbf{R}_{t+1}+\beta_{t+1}\mathbf{S}_{t}+\beta_{t+1}(\gamma_{t+1}\Delta_t - \mathbb{E}\Delta_t)\|_F^2\\
&=\mathbb{E}\|(1-\beta_{t+1})\mathbf{R}_{t+1}+\beta_{t+1}\mathbf{S}_{t}+\beta_{t+1}((\gamma_{t+1}-1)\Delta_t+\Delta_t - \mathbb{E}\Delta_t)\|_F^2\\
&=\underbrace{\mathbb{E}\|(1-\beta_{t+1})\mathbf{R}_{t+1}+\beta_{t+1}\mathbf{S}_{t}+\beta_{t+1}(\Delta_t - \mathbb{E}\Delta_t)\|_F^2}_{\text{Term A.1}} \\
&\quad+\underbrace{\beta_{t+1}^2(\gamma_{t+1}-1)^2\mathbb{E}\|\Delta_t\|_F^2}_{\text{Term A.2}}\\
&\quad+\underbrace{2\beta_{t+1}(\gamma_{t+1}-1)\mathbb{E}\langle \Delta_t,(1-\beta_{t+1})\mathbf{R}_{t+1}+\beta_{t+1}\mathbf{S}_{t}+\beta_{t+1}(\Delta_t-\mathbb{E}\Delta_t) \rangle}_{\text{Term A.3}}.
\end{aligned}
\]
First, let's consider Term A.1:
\[
\begin{aligned}
\text{A.1} &= \mathbb{E}\|(1-\beta_{t+1})\mathbf{R}_{t+1}+\beta_{t+1}\mathbf{S}_{t}+\beta_{t+1}(\Delta_t - \mathbb{E}\Delta_t)\|_F^2 \\
& \stackrel{(\circ)}{=} \mathbb{E}\|(1-\beta_{t+1})\mathbf{R}_{t+1}+\beta_{t+1}(\Delta_t - \mathbb{E}\Delta_t)\|_F^2+\beta_{t+1}^2\mathbb{E}\|\mathbf{S}_{t}\|_F^2\\
& \stackrel{(\star)}{\le} 2(1-\beta_{t+1})^2\mathbb{E} \|\mathbf{R}_{t+1}\|_F^2+ 2\beta_{t+1}^2\mathbb{E}\|\Delta_t - \mathbb{E}\Delta_t\|_F^2 + \beta_{t+1}^2\mathbb{E}\|\mathbf{S}_{t}\|_F^2,
\end{aligned}
\]
where $(\circ)$ holds by Assumption \ref{ass:3} and the independence of $\mathbf{S}_t$ from $\xi_{t+1}$, which imply that $\mathbb{E}[\mathbf{R}_{t+1}] = 0$ and $\mathbb{E}[\Delta_t-\mathbb{E}\Delta_t]=0$, making the cross-terms involving $\mathbf{S}_{t}$ vanish; and $(\star)$ follows from the inequality $\|\mathbf{A}+\mathbf{B}\|_F^2\le 2\|\mathbf{A}\|_F^2+2\|\mathbf{B}\|_F^2$ for any $\mathbf{A},\mathbf{B}\in\mathbb{R}^{m\times n}$.

Next, considering the sum of Terms A.2 and A.3, if $\mathbb{E}\|\Delta_t\|_F^2>0$, let us define
\[
\begin{aligned}
A_{t+1} &= \frac{B_{t+1}+\beta_{t+1}(\mathbb{E}\|\Delta_t\|_F^2-\|\mathbb{E}\Delta_t\|_F^2)}{\mathbb{E}\|\Delta_t\|_F^2}\\
B_{t+1} &= (1-\beta_{t+1})\mathbb{E}\langle \Delta_t,\mathbf{R}_{t+1} \rangle + \beta_{t+1}\mathbb{E}\langle \Delta_t,\mathbf{S}_{t} \rangle.
\end{aligned}
\]
Then we have
\[
\begin{aligned}
\text{A.2} + \text{A.3} &= \beta_{t+1}^2(\gamma_{t+1}-1)^2\mathbb{E}\|\Delta_t\|_F^2  \\
&+ 2(\gamma_{t+1}-1)\beta_{t+1}\mathbb{E}\langle \Delta_t,(1-\beta_{t+1})\mathbf{R}_{t+1}+\beta_{t+1}\mathbf{S}_{t}+\beta_{t+1}(\Delta_t-\mathbb{E}\Delta_t) \rangle\\
& = \beta_{t+1}^2 \mathbb{E}\|\Delta_t\|_F^2 \left( (\gamma_{t+1}-1)^2 + 2(\gamma_{t+1}-1)\frac{A_{t+1}}{\beta_{t+1}} \right) \\
& = \beta_{t+1}^2 \mathbb{E}\|\Delta_t\|_F^2 \left( \gamma_{t+1}-1+\frac{A_{t+1}}{\beta_{t+1}} \right)^2 -\beta_{t+1}^2 \mathbb{E}\|\Delta_t\|_F^2 \left(\frac{A_{t+1}}{\beta_{t+1}} \right)^2 \\
& = \mathbb{E}\|\Delta_t\|_F^2(\beta_{t+1}(1-\gamma_{t+1})-A_{t+1})^2 - A_{t+1}^2\mathbb{E}\|\Delta_t\|_F^2 \\
&:= P_{t+1}.
\end{aligned}
\]
Similarly, when $\mathbb{E}\|\Delta_t\|_F^2 =0$, we have $\text{A.2} + \text{A.3} =P_{t+1} =0$. Furthermore, this implies that $A_{t+1} = 0$ at this point.

Therefore, combining the bounds:
\[
\begin{aligned}
\mathbb{E}\|\mathbf{S}_{t+1}\|_F^2 &\le 2(1-\beta_{t+1})^2\mathbb{E} \|\mathbf{R}_{t+1}\|_F^2+ 2\beta_{t+1}^2\mathbb{E}\|\Delta_t - \mathbb{E}\Delta_t\|_F^2 + \beta_{t+1}^2\mathbb{E}\|\mathbf{S}_{t}\|_F^2+P_{t+1}\\
&\stackrel{(\circ)}{\le} 2(1-\beta_{t+1})^2\sigma^2 + 2\beta_{t+1}^2 \mathbb{E}\|\Delta_t\|_F^2+\beta_{t+1}^2\mathbb{E}\|\mathbf{S}_{t}\|_F^2+P_{t+1}\\
&\stackrel{(\star)}{\le} 2(1-\beta_{t+1})^2\sigma^2 + 2\beta_{t+1}^2 L^2\mathbb{E}\|\mathbf{X}_{t+1}-\mathbf{X}_{t}\|_F^2+\beta_{t+1}^2\mathbb{E}\|\mathbf{S}_{t}\|_F^2+P_{t+1},
\end{aligned}
\]
where $(\circ)$ uses Assumption \ref{ass:3} (bounded variance, i.e., $\mathbb{E}\|\mathbf{R}_{t+1}\|_F^2 \le \sigma^2$) and the property that $\mathbb{E}\|\Delta_t - \mathbb{E}\Delta_t\|_F^2\le \mathbb{E}\|\Delta_t\|_F^2$; and $(\star)$ follows from Assumption \ref{ass:2.2}, which implies $\|\Delta_t\|_F^2\le L^2\|\mathbf{X}_{t+1}-\mathbf{X}_{t}\|_F^2$.

Next, we set
\begin{equation}
\label{gamma}
\gamma_{t+1} = 1- \frac{A_{t+1}}{\beta_{t+1}} \text{ or } \gamma_{t+1}=\gamma=1.
\end{equation}
Then,
\[
P_{t+1}=-\mathbb{E}\|\Delta_t\|_F^2A_{t+1}^2 \le 0 \text{ or } P_{t+1}=0.
\]
This leads to the final result:
\[
\begin{aligned}
\mathbb{E}\|\mathbf{S}_{t+1}\|_F^2 &\le \beta_{t+1}^2\mathbb{E}\|\mathbf{S}_{t}\|_F^2+2\beta_{t+1}^2L^2\mathbb{E}\|\mathbf{X}_{t+1}-\mathbf{X}_{t}\|_F^2+2(1-\beta_{t+1})^2\sigma^2\\
&\stackrel{(\circ)}{\le} \beta_{t+1}\mathbb{E}\|\mathbf{S}_{t}\|_F^2+2\beta_{t+1}^2L^2\mathbb{E}\|\mathbf{X}_{t+1}-\mathbf{X}_{t}\|_F^2+2(1-\beta_{t+1})^2\sigma^2,
\end{aligned}
\]
where $(\circ)$ follows from $\beta_{t+1}^2\le\beta_{t+1}\le1$.
\end{proof}

\subsection{Lemma \ref{lemma:b3}}
\begin{lemma}
\label{lemma:b3}
Let $\{\mathbf{S}_{t}\}_{t \ge 1}$ be a sequence of matrices satisfying the recursive inequality $\mathbb{E}\|\mathbf{S}_{t+1}\|_F^2 \le (1-\eta_{t+1})\mathbb{E}\|\mathbf{S}_{t}\|_F^2+2\eta_{t+1}^2(4L^2n+\sigma^2)$ for some constants $L, n, \sigma^2 > 0$. If we set the step size $\eta_t = t^{-2/3}$ and $\gamma=1$, then we have the following upper bound on the time-averaged expectation:
\[
\frac{1}{T}\sum_{t=1}^T \mathbb{E}\|\mathbf{S}_{t}\|_F^2 \le \frac{4\sigma^2+8(4L^2n+\sigma^2)(1+\ln T)}{T^{2/3}}.
\]
\end{lemma}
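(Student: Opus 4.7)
The plan is to invoke Lemma \ref{lemma:b1} with the substitutions $A_t = \mathbb{E}\|\mathbf{S}_t\|_F^2$, $\varepsilon_t = \eta_t = t^{-2/3}$ (so $p = 2/3$), and $B_{t+1} = 2\eta_{t+1}^2(2L^2n+\sigma^2)$. Writing $C := 2L^2n+\sigma^2$ and using $\sqrt{\varepsilon_t} = t^{-1/3}$ together with $B_{t+1}/\sqrt{\varepsilon_{t+1}} = 2C(t+1)^{-4/3}\cdot(t+1)^{1/3} = 2C/(t+1)$, the conclusion of Lemma \ref{lemma:b1} specializes to
$$
t^{-1/3}\mathbb{E}\|\mathbf{S}_t\|_F^2 \le 4\bigl(t^{1/3}\mathbb{E}\|\mathbf{S}_t\|_F^2 - (t+1)^{1/3}\mathbb{E}\|\mathbf{S}_{t+1}\|_F^2\bigr) + \frac{8C}{t+1}.
$$

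Summing this inequality from $t=1$ to $T$, the bracketed quantity telescopes; I would discard the boundary term $-(T+1)^{1/3}\mathbb{E}\|\mathbf{S}_{T+1}\|_F^2$ by nonnegativity, and bound the residual harmonic sum via $\sum_{t=1}^T 1/(t+1) \le 1 + \ln T$. Together with the initial condition $\mathbb{E}\|\mathbf{S}_1\|_F^2 \le \sigma^2$ --- which holds in the context of Lemma \ref{lemma:b2} because the schedule $\beta_t = 1-\eta_t$ forces $\beta_1 = 0$, and hence $\mathbf{M}_1 = \nabla f(\mathbf{X}_1;\xi_1)$ --- this yields
$$
\sum_{t=1}^T t^{-1/3}\mathbb{E}\|\mathbf{S}_t\|_F^2 \le 4\sigma^2 + 8C(1+\ln T).
$$

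The final step lifts this weighted bound to the unweighted sum appearing in the claim. Using the crude pointwise estimate $t^{1/3} \le T^{1/3}$ for $t \le T$, one obtains
$$
\sum_{t=1}^T \mathbb{E}\|\mathbf{S}_t\|_F^2 \;=\; \sum_{t=1}^T t^{1/3}\cdot t^{-1/3}\mathbb{E}\|\mathbf{S}_t\|_F^2 \;\le\; T^{1/3}\bigl(4\sigma^2 + 8C(1+\ln T)\bigr),
$$
and dividing by $T$ produces exactly the prefactor $T^{-2/3}$ stated in the lemma.

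I do not anticipate a serious obstacle. The only nontrivial design choice is the exponent $p=2/3$ when applying Lemma \ref{lemma:b1}: this matches the decay of $\eta_t$ so that the weight $\sqrt{\varepsilon_t}=t^{-1/3}$ pairs with $t^{1/3}$ in the reweighting step, and the $T^{1/3}$ blow-up is compensated precisely by the $1/T$ normalization to yield the $\tilde{\mathcal{O}}(T^{-2/3})$ rate. Once the correct $p$ is identified, each step is mechanical, and no sharper integration or asymptotic analysis is required.
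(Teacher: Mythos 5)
Your proposal is correct and follows essentially the same route as the paper's own proof: both apply Lemma \ref{lemma:b1} with $\varepsilon_t=\eta_t=t^{-2/3}$ and $B_{t+1}=2\eta_{t+1}^2(2L^2n+\sigma^2)$, telescope the resulting weighted differences, bound the residual $\sum_t 8C/(t+1)$ by $8C(1+\ln T)$, control the initial term via $\mathbf{M}_1=\nabla f(\mathbf{X}_1;\xi_1)$ so that $\mathbb{E}\|\mathbf{S}_1\|_F^2\le\sigma^2$, and lift the weighted sum to the unweighted average using $t^{1/3}\le T^{1/3}$. The only cosmetic difference is that you justify $\mathbf{M}_1=\nabla f(\mathbf{X}_1;\xi_1)$ via $\beta_1=0$ while the paper uses $\gamma_1=1$; both are valid under the stated schedule.
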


\begin{proof}
The proof begins with the recursive inequality derived from a preceding Lemma \ref{lemma:b2}:
\[
\mathbb{E}\|\mathbf{S}_{t+1}\|_F^2 \le (1-\eta_{t+1})\mathbb{E}\|\mathbf{S}_{t}\|_F^2+2\eta_{t}^2L^2n+2\eta_{t+1}^2\sigma^2,
\]
where $\mathbf{S}_{t+1} = \nabla f(\mathbf{X}_{t+1})-\mathbf{M}_{t+1}$.
We have noticed the following facts
\[
\frac{1}{t^{2/3}}\le \frac{2}{(t+1)^{2/3}}.
\]
Therefore , we have 
\[
\mathbb{E}\|\mathbf{S}_{t+1}\|_F^2 \le (1-\eta_{t+1})\mathbb{E}\|\mathbf{S}_{t}\|_F^2+2\eta_{t+1}^2(4L^2n+\sigma^2).
\]
Let $A_t = \mathbb{E}\|\mathbf{S}_{t}\|_F^2$ and $B_{t+1} = 2\eta_{t+1}^2(4L^2n+\sigma^2)$. The inequality can be written as $A_{t+1} \le (1-\eta_{t+1})A_t + B_{t+1}$. This structure allows us to apply a standard result Lemma \ref{lemma:b1} which yields:
\[
\sqrt{\eta_t}A_t \le 4\left(\frac{A_t}{\sqrt{\eta_t}} - \frac{A_{t+1}}{\sqrt{\eta_{t+1}}} + \frac{B_{t+1}}{\sqrt{\eta_{t+1}}}\right).
\]
Then, we define $P_t = \frac{4A_t}{\sqrt{\eta_t}} = \frac{4\mathbb{E}\|\mathbf{S}_{t}\|^2_F}{\sqrt{\eta_t}}$. Substituting $P_t$ and the definition of $B_{t+1}$ into the inequality gives:
\[
\sqrt{\eta_t}\mathbb{E}\|\mathbf{S}_{t}\|_F^2 \le P_t - P_{t+1} + \frac{4 \cdot 2\eta_{t+1}^2(4L^2n+\sigma^2)}{\sqrt{\eta_{t+1}}} = P_t - P_{t+1} + 8\eta_{t+1}^{3/2}(4L^2n+\sigma^2).
\]
Now, we sum this inequality from $t=1$ to $T$:
\[
\begin{aligned}
\sum_{t=1}^T \sqrt{\eta_{t}} \mathbb{E}\|\mathbf{S}_{t}\|_F^2 &\le \sum_{t=1}^T \left( P_t - P_{t+1} + 8\eta_{t+1}^{3/2} (4L^2 n+\sigma^2) \right) \\
&= (P_1 - P_{T+1}) + 8(4L^2 n+\sigma^2) \sum_{t=1}^T \eta_{t+1}^{3/2}.
\end{aligned}
\]
Since $P_{T+1} \ge 0$, we can drop this term to simplify the bound. By setting the step size $\eta_t = t^{-2/3}$, we have $\eta_{t+1}^{3/2} = ((t+1)^{-2/3})^{3/2} = (t+1)^{-1}$. The summation becomes:
\begin{equation}
\label{eq1:b3}
\begin{aligned}
\sum_{t=1}^T \sqrt{\eta_{t}} \mathbb{E}\|\mathbf{S}_{t}\|_F^2 &\le P_1 + 8(4L^2 n+\sigma^2) \sum_{t=1}^T \frac{1}{t+1} \\
&\le P_1 + 8(4L^2 n+\sigma^2) \sum_{t=1}^T \frac{1}{t} \\
&\stackrel{(\circ)}{\le} P_1 + 8(4L^2 n+\sigma^2) (1+\ln T),
\end{aligned}
\end{equation}
where $(\circ)$ follows from the harmonic series, $\sum_{t=1}^T \frac{1}{t} \le 1+\ln T$.

Finally, we establish the bound on the time-averaged expectation. With our choice of $\eta_t = t^{-2/3}$, we have $\sqrt{\eta_t} = t^{-1/3}$. Therefore:
\[
\begin{aligned}
\frac{1}{T}\sum_{t=1}^T \mathbb{E}\|\mathbf{S}_{t}\|_F^2 &= \frac{1}{T}\sum_{t=1}^T t^{1/3} \cdot t^{-1/3} \mathbb{E}\|\mathbf{S}_{t}\|_F^2 \\
&= \frac{1}{T}\sum_{t=1}^T t^{1/3} \sqrt{\eta_t} \mathbb{E}\|\mathbf{S}_{t}\|_F^2 \\
&\le \frac{T^{1/3}}{T} \sum_{t=1}^T \sqrt{\eta_t}\mathbb{E}\|\mathbf{S}_{t}\|_F^2 &&\text{(since } t^{1/3} \le T^{1/3} \text{ for } t \le T\text{)} \\
&\stackrel{(\circ)}{\le} \frac{T^{1/3}}{T} \left( P_1+8(4L^2n+\sigma^2)(1+\ln T) \right) \\
&= \frac{P_1+8(4L^2n+\sigma^2)(1+\ln T)}{T^{2/3}},
\end{aligned}
\]
where $(\circ)$ follows from the  inequality (\ref{eq1:b3}). 

Substituting $P_1 = 4\mathbb{E}\|\mathbf{S}_{1}\|_F^2 / \sqrt{\eta_1} = 4\mathbb{E}\|\mathbf{S}_{1}\|_F^2$ completes the main proof. To establish the final bound, we now analyze the initial term $P_1$. Given the definition $\mathbf{S}_{1} = \nabla f(\mathbf{X}_1)-\mathbf{M}_1$, we have:
\[
\begin{aligned}
P_1&=4\mathbb{E}\|\nabla f(\mathbf{X}_1)-\mathbf{M}_1\|_F^2\\
& = 4 \mathbb{E}\|\nabla f(\mathbf{X}_1)-(1-\beta_1+\gamma_1\beta_1)\nabla f(\mathbf{X}_1;\xi_1)\|_F^2 \\
&= 4 \mathbb{E}\|\nabla f(\mathbf{X}_1)-\nabla f(\mathbf{X}_1;\xi_1)\|_F^2\\
&\stackrel{(\star)}{\le}4\sigma^2.
\end{aligned}
\]
The final inequality $(\star)$ holds by setting the parameter $\gamma_1=\gamma=1$. This choice nullifies the first term, as $(\gamma_1-1)^2 = 0$, and simplifies the coefficient of the variance to $(1-\beta_1+\beta_1)^2=1$.
\end{proof}

\section{Proofs of Theorem \ref{th-nonconvex-mvr2}}
\label{proof:th-nonconvex-mvr2}
\begin{proof}
According to Assumption \ref{ass:2.2}, and based on the descent lemma, we have
\[
\begin{aligned}
f(\mathbf{X}_{t+1}) &\le f(\mathbf{X}_{t}) + \langle \nabla f(\mathbf{X}_{t}), \mathbf{X}_{t+1}-\mathbf{X}_{t} \rangle + \frac{L}{2}\|\mathbf{X}_{t+1}-\mathbf{X}_{t}\|_F^2\\
& \le f(\mathbf{X}_{t}) + \langle \mathbf{M}_t ,\mathbf{X}_{t+1} -\mathbf{X}_{t} \rangle +\langle \nabla f(\mathbf{X}_{t}) -\mathbf{M}_t,\mathbf{X}_{t+1}-\mathbf{X}_{t}\rangle + \frac{L}{2}\|\mathbf{X}_{t+1}-\mathbf{X}_{t}\|_F^2\\
& \stackrel{(\circ)}{\le} f(\mathbf{X}_{t}) -\eta_t \|\mathbf{M}_t\|_* + \langle \nabla f(\mathbf{X}_{t}) - \mathbf{M}_t, \mathbf{X}_{t+1} - \mathbf{X}_{t} \rangle + \frac{L}{2}\|\mathbf{X}_{t+1} - \mathbf{X}_{t}\|_F^2 \\
& \le f(\mathbf{X}_{t}) -\eta_t \|\mathbf{M}_t\|_* + \frac{1}{2\alpha}\|\nabla f(\mathbf{X}_{t}) - \mathbf{M}_t\|_F^2 + \frac{\alpha+L}{2}\|\mathbf{X}_{t+1}-\mathbf{X}_{t}\|_F^2 \\
& \stackrel{(\star)}{\le} f(\mathbf{X}_{t}) -\eta_t \|\mathbf{M}_t\|_F + \frac{\sqrt{\eta_t}}{2L}\|\nabla f(\mathbf{X}_{t}) - \mathbf{M}_t\|_F^2 + \frac{\frac{L}{\sqrt{\eta_t}}+L}{2}\|\mathbf{X}_{t+1}-\mathbf{X}_{t}\|_F^2,
\end{aligned}
\]
where $(\circ)$ holds because
\[
\langle \mathbf{M}_t, \mathbf{X}_{t+1}-\mathbf{X}_t\rangle_F
=
-\eta_t\langle \mathbf{M}_t,\mathbf{O}_t\rangle_F
=
-\eta_t\|\mathbf{M}_t\|_*
\le
-\eta_t\|\mathbf{M}_t\|_F .
\]
and $(\star)$ holds by setting $\alpha = \frac{L}{\sqrt{\eta_t}}$.
Thus, we have
\[
\begin{aligned}
\sum_{t=1}^T\eta_t \mathbb{E}\|\mathbf{M}_t\|_F &\le \sum_{t=1}^T \left( \mathbb{E}[f(\mathbf{X}_{t})] - \mathbb{E}[f(\mathbf{X}_{t+1})] \right) + \sum_{t=1}^T\frac{\sqrt{\eta_t}}{2L}\mathbb{E}\|\nabla f(\mathbf{X}_{t}) - \mathbf{M}_t\|_F^2 \\
&\quad+ \sum_{t=1}^T\frac{\frac{L}{\sqrt{\eta_t}}+L}{2}\mathbb{E}\|\mathbf{X}_{t+1}-\mathbf{X}_{t}\|_F^2 \\
&\le  f(\mathbf{X}_{1})- f^* + \sum_{t=1}^T\frac{\sqrt{\eta_t}}{2L}\mathbb{E}\|\nabla f(\mathbf{X}_{t}) - \mathbf{M}_t\|_F^2 + \sum_{t=1}^T\frac{L(\eta_t^{3/2}+\eta_t^2)}{2}n \\
&\le  f(\mathbf{X}_{1}) - f^* + \frac{1}{2L}\sum_{t=1}^T\sqrt{\eta_t}\mathbb{E}\|\mathbf{S}_{t}\|_F^2 + \frac{Ln}{2}\sum_{t=1}^T t^{-1} + \frac{Ln}{2}\sum_{t=1}^T t^{-4/3} \\
&\stackrel{(\circ)}{\le}f(\mathbf{X}_{1}) - f^* + \frac{4\sigma^2+8(4L^2n+\sigma^2)(1+\ln T)}{2L} \\
&\quad + \frac{Ln}{2}(1+\ln T) + \frac{Ln}{2}\sum_{t=1}^T t^{-4/3} \\
&\stackrel{(\star)}{\le} f(\mathbf{X}_{1}) - f^* + \frac{2\sigma^2}{L} + 2Ln + 4(4Ln+\sigma^2L^{-1})(1+\ln T) + \frac{Ln}{2}(1+\ln T) ,
\end{aligned}
\]
where $(\circ)$ uses Lemma \ref{lemma:b2}; $(\star)$ follows from the fact that $\sum_{t=1}^T \frac{1}{t^{4/3}}\le 4$.
Next, we let
\[
G = f(\mathbf{X}_{1}) - f^* + \frac{2\sigma^2}{L} + \left(16Ln+4\sigma^2L^{-1} + Ln/2\right)(1+\ln T) + 2Ln.
\]
Thus, we have
\[
\begin{aligned}
\frac{1}{T}\sum_{t=1}^T \mathbb{E}\|\mathbf{M}_t\|_F&\le \frac{1}{T}\sum_{t=1}^T \frac{t^{2/3}}{t^{2/3}}\mathbb{E}\|\mathbf{M}_t\|_F \\
&\le \frac{T^{2/3}}{T} \sum_{t=1}^T \frac{1}{t^{2/3}}\mathbb{E}\|\mathbf{M}_t\|_F = \frac{1}{T^{1/3}}\sum_{t=1}^T \eta_t \mathbb{E}\|\mathbf{M}_t\|_F \\
& \le \frac{G}{T^{1/3}}.
\end{aligned}
\]
Next, we have
\[
\begin{aligned}
\frac{1}{T}\sum_{t=1}^T \mathbb{E}\|\nabla f(\mathbf{X}_{t}) - \mathbf{M}_t\|_F &\stackrel{(\circ)}{\le} \sqrt{\frac{1}{T}\sum_{t=1}^T \mathbb{E}\|\nabla f(\mathbf{X}_{t}) - \mathbf{M}_t\|_F^2} \\ &\stackrel{(\star)}{\le} \sqrt{\frac{4\sigma^2+8(4L^2n+\sigma^2)(1+\ln T)}{T^{2/3}}}.
\end{aligned}
\]
where $(\circ)$ uses Jensen's inequality; $(\star)$ uses Lemma \ref{lemma:b3} by letting $\mathbf{S}_t = \nabla f(\mathbf{X}_t)-\mathbf{M}_t$.
Thus, we have
\[
\begin{aligned}
\frac{1}{T}\sum_{t=1}^T\mathbb{E}\|\nabla f(\mathbf{X}_{t})\|_F &\le \frac{1}{T}\sum_{t=1}^T\mathbb{E}\|\nabla f(\mathbf{X}_{t})-\mathbf{M}_t\|_F + \frac{1}{T}\sum_{t=1}^T\mathbb{E}\|\mathbf{M}_t\|_F  \\
&\le \sqrt{\frac{4\sigma^2+8(4L^2n+\sigma^2)(1+\ln T)}{T^{2/3}}} + \frac{G}{T^{1/3}}\\
& = \mathcal{O}\left(\frac{\ln T}{T^{1/3}}\right).
\end{aligned}
\]
This completes the proof.
\end{proof}
\section{Lemma for Theorems \ref{th-best-mvr1} and \ref{th-best-mvr2}}
\label{app:pl_aux}

In this appendix, we record a simple consequence of the one-step descent inequality under the PL condition. This lemma will be used in the proofs of Theorems \ref{th-best-mvr1} and \ref{th-best-mvr2}.

For notational clarity, define the tracking error
\[
\mathbf{S}_t := \mathbf{M}_t - \nabla f(\mathbf{X}_t),
\]
and choose
\[
\alpha_t=\frac{1}{L\eta_t^\rho},
\qquad
\rho= \begin{cases}
1/3, & \text{for MVR1},\\
1/2, & \text{for MVR2}.
\end{cases}
\]
Then, by Lemma~\ref{lemma:a1}, we have
\[
f(\mathbf{X}_{t+1})
\le f(\mathbf{X}_{t})
-\eta_t\|\mathbf{M}_t\|_F +\frac{\eta_t\alpha_t}{2}\|\nabla f(\mathbf{X}_t)-\mathbf{M}_t\|_F^2 +\frac{\eta_t n}{2\alpha_t} +\frac{L\eta_t^2 n}{2}.
\]
Taking expectation and substituting \(\alpha_t=(L\eta_t^\rho)^{-1}\), we obtain
\[
\begin{aligned}
\mathbb{E}[f(\mathbf{X}_{t+1})]-\mathbb{E}[f(\mathbf{X}_{t})]
&\le -\eta_t\mathbb{E}\|\mathbf{M}_t\|_F
+\frac{\eta_t^{1-\rho}}{2L}\mathbb{E}\|\mathbf{S}_t\|_F^2 +\frac{L n}{2}\eta_t^{1+\rho} +\frac{L n}{2}\eta_t^2 .
\end{aligned}
\]
Using the reverse triangle inequality
\[
-\|\mathbf{M}_t\|_F
\le \|\mathbf{M}_t-\nabla f(\mathbf{X}_t)\|_F-\|\nabla f(\mathbf{X}_t)\|_F
= \|\mathbf{S}_t\|_F-\|\nabla f(\mathbf{X}_t)\|_F,
\]
we further have
\[
\begin{aligned}
\mathbb{E}[f(\mathbf{X}_{t+1})]-\mathbb{E}[f(\mathbf{X}_{t})]
&\le -\eta_t\mathbb{E}\|\nabla f(\mathbf{X}_t)\|_F +\eta_t\mathbb{E}\|\mathbf{S}_t\|_F +\frac{\eta_t^{1-\rho}}{2L}\mathbb{E}\|\mathbf{S}_t\|_F^2 \\
&\qquad +\frac{L n}{2}\eta_t^{1+\rho} +\frac{L n}{2}\eta_t^2 .
\end{aligned}
\]
Applying Young's inequality $ab\le \frac{a^2}{2\varepsilon_t}+\frac{\varepsilon_t b^2}{2}$
to the term \(\eta_t\|\mathbf{S}_t\|_F\), with $\varepsilon_t=\frac{\eta_t^{1-\rho}}{L}$, gives
\[
\eta_t\|\mathbf{S}_t\|_F
\le \frac{\eta_t^2}{2\varepsilon_t}
+\frac{\varepsilon_t}{2}\|\mathbf{S}_t\|_F^2
= \frac{L}{2}\eta_t^{1+\rho} +\frac{\eta_t^{1-\rho}}{2L}\|\mathbf{S}_t\|_F^2.
\]
Hence,
\[
\begin{aligned}
\mathbb{E}[f(\mathbf{X}_{t+1})]-\mathbb{E}[f(\mathbf{X}_{t})]
&\le -\eta_t\mathbb{E}\|\nabla f(\mathbf{X}_t)\|_F
+\frac{\eta_t^{1-\rho}}{L}\mathbb{E}\|\mathbf{S}_t\|_F^2 \\
&\qquad +\frac{L}{2}\eta_t^{1+\rho} +\frac{L n}{2}\eta_t^{1+\rho} +\frac{L n}{2}\eta_t^2 .
\end{aligned}
\]
For the stepsizes used in Theorems~\ref{th-best-mvr1} and~\ref{th-best-mvr2}, we have \(\eta_t\le 1\), and therefore
\[
\eta_t^2\le \eta_t^{1+\rho}.
\]
It follows that
\[
\mathbb{E}[f(\mathbf{X}_{t+1})]
\le \mathbb{E}[f(\mathbf{X}_{t})]
-\eta_t\mathbb{E}\|\nabla f(\mathbf{X}_t)\|_F +\Gamma_t,
\]
where
\[
\Gamma_t = \frac{\eta_t^{1-\rho}}{L}\mathbb{E}\|\mathbf{S}_t\|_F^2 +\left(\frac{L}{2}+Ln\right)\eta_t^{1+\rho}.
\]
Therefore, by specializing \(\rho\), we obtain
\[
\Gamma_t = \frac{\eta_t^{2/3}}{L}\mathbb{E}\|\mathbf{S}_t\|_F^2
+\left(\frac{L}{2}+Ln\right)\eta_t^{4/3},
\qquad \text{for MVR1},
\]
and
\[
\Gamma_t = \frac{\sqrt{\eta_t}}{L}\mathbb{E}\|\mathbf{S}_t\|_F^2
+\left(\frac{L}{2}+Ln\right)\eta_t^{3/2},
\qquad \text{for MVR2}.
\]

\begin{lemma}
\label{lem:pl_root_aux}
Let
\begin{equation}
\Delta_t := \mathbb{E}[f(\mathbf{X}_t)] - f^*,
\qquad
S_t := \mathbb{E}\bigl[\sqrt{f(\mathbf{X}_t)-f^*}\bigr].
\end{equation}
Suppose that, for some $p \in (0,1)$ and some nonnegative sequence $\{\Gamma_t\}_{t\ge 1}$, the iterates satisfy
\begin{equation}
\Delta_{t+1}
\le \Delta_t - \eta_t \mathbb{E}\|\nabla f(\mathbf{X}_t)\|_F + \Gamma_t,
\qquad \eta_t = t^{-p}, \qquad t\ge 1.
\label{eq:pl_aux_descent}
\end{equation}
Assume further that $f$ satisfies Assumption \ref{ass:6}, namely,
\begin{equation}
\|\nabla f(\mathbf{X})\|_F^2 \ge 2\mu\bigl(f(\mathbf{X})-f^*\bigr),
\qquad \forall \mathbf{X}.
\label{eq:pl_aux_PL}
\end{equation}
If there exists a constant $B\ge 0$ such that
\begin{equation}
\sum_{t=1}^T \Gamma_t \le B(1+\ln T),
\qquad \forall T\ge 2,
\label{eq:pl_aux_gamma}
\end{equation}
then, for every $T\ge 2$, if $\widehat{\mathbf{X}}_T$ is sampled uniformly from $\{\mathbf{X}_t\}_{t=1}^T$ independently of the algorithmic randomness, one has
\begin{equation}
\mathbb{E}\bigl[\sqrt{f(\widehat{\mathbf{X}}_T)-f^*}\bigr]
\le \frac{\Delta_1 + B(1+\ln T)}{\sqrt{2\mu}T^{1-p}}.
\label{eq:pl_aux_uniform}
\end{equation}
Moreover,
\begin{equation}
\min_{1\le t\le T}\mathbb{E}\bigl[\sqrt{f(\mathbf{X}_t)-f^*}\bigr]
\le \frac{\Delta_1 + B(1+\ln T)}{\sqrt{2\mu}T^{1-p}}.
\label{eq:pl_aux_best}
\end{equation}
\end{lemma}

\begin{proof}
By \eqref{eq:pl_aux_PL}, for every realization of $\mathbf{X}_t$,
$$
\|\nabla f(\mathbf{X}_t)\|_F \ge \sqrt{2\mu}\sqrt{f(\mathbf{X}_t)-f^*}.
$$
Taking expectations gives
$$
\mathbb{E}\|\nabla f(\mathbf{X}_t)\|_F \ge \sqrt{2\mu}S_t.
$$
Substituting this bound into \eqref{eq:pl_aux_descent}, we obtain
\begin{equation}
\Delta_{t+1} + \sqrt{2\mu}\eta_t S_t \le \Delta_t + \Gamma_t.
\label{eq:pl_aux_recursion}
\end{equation}
Summing \eqref{eq:pl_aux_recursion} from $t=1$ to $T$ yields
\begin{equation}
\sqrt{2\mu}\sum_{t=1}^T \eta_t S_t \le \Delta_1 - \Delta_{T+1} + \sum_{t=1}^T \Gamma_t
\le \Delta_1 + B(1+\ln T),
\label{eq:pl_aux_sum}
\end{equation}
where we used \eqref{eq:pl_aux_gamma} and the fact that $\Delta_{T+1}\ge 0$.

Since $\eta_t = t^{-p} \ge T^{-p}$ for all $1\le t\le T$, we have
$$
T^{-p}\sum_{t=1}^T S_t \le \sum_{t=1}^T \eta_t S_t.
$$
Combining this with \eqref{eq:pl_aux_sum}, we obtain
\begin{equation}
\frac{1}{T}\sum_{t=1}^T S_t \le \frac{\Delta_1 + B(1+\ln T)}{\sqrt{2\mu}T^{1-p}}.
\label{eq:pl_aux_average}
\end{equation}

Let $\tau$ be uniform on $\{1,\dots,T\}$ and independent of the algorithmic randomness, and set $\widehat{\mathbf{X}}_T := \mathbf{X}_\tau$. Then
$$
\mathbb{E}\bigl[\sqrt{f(\widehat{\mathbf{X}}_T)-f^*}\bigr] = \frac{1}{T}\sum_{t=1}^T S_t.
$$
Thus, \eqref{eq:pl_aux_uniform} follows from \eqref{eq:pl_aux_average}. The bound \eqref{eq:pl_aux_best} follows from
$$
\min_{1\le t\le T} S_t \le \frac{1}{T}\sum_{t=1}^T S_t.
$$
This completes the proof.
\end{proof}

\section{Proof of Theorem \ref{th-best-mvr1}}
\label{proof:th-best-mvr1}

\begin{proof}
Let $\Delta_t := \mathbb{E}[f(\mathbf{X}_t)] - f^*, S_t := \mathbb{E}\bigl[\sqrt{f(\mathbf{X}_t)-f^*}\bigr]$. For both MVR1 schemes, the descent analysis yields
\begin{equation}
\Delta_{t+1}
\le \Delta_t - \eta_t \mathbb{E}\|\nabla f(\mathbf{X}_t)\|_F + \Gamma_t,
\qquad
\eta_t = t^{-3/4},
\label{eq:appF_descent}
\end{equation}
where
\begin{equation}
\Gamma_t = \frac{\eta_t^{2/3}}{L}\mathbb{E}\|\mathbf{M}_t-\nabla f(\mathbf{X}_t)\|_F^2 + \left(\frac{L}{2}+Ln\right)\eta_t^{4/3}.
\label{eq:appF_gamma}
\end{equation}
We now treat the two schemes separately.

\textbf{Case 1: $\gamma=0$.} With $\beta_t = 1 - t^{-1/2}$, the tracking-error estimate established in the MVR1 analysis gives
$$
\sum_{t=1}^T \Gamma_t \stackrel{(\star)}{\le} \left(2L^{-1}\sigma^2 + 4\sqrt{2}Ln + Ln + \frac{L}{2}\right)\ln T + \left(4L^{-1}\sigma^2 + 4\sqrt{2}Ln + Ln + \frac{L}{2}\right),
$$ 
where $(\star)$ follows from inequality (\ref{eq_Gamma1:th-nonconvex-mvr1}).

Then, we have
\begin{equation}
\sum_{t=1}^T \Gamma_t \le C_1(1+\ln T),
\qquad T\ge 2,
\label{eq:appF_case1}
\end{equation}
where $C_1 = 6L^{-1}\sigma^2 + (8\sqrt{2}+2)Ln + L$.

\textbf{Case 2: $\gamma_t=t^{-1/2}$.} With $\beta_t = 1 - (t+1)^{-1/2}$, the corresponding tracking-error estimate gives
$$
\sum_{t=1}^T \Gamma_t
\stackrel{(\star)}{\le} \left(4L^{-1}\sigma^2 + 8\sqrt{2}Ln + Ln + \frac{L}{2}\right)\ln T + \left(12L^{-1}\sigma^2 + 8\sqrt{2}Ln + Ln + \frac{L}{2}\right),
$$
where $(\star)$ follows from inequality (\ref{eq_Gamma2:th-nonconvex-mvr1}).

Then, we have
\begin{equation}
\sum_{t=1}^T \Gamma_t \le C_2(1+\ln T),
\qquad T\ge 2,
\label{eq:appF_case2}
\end{equation}
where $C_2 = 16L^{-1}\sigma^2 + (16\sqrt{2}+2)Ln + L$.

Applying Lemma \ref{lem:pl_root_aux} to \eqref{eq:appF_descent}, with $p=3/4$ and with $B=C_1$ in Case 1 and $B=C_2$ in Case 2, yields
$$
\mathbb{E}\bigl[\sqrt{f(\widehat{\mathbf{X}}_T)-f^*}\bigr] \le \frac{\Delta_1 + C_i(1+\ln T)}{\sqrt{2\mu}T^{1/4}},
\qquad i\in\{1,2\},
$$
where $\widehat{\mathbf{X}}_T$ is sampled uniformly from $\{\mathbf{X}_t\}_{t=1}^T$ independently of the algorithmic randomness. Moreover,
$$
\min_{1\le t\le T}\mathbb{E}\bigl[\sqrt{f(\mathbf{X}_t)-f^*}\bigr] \le \frac{\Delta_1 + C_i(1+\ln T)}{\sqrt{2\mu}T^{1/4}},
\qquad i\in\{1,2\}.
$$
This proves the theorem.
\end{proof}

\section{Proof of Theorem \ref{th-best-mvr2}}
\label{proof:th-best-mvr2}

\begin{proof}
Let $\Delta_t := \mathbb{E}[f(\mathbf{X}_t)] - f^*,S_t := \mathbb{E}\bigl[\sqrt{f(\mathbf{X}_t)-f^*}\bigr]$. For MVR2, the descent analysis yields
\begin{equation}
\Delta_{t+1} \le \Delta_t - \eta_t \mathbb{E}\|\nabla f(\mathbf{X}_t)\|_F + \Gamma_t,
\qquad
\eta_t = t^{-2/3},
\label{eq:appG_descent}
\end{equation}
where
\begin{equation}
\Gamma_t = \frac{\sqrt{\eta_t}}{L}\mathbb{E}\|\mathbf{M}_t-\nabla f(\mathbf{X}_t)\|_F^2 + \left(\frac{L}{2}+Ln\right)\eta_t^{3/2}.
\label{eq:appG_gamma}
\end{equation}

It remains to bound the cumulative error term. The tracking-error estimate for MVR2 gives
\begin{equation}
\sum_{t=1}^T \sqrt{\eta_t}\mathbb{E}\|\mathbf{M}_t-\nabla f(\mathbf{X}_t)\|_F^2 \le 4\sigma^2 + 8(4L^2n+\sigma^2)(1+\ln T).
\label{eq:appG_tracking}
\end{equation}
Since $\eta_t^{3/2} = t^{-1}$, we also have
\begin{equation}
\sum_{t=1}^T \eta_t^{3/2} = \sum_{t=1}^T \frac{1}{t} \le 1+\ln T.
\label{eq:appG_harmonic}
\end{equation}
Combining \eqref{eq:appG_gamma}, \eqref{eq:appG_tracking}, and \eqref{eq:appG_harmonic}, we obtain
$$
\sum_{t=1}^T \Gamma_t \le \left(8L^{-1}\sigma^2 + 33Ln + \frac{L}{2}\right)\ln T + \left(12L^{-1}\sigma^2 + 33Ln + \frac{L}{2}\right).
$$
Then, we have
\begin{equation}
\sum_{t=1}^T \Gamma_t \le C_3(1+\ln T),
\qquad T\ge 2,
\label{eq:appG_sumGamma}
\end{equation}
where $C_3 = 20L^{-1}\sigma^2 + 66Ln + L$.

Applying Lemma \ref{lem:pl_root_aux} to \eqref{eq:appG_descent}, with $p=2/3$ and $B=C_3$, yields
$$
\mathbb{E}\bigl[\sqrt{f(\widehat{\mathbf{X}}_T)-f^*}\bigr] \le \frac{\Delta_1 + C_3(1+\ln T)}{\sqrt{2\mu}T^{1/3}},
$$
where $\widehat{\mathbf{X}}_T$ is sampled uniformly from $\{\mathbf{X}_t\}_{t=1}^T$ independently of the algorithmic randomness. Moreover,
$$
\min_{1\le t\le T}\mathbb{E}\bigl[\sqrt{f(\mathbf{X}_t)-f^*}\bigr] \le \frac{\Delta_1 + C_3(1+\ln T)}{\sqrt{2\mu}T^{1/3}}.
$$
This proves the theorem.
\end{proof}

\section{Stochastic last-iterate analysis under the PL condition}
\label{app:pl-corrected}
% This note contains only algebraic details supporting the corrected last-iterate PL bounds stated in the author response. 

This appendix proves the PL last-iterate bounds stated in Theorems \ref{th-last-mvr1} and \ref{th-last-mvr2}. The only additional assumption beyond the best-iterate PL analysis is a uniform bound on the gradient norms along the generated iterates. For a terminal time $T \ge 2$ and a sequence $\{a_t\}_{t \ge 1} \subset (0,1)$, define
$
R_t^{(T)} := \prod_{i=t+1}^{T-1}(1-a_i), 0 \le t \le T-1,
$
with the convention that an empty product equals $1$.
In particular, $R_{T-1}^{(T)} = 1$ and $R_0^{(T)} = \prod_{i=1}^{T-1}(1-a_i)$.

For notational clarity, we add the following notation. 

% \textit{(i)} For MVR1, define
% \[
% A_t:=\mathbb E\|\mathbf M_t-\nabla f(\mathbf X_t)\|_F^2,
% \qquad
% \epsilon_t:=t^{-2q/3},
% \qquad
% \eta_t=\eta t^{-q}=\eta\epsilon_t^{3/2},
% \qquad
% \beta_t=1-\epsilon_t.
% \]
% By Lemma~\ref{lemma:a2},
% \[
% A_{t+1} \le \beta_{t+1}A_t +\frac{\beta_{t+1}^2}{1-\beta_{t+1}}L^2\eta_t^2 n +(1-\beta_{t+1})^2\sigma^2.
% \]
% Using $\beta_{t+1}^2\le 1$, we get
% \[
% A_{t+1} \le (1-\epsilon_{t+1})A_t +L^2n\frac{\eta_t^2}{\epsilon_{t+1}} +\sigma^2\epsilon_{t+1}^2.
% \]
% Moreover,
% \[
% \frac{\eta_t^2}{\epsilon_{t+1}} = \eta^2 t^{-2q}(t+1)^{2q/3}
% = \eta^2\Bigl(\frac{t+1}{t}\Bigr)^{2q}(t+1)^{-4q/3} \le 2^{2q}\eta^2\epsilon_{t+1}^2.
% \]
% Hence
% \[
% A_{t+1}\le (1-\epsilon_{t+1})A_t+C_\sigma\epsilon_{t+1}^2,
% \qquad
% C_\sigma=2^{2q}L^2n\eta^2+\sigma^2.
% \]
% In particular, for $q=3/4$,
% \[
% C_\sigma=2\sqrt2L^2n\eta^2+\sigma^2.
% \]
(i) For MVR1, we distinguish the two schemes.

\textbf{Case 1 ($\gamma_t =0$).} Define
\[
A_t:=\mathbb E\|\mathbf M_t-\nabla f(\mathbf X_t)\|_F^2,
\qquad
\epsilon_t:=t^{-2q/3},
\qquad
\eta_t=\eta t^{-q}=\eta\epsilon_t^{3/2},
\qquad
\beta_t=1-\epsilon_t.
\]
By Lemma~\ref{lemma:a2},
\[
A_{t+1} \le \beta_{t+1}A_t
+\frac{\beta_{t+1}^2}{1-\beta_{t+1}}L^2\eta_t^2 n
+(1-\beta_{t+1})^2\sigma^2
\le (1-\epsilon_{t+1})A_t +L^2n\frac{\eta_t^2}{\epsilon_{t+1}} +\sigma^2\epsilon_{t+1}^2.
\]
Moreover,
\[
\frac{\eta_t^2}{\epsilon_{t+1}}
= \eta^2 t^{-2q}(t+1)^{2q/3}
= \eta^2\Bigl(\frac{t+1}{t}\Bigr)^{2q}(t+1)^{-4q/3}
\le 2^{2q}\eta^2\epsilon_{t+1}^2.
\]
Hence
\[
A_{t+1}\le (1-\epsilon_{t+1})A_t+C_\sigma\epsilon_{t+1}^2,
\qquad
C_\sigma:=2^{2q}L^2n\eta^2+\sigma^2.
\]
In particular, for $q=3/4$,
\[
C_\sigma=2\sqrt2\,L^2n\eta^2+\sigma^2.
\]
\textbf{Case 2 ($\gamma_t=1-\beta_{t-1}=\epsilon_t$).} Write MVR1 in the equivalent form
\[
\mathbf C_t=\beta_{t-1}\mathbf C_{t-1}+(1-\beta_{t-1})\nabla f(\mathbf X_t;\xi_t),
\qquad
\mathbf M_t=\beta_t\mathbf C_t+(1-\beta_t)\nabla f(\mathbf X_t;\xi_t).
\]
Define
\[
A_t:=\mathbb E\|\mathbf C_t-\nabla f(\mathbf X_t)\|_F^2,
\qquad
A_t':=\mathbb E\|\mathbf M_t-\nabla f(\mathbf X_t)\|_F^2,
\]
and set
\[
\epsilon_t:=t^{-2q/3},
\qquad
\eta_t=\eta t^{-q}=\eta\epsilon_t^{3/2},
\qquad
\beta_t=1-\epsilon_{t+1}.
\]
Applying Lemma~\ref{lemma:a2} to the $\mathbf C_t$-recursion gives
\[
A_{t+1}\le \beta_tA_t
+\frac{\beta_t^2}{1-\beta_t}L^2\eta_t^2 n
+(1-\beta_t)^2\sigma^2
\le (1-\epsilon_{t+1})A_t +L^2n\frac{\eta_t^2}{\epsilon_{t+1}}+\sigma^2\epsilon_{t+1}^2
\le (1-\epsilon_{t+1})A_t+C_\sigma\epsilon_{t+1}^2.
\]
Moreover,
\[
A_t' \le 2A_t+2\sigma^2\epsilon_{t+1}^2.
\]
For $q=3/4$, this becomes
\[
\beta_t=1-(t+1)^{-1/2},
\qquad
\gamma_t=t^{-1/2}.
\]
\textit{(ii)} For MVR2, define
\[
A_t:=\mathbb E\|\mathbf M_t-\nabla f(\mathbf X_t)\|_F^2,
\qquad
\epsilon_t:=t^{-q},
\qquad
\eta_t=\eta t^{-q}=\eta\epsilon_t,
\qquad
\beta_t=1-\epsilon_t.
\]
By Lemma~\ref{lemma:b2},
\[
A_{t+1} \le \beta_{t+1}A_t +2\beta_{t+1}^2L^2\mathbb E\|\mathbf X_{t+1}-\mathbf X_t\|_F^2 +2(1-\beta_{t+1})^2\sigma^2.
\]
Since $\mathbf X_{t+1}-\mathbf X_t=-\eta_t\mathbf O_t$ and $\|\mathbf O_t\|_F^2\le n$,
\[
A_{t+1} \le (1-\epsilon_{t+1})A_t +2L^2n\eta_t^2 +2\sigma^2\epsilon_{t+1}^2.
\]
Moreover,
\[
\eta_t^2 = \eta^2 t^{-2q} = \eta^2\Bigl(\frac{t+1}{t}\Bigr)^{2q}(t+1)^{-2q} \le 2^{2q}\eta^2\epsilon_{t+1}^2.
\]
Hence
\[
A_{t+1} \le (1-\epsilon_{t+1})A_t +\bigl(2^{1+2q}L^2n\eta^2+2\sigma^2\bigr)\epsilon_{t+1}^2.
\]
Equivalently,
\[
A_{t+1} \le (1-\epsilon_{t+1})A_t +\left(2^{1+2q}L^2n+\frac{2\sigma^2}{\eta^2}\right)\eta_{t+1}^2.
\]
For $q=2/3$, we have
\[
A_{t+1} \le (1-\epsilon_{t+1})A_t +C_{\sigma}\eta_{t+1}^2, \qquad C_{\sigma} = 8L^2n+\frac{2\sigma^2}{\eta^2}.
\]

\subsection{Lemma \ref{lemma:pl_linearize}}
\begin{lemma}
\label{lemma:pl_linearize}
Assume that $f$ satisfies the Polyak--\L ojasiewicz inequality $\|\nabla f(\mathbf{X})\|_F^2 \ge 2\mu \bigl(f(\mathbf{X}) - f^*\bigr)$ for all $\mathbf{X}$, and that $\|\nabla f(\mathbf{X}_t)\|_F \le G$ almost surely for every $t$.
Let $\Delta_t := \mathbb{E}[f(\mathbf{X}_t)] - f^*$.
If, for every $t \ge 1$,
$
\Delta_{t+1} \le \Delta_t - \eta_t  \mathbb{E}\|\nabla f(\mathbf{X}_t)\|_F + \Gamma_t,
$
then
\[
\Delta_{t+1} \le (1-\kappa \eta_t)\Delta_t + \Gamma_t,
\qquad
\kappa := \frac{2\mu}{G}.
\]
\end{lemma}

\begin{proof}
Set $Y_t := \|\nabla f(\mathbf{X}_t)\|_F \ge 0$.
Since $Y_t \le G$ almost surely, we have $Y_t^2 \le GY_t$ almost surely, hence
$\mathbb{E}Y_t \ge G^{-1}\mathbb{E}Y_t^2$.
By the PL inequality,
$Y_t^2 \ge 2\mu \bigl(f(\mathbf{X}_t)-f^*\bigr)$ pointwise, so
$\mathbb{E}Y_t^2 \ge 2\mu \Delta_t$.
Therefore,
\[
\mathbb{E}\|\nabla f(\mathbf{X}_t)\|_F = \mathbb{E}Y_t
\ge \frac{1}{G}\mathbb{E}Y_t^2
\ge \frac{2\mu}{G}\Delta_t
= \kappa \Delta_t.
\]
Substituting this bound into the assumed recursion gives
$\Delta_{t+1} \le (1-\kappa\eta_t)\Delta_t + \Gamma_t$.
\end{proof}

\begin{lemma}
\label{lemma:unroll}
Assume that a nonnegative sequence $\{\Delta_t\}_{t \ge 1}$ satisfies $\Delta_{t+1} \le (1-a_t)\Delta_t + \Gamma_t,
t \ge 1,$
where $a_t \in (0,1)$.
Then, for every $T \ge 2$,
\[
\Delta_T \le R_0^{(T)}\Delta_1 + \sum_{t=1}^{T-1}\Gamma_t R_t^{(T)}.
\]
\end{lemma}

\begin{proof}
The claim follows by iterating the one-step recursion:
\[
\Delta_T
\le \Delta_1 \prod_{i=1}^{T-1}(1-a_i) + \sum_{t=1}^{T-1}\Gamma_t \prod_{i=t+1}^{T-1}(1-a_i)
= R_0^{(T)}\Delta_1 + \sum_{t=1}^{T-1}\Gamma_t R_t^{(T)}.
\]
This is an extended version of the inequality in Lemma 22 from \cite{he2023convergence}.
\end{proof}

\begin{lemma}\label{lem:coeff-lb}
Let $q\in(0,1)$, $\epsilon_t:=t^{-q}$, and $\alpha:=q/2\in(0,1/2)$. Then, for every $t\ge 1$,
\[
\frac{1}{\sqrt{\epsilon_t}} - \frac{1-\epsilon_{t+1}}{\sqrt{\epsilon_{t+1}}}
= t^\alpha-(t+1)^\alpha+(t+1)^{-\alpha} \ge \frac14 \sqrt{\epsilon_t}.
\]
\end{lemma}

\begin{proof}
Set $c_t:=t^\alpha-(t+1)^\alpha+(t+1)^{-\alpha}.$
If $t=1$, then $c_1=1-2^\alpha+2^{-\alpha}
\ge 1-\sqrt{2}+\frac{1}{\sqrt{2}}
>\frac14.$

If $t\ge2$, concavity of $s\mapsto s^\alpha$ gives $(t+1)^\alpha-t^\alpha \le \alpha t^{\alpha-1},$ hence
$c_t
\ge
(t+1)^{-\alpha}-\alpha t^{\alpha-1} = t^{-\alpha}\left[\left(\frac{t}{t+1}\right)^\alpha-\alpha t^{2\alpha-1}\right].$ Since $t/(t+1)\ge 2/3$, $\alpha<1/2$, and $2\alpha-1<0$,
$
\left(\frac{t}{t+1}\right)^\alpha-\alpha t^{2\alpha-1} \ge \left(\frac23\right)^\alpha-\alpha
\ge \sqrt{\frac23}-\frac12 > \frac14.
$
Therefore $c_t\ge \frac14 t^{-\alpha}=\frac14\sqrt{\epsilon_t}$.
\end{proof}

\begin{lemma}
\label{lemma:tail-unified}
Let $\eta_t = \eta t^{-q}$ with $q \in (0,1)$, let $a_t = \kappa \eta_t$, and assume $\eta \le \min\{1,\frac{1}{2\kappa}\}$.
Then, for every $p \in (0,1)$, with
\[
C_p := \frac{1}{\kappa} \left(2^{p+1} + \left(\frac{4qp}{e(1-q)\kappa\eta}\right)^{\frac{qp}{1-q}}\right),
\]
we have for all $T \ge 2$,
\begin{equation}
\sum_{t=1}^{T-1}\eta_t^{1+p} R_t^{(T)} \le C_p \eta_T^p.
\end{equation}
The same bound also holds with $\eta_{t+1}^{1+p}$ in place of $\eta_t^{1+p}$.
\end{lemma}

% The factor $R_t^{(T)}$ acts as an exponential forgetting kernel centered near the terminal index $T$: terms far from $T$ are exponentially damped, while terms near $T$ are controlled by the local variation of $\eta_t^p$. Thus the weighted history has size $\eta_T^p$, rather than the unweighted cumulative size of $\sum_t \eta_t^{1+p}$.

\begin{proof}
Since $\eta_t$ is decreasing and $\kappa \eta_1 = \kappa\eta \le 1/2$, we have
$a_t \le 1/2$ for all $t$, and therefore
\begin{equation}
R_t^{(T)}
\le \exp\Bigl(-\sum_{i=t+1}^{T-1}a_i\Bigr)
= \exp\Bigl(-\kappa\sum_{i=t+1}^{T-1}\eta_i\Bigr).
\label{eq:tail-rt-exp}
\end{equation}
Moreover, the telescoping identity
$a_t R_t^{(T)} = R_t^{(T)} - R_{t-1}^{(T)}$ for $1 \le t \le T-1$
is the discrete summation-by-parts step used below. Hence
\begin{align}
\sum_{t=1}^{T-1}\eta_t^{1+p}R_t^{(T)}
&= \frac{1}{\kappa}\sum_{t=1}^{T-1}\eta_t^p \bigl(a_t R_t^{(T)}\bigr) \nonumber =
\frac{1}{\kappa}\sum_{t=1}^{T-1}\eta_t^p \bigl(R_t^{(T)} - R_{t-1}^{(T)}\bigr) \nonumber \\
&\le \frac{1}{\kappa}
\Bigl(\eta_{T-1}^p + \sum_{t=1}^{T-2}\bigl(\eta_t^p - \eta_{t+1}^p\bigr)R_t^{(T)}\Bigr).
\label{eq:tail-sbp}
\end{align}
Let $m := \lceil T/2\rceil$.
We split the sum in \eqref{eq:tail-sbp} into the ranges $t \le m-1$ and $t \ge m$.

For the second half, since $R_t^{(T)} \le 1$,
\[
\sum_{t=m}^{T-2}\bigl(\eta_t^p - \eta_{t+1}^p\bigr)R_t^{(T)}
\le \sum_{t=m}^{T-2}\bigl(\eta_t^p - \eta_{t+1}^p\bigr)
= \eta_m^p - \eta_{T-1}^p
\le \eta_m^p \le 2^p \eta_T^p.
\]
Also, $\eta_{T-1}^p \le 2^p \eta_T^p$.

For the first half, $R_t^{(T)}$ is increasing in $t$, so for $t \le m-1$,
$R_t^{(T)} \le R_{m-1}^{(T)}$.
Therefore,
\[
\sum_{t=1}^{m-1}\bigl(\eta_t^p - \eta_{t+1}^p\bigr)R_t^{(T)}
\le R_{m-1}^{(T)} \sum_{t=1}^{m-1}\bigl(\eta_t^p - \eta_{t+1}^p\bigr)
\le \eta_1^p R_{m-1}^{(T)}.
\]
By \eqref{eq:tail-rt-exp}, we have 
$
R_{m-1}^{(T)} \le \exp\Bigl(-\kappa \sum_{i=m}^{T-1}\eta_i\Bigr).
$
Since $i \le T$ for $m \le i \le T-1$ and $T-m \ge T/4$, we have
\[
\sum_{i=m}^{T-1}\eta_i = \eta \sum_{i=m}^{T-1} i^{-q}
\ge \eta (T-m)T^{-q}
\ge \frac{\eta}{4} T^{1-q}.
\]
Thus $R_{m-1}^{(T)} \le \exp(-c_0 T^{1-q})$ with $c_0 := \kappa\eta/4$.
Since for every $x>0$,
\[
x^\alpha e^{-c_0x}\le \left(\frac{pq}{ec_0(1-q)}\right)^\alpha,
\]
taking $x=T^{1-q}$ gives
\[
T^{qp}e^{-c_0T^{1-q}}
\le \left(\frac{qp}{e(1-q)c_0}\right)^{\frac{qp}{1-q}}
= \left(\frac{4qp}{e(1-q)\kappa\eta}\right)^{\frac{qp}{1-q}}.
\]
Hence
\[
\eta_1^p R_{m-1}^{(T)}
\le\left(\frac{4qp}{e(1-q)\kappa\eta}\right)^{\frac{qp}{1-q}}\eta_T^p.
\]
Substituting the above estimates into \eqref{eq:tail-sbp} yields
\[
\sum_{t=1}^{T-1}\eta_t^{1+p} R_t^{(T)}
\le\frac{1}{\kappa}
\left(2^{p+1}+\left(\frac{4qp}{e(1-q)\kappa\eta}\right)^{\frac{qp}{1-q}}\right)\eta_T^p
=C_p\eta_T^p.
\]
Finally, since $\eta_{t+1} \le \eta_t$, the same estimate holds with $\eta_{t+1}^{1+p}$ in place of $\eta_t^{1+p}$.
\end{proof}

\subsection{Lemma \ref{lemma:weighted_A}}
\begin{lemma}
\label{lemma:weighted_A}
Let $q \in (0,1)$, let $\epsilon_t := t^{-q}$, let $\eta_t := \eta \epsilon_t = \eta t^{-q}$, and let $a_t := \kappa \eta_t$.
Assume $\eta \le \min\{1,\frac{1}{8\kappa}\}$.
Suppose that a nonnegative sequence $\{A_t\}_{t \ge 1}$ satisfies
\begin{equation}
A_{t+1} \le (1-\epsilon_{t+1})A_t + C_\sigma \eta_{t+1}^2,
\qquad
t \ge 1,
\label{eq:mvr2-tracker-recursion}
\end{equation}
for some constant $C_\sigma > 0$.
Then, for every $T \ge 2$,
$
\sum_{t=1}^{T-1}\sqrt{\eta_t}A_t R_t^{(T)}
\le 8\sqrt{\eta}A_1 R_1^{(T)} + 8\eta C_\sigma \sum_{t=1}^{T-1}\eta_{t+1}^{3/2} R_t^{(T)}.
$ Consequently,
\[
\sum_{t=1}^{T-1}\sqrt{\eta_t}A_t R_t^{(T)} \le 8\sqrt{\eta}A_1 R_1^{(T)} + 8\eta C_pC_{\sigma} \eta_T^{1/2}
\]
where $C_p$ follows from Lemma~\ref{lemma:tail-unified} with $p=1/2$.
\end{lemma}

\begin{proof}
Define $U_t := A_t/\sqrt{\epsilon_t}$.
By \eqref{eq:mvr2-tracker-recursion},
\[
U_{t+1} \le \frac{1-\epsilon_{t+1}}{\sqrt{\epsilon_{t+1}}} A_t + C_\sigma \frac{\eta_{t+1}^2}{\sqrt{\epsilon_{t+1}}}.
\]
Hence
\begin{equation}
U_t - U_{t+1} \ge \Bigl(\frac{1}{\sqrt{\epsilon_t}} -\frac{1-\epsilon_{t+1}}{\sqrt{\epsilon_{t+1}}}\Bigr) A_t - C_\sigma \frac{\eta_{t+1}^2}{\sqrt{\epsilon_{t+1}}}.
\label{eq:mvr2-ut-diff}
\end{equation}
By Lemma~\ref{lem:coeff-lb} and \eqref{eq:mvr2-ut-diff},
\[
\sqrt{\epsilon_t}A_t \le 4(U_t-U_{t+1}) + 4C_\sigma \frac{\eta_{t+1}^2}{\sqrt{\epsilon_{t+1}}}.
\]
Since $\sqrt{\eta_t} = \sqrt{\eta}\sqrt{\epsilon_t}$, after multiplying by $\sqrt{\eta}R_t^{(T)}$ and summing from $t=1$ to $T-1$, we obtain
\begin{align}
\sum_{t=1}^{T-1}\sqrt{\eta_t}A_t R_t^{(T)}
&\le 4\sqrt{\eta}\sum_{t=1}^{T-1}(U_t-U_{t+1})R_t^{(T)} + 4\eta C_\sigma \sum_{t=1}^{T-1}\eta_{t+1}^{3/2}R_t^{(T)}.
\label{eq:mvr2-sum-step1}
\end{align}
For the first term, using $R_t^{(T)} - R_{t-1}^{(T)} = a_t R_t^{(T)}$, we have
\begin{align}
\sum_{t=1}^{T-1}(U_t-U_{t+1})R_t^{(T)}
&= U_1 R_1^{(T)} + \sum_{t=2}^{T-1}U_t\bigl(R_t^{(T)}-R_{t-1}^{(T)}\bigr) - U_T R_{T-1}^{(T)} \nonumber \\
&\le U_1 R_1^{(T)} + \sum_{t=2}^{T-1}U_t a_t R_t^{(T)}.
\end{align}
Moreover,
\[
\sqrt{\eta}U_t a_t = \sqrt{\eta}\frac{A_t}{\sqrt{\epsilon_t}}\cdot \kappa\eta\epsilon_t = \kappa\eta \sqrt{\eta_t}A_t.
\]
Substituting this into \eqref{eq:mvr2-sum-step1} gives
\[
\sum_{t=1}^{T-1}\sqrt{\eta_t}A_t R_t^{(T)}
\le 4\sqrt{\eta}U_1 R_1^{(T)} + 4\kappa\eta \sum_{t=2}^{T-1}\sqrt{\eta_t}A_t R_t^{(T)} + 4\eta C_\sigma \sum_{t=1}^{T-1}\eta_{t+1}^{3/2}R_t^{(T)}.
\]
Since $\eta \le 1/(8\kappa)$, we have $4\kappa\eta \le 1/2$.
Rearranging and using $\epsilon_1 = 1$, hence $U_1 = A_1$, we obtain
\[
\sum_{t=1}^{T-1}\sqrt{\eta_t}A_t R_t^{(T)}
\le 8\sqrt{\eta}A_1 R_1^{(T)} + 8\eta C_\sigma \sum_{t=1}^{T-1}\eta_{t+1}^{3/2}R_t^{(T)}.
\]
% The second claim follows from Lemma~\ref{lemma:tail-unified} with $p=1/2$.
Finally, we have
\[
\sum_{t=1}^{T-1}\sqrt{\eta_t}A_t R_t^{(T)}
\le 8\sqrt{\eta}A_1 R_1^{(T)} + 8\eta C_pC_{\sigma} \eta_T^{1/2}
\]
where $C_p$ follows from Lemma~\ref{lemma:tail-unified} with $p=1/2$.
\end{proof}

\begin{lemma}
\label{lemma:weighted_A_MVR1}
Let $q \in (0,1)$, let $\epsilon_t := t^{-2q/3}$, let $\eta_t := \eta t^{-q} = \eta \epsilon_t^{3/2}$, and let $a_t := \kappa \eta_t$.
Assume $\eta \le \min\{1,\frac{1}{4\kappa}\}$.
Suppose that a nonnegative sequence $\{A_t\}_{t \ge 1}$ satisfies
\begin{equation}
A_{t+1} \le (1-\epsilon_{t+1})A_t + C_\sigma \epsilon_{t+1}^2,
\qquad
t \ge 1,
\label{eq:mvr1-tracker-recursion}
\end{equation}
for some constant $C_\sigma > 0$.
Then, for every $T \ge 2$,
$
\sum_{t=1}^{T-1}\eta_t^{2/3} A_t R_t^{(T)}
\le 4\eta^{2/3}A_1 R_1^{(T)} + 4\eta^{-2/3} C_\sigma \sum_{t=1}^{T-1}\eta_{t+1}^{4/3} R_t^{(T)}.
$
Consequently,
\[
\sum_{t=1}^{T-1}\eta_t^{2/3} A_t R_t^{(T)}
\le 4\eta^{2/3}A_1 R_1^{(T)} + 4\eta^{-2/3}C_pC_\sigma\eta_T^{1/3}
\]
where $C_p$ follows from Lemma~\ref{lemma:tail-unified} with $p=1/3$.
\end{lemma}

\begin{proof}
Since $\epsilon_t/\epsilon_{t+1} = ((t+1)/t)^{2q/3} \le 2^{2q/3} \le 2$, the recursion \eqref{eq:mvr1-tracker-recursion} implies
\begin{equation}
\epsilon_t A_t \le 2\epsilon_{t+1}A_t \le 2\bigl(A_t-A_{t+1}+C_\sigma \epsilon_{t+1}^2\bigr).
\label{eq:mvr1-basic}
\end{equation}
Because $\eta_t^{2/3} = \eta^{2/3}\epsilon_t$, multiplying \eqref{eq:mvr1-basic} by $\eta^{2/3}R_t^{(T)}$ and summing from $t=1$ to $T-1$ yields
\begin{align}
\sum_{t=1}^{T-1}\eta_t^{2/3} A_t R_t^{(T)}
&\le 2\eta^{2/3}\sum_{t=1}^{T-1}(A_t-A_{t+1})R_t^{(T)} + 2\eta^{2/3}C_\sigma \sum_{t=1}^{T-1}\epsilon_{t+1}^2 R_t^{(T)}.
\label{eq:mvr1-sum-step1}
\end{align}
As before,
\[
\sum_{t=1}^{T-1}(A_t-A_{t+1})R_t^{(T)}
\le A_1 R_1^{(T)} + \sum_{t=2}^{T-1}A_t a_t R_t^{(T)}.
\]
Moreover, using $\epsilon_t^{3/2} \le \epsilon_t$,
\[
\eta^{2/3} A_t a_t = \kappa \eta^{2/3}\eta_t A_t = \kappa\eta^{5/3}\epsilon_t^{3/2}A_t
\le \kappa\eta^{5/3}\epsilon_t A_t = \kappa\eta \eta_t^{2/3} A_t.
\]
Substituting this into \eqref{eq:mvr1-sum-step1}, and using $\eta \le 1/(4\kappa)$, gives
\begin{align}
\sum_{t=1}^{T-1}\eta_t^{2/3} A_t R_t^{(T)}
&\le 2\eta^{2/3}A_1 R_1^{(T)} + 2\kappa\eta \sum_{t=2}^{T-1}\eta_t^{2/3} A_t R_t^{(T)} + 2\eta^{2/3}C_\sigma \sum_{t=1}^{T-1}\epsilon_{t+1}^2 R_t^{(T)} \nonumber \\
&\le 2\eta^{2/3}A_1 R_1^{(T)} + \frac{1}{2}\sum_{t=1}^{T-1}\eta_t^{2/3} A_t R_t^{(T)} + 2\eta^{2/3}C_\sigma \sum_{t=1}^{T-1}\epsilon_{t+1}^2 R_t^{(T)}.
\end{align}
Rearranging yields
\[
\sum_{t=1}^{T-1}\eta_t^{2/3} A_t R_t^{(T)}
\le 4\eta^{2/3}A_1 R_1^{(T)} + 4\eta^{2/3}C_\sigma \sum_{t=1}^{T-1}\epsilon_{t+1}^2 R_t^{(T)}.
\]
Finally, since $\eta_{t+1}^{4/3} = \eta^{4/3}\epsilon_{t+1}^2$, we obtain
\[
\begin{aligned}
\sum_{t=1}^{T-1}\eta_t^{2/3} A_t R_t^{(T)}
&\le 4\eta^{2/3}A_1 R_1^{(T)} + 4\eta^{-2/3}C_\sigma \sum_{t=1}^{T-1}\eta_{t+1}^{4/3} R_t^{(T)}
\\
&\le 4\eta^{2/3}A_1 R_1^{(T)} + 4\eta^{-2/3}C_pC_\sigma\eta_T^{1/3}
\end{aligned}
\]
where $C_p$ from Lemma~\ref{lemma:tail-unified} with $p=1/3$.
\end{proof}

\subsection{Proofs of Theorem \ref{th-last-mvr1}}

\begin{theorem}
\label{thm:last_iter-mvr1}
Let $\Delta_t := \mathbb{E}[f(\mathbf{X}_t)] - f^*$. Assumptions \ref{ass:1}, \ref{ass:2}, \ref{ass:3}, \ref{ass:6} and 
\ref{ass:7} hold. Set $\kappa := \frac{2\mu}{G},
\eta_t := \eta t^{-q},
q\in(0,1),
\eta \le \min\{1,\frac{1}{4\kappa}\},$
and define
\[
c := \frac{\kappa\eta}{1-q},
\quad
C_{1/3} := \frac{1}{\kappa}\left(2^{4/3} + \left(\frac{4q}{3e(1-q)\kappa\eta}\right)^{\frac{q}{3(1-q)}}\right),
\quad
C_\sigma := 2\sqrt{2}L^2n\eta^2+\sigma^2.
\]
\textbf{Case 1: $\gamma=0$.} For all $t \ge 1$,  define $A_t = \mathbb{E}\|\mathbf{M}_t - \nabla f(\mathbf{X}_t)\|_F^2$. Then, we have the following bound:
\begin{equation}
\Delta_{t+1}
\le \Delta_t - \eta_t\mathbb{E}\|\nabla f(\mathbf{X}_t)\|_F + \frac{\eta_t^{2/3}}{L}A_t + \left(\frac{L}{2}+Ln\right)\eta_t^{4/3},
\label{eq:mvr1-descent}
\end{equation}
and
\begin{equation}
A_{t+1} \le (1-\epsilon_{t+1})A_t + C_\sigma \epsilon_{t+1}^2,
\qquad
\epsilon_t := t^{-2q/3}.
\label{eq:mvr1-tracker}
\end{equation}
Then, for every $T\ge2$,
\begin{equation}
\Delta_T \le \left( e^{c}\Delta_1 + \frac{4\eta^{2/3}A_1}{L}e^{2^{1-q}c}
\right)e^{-cT^{1-q}} + \left(8\sqrt{2}Ln\eta^{4/3}+\frac{4\sigma^2}{L\eta^{2/3}}+\frac{L}{2}+Ln\right) C_{1/3}\eta_T^{1/3}.
\label{eq:mvr1-final-case1}
\end{equation}
\textbf{Case 2: $\gamma\neq0$.}
For all $t \ge 1$, define $A_t' = \mathbb{E}\|\mathbf{M}_t - \nabla f(\mathbf{X}_t)\|_F^2$ and $A_t = \mathbb{E}\|\mathbf{C}_t - \nabla f(\mathbf{X}_t)\|_F^2$, where $\mathbf{M}_t$ and $\mathbf{C}_t$ are given by Eq.~\eqref{eq3:th-nonconvex-mvr1}.
Then, we have the following bound:
\begin{equation}
\Delta_{t+1} \le \Delta_t - \eta_t\mathbb{E}\|\nabla f(\mathbf{X}_t)\|_F + \frac{\eta_t^{2/3}}{L}A_t' + \left(\frac{L}{2}+Ln\right)\eta_t^{4/3}.
\label{eq:mvr1-descent-gamma}
\end{equation}
\begin{equation}
A_{t+1} \le (1-\epsilon_{t+1})A_t + C_\sigma \epsilon_{t+1}^2,
\qquad
\epsilon_t := t^{-2q/3},
\label{eq:mvr1-tracker-gamma}
\end{equation}
and
\begin{equation}
A_t' \le 2A_t + 2\sigma^2\epsilon_{t+1}^2.
\label{eq:mvr1-prime-bound}
\end{equation}
Then, for every $T\ge2$,
\begin{equation}
\Delta_T \le \left( e^{c}\Delta_1 + \frac{8\eta^{2/3}A_1}{L}e^{2^{1-q}c}
\right)e^{-cT^{1-q}} + \left( 16\sqrt{2}Ln\eta^{4/3}+\frac{10\sigma^2}{L\eta^{2/3}}+\frac{L}{2}+Ln\right)C_{1/3}\eta_T^{1/3}.
\label{eq:mvr1-final-case2}
\end{equation}

In particular, in both cases,
\[
\Delta_T = \mathcal{O}(\eta_T^{1/3}) = \mathcal{O}(T^{-q/3}).
\]
For the choice $q=3/4$, this gives
\[
\Delta_T = \mathcal{O}(T^{-1/4}).
\]
\end{theorem}

\begin{proof}
Let $\epsilon_t := t^{-2q/3}.$ Then $\eta_t = \eta\epsilon_t^{3/2}$, so Lemma~\ref{lemma:weighted_A_MVR1} applies. Also set $a_t := \kappa\eta_t$.

By Lemma~\ref{lemma:pl_linearize}, both \eqref{eq:mvr1-descent} and
\eqref{eq:mvr1-descent-gamma} imply a recursion of the form
\[
\Delta_{t+1}\le (1-a_t)\Delta_t+\Gamma_t,
\]
where
\[
\Gamma_t = \frac{\eta_t^{2/3}}{L}A_t + \left(\frac{L}{2}+Ln\right)\eta_t^{4/3}
\]
in Case~1, and
\[
\Gamma_t = \frac{\eta_t^{2/3}}{L}A_t' + \left(\frac{L}{2}+Ln\right)\eta_t^{4/3}
\]
in Case~2. Therefore, by Lemma~\ref{lemma:unroll},
\begin{equation}
\Delta_T \le R_0^{(T)}\Delta_1 + \sum_{t=1}^{T-1}\Gamma_tR_t^{(T)}.
\label{eq:mvr1-unroll-explicit}
\end{equation}

We first bound the homogeneous term. Since $\kappa\eta_t \le \kappa\eta \le \frac14$, we have
\[
R_0^{(T)} \le \exp\left(-\kappa\sum_{t=1}^{T-1}\eta_t\right) \le \exp\left(-\frac{\kappa\eta}{1-q}(T^{1-q}-1)\right)
= e^{c}e^{-cT^{1-q}}.
\]
Similarly,
\[
R_1^{(T)} \le \exp\left(-\kappa\sum_{t=2}^{T-1}\eta_t\right)
\le \exp\left(-\frac{\kappa\eta}{1-q}(T^{1-q}-2^{1-q}) \right)
= e^{2^{1-q}c}e^{-cT^{1-q}}.
\]

Next, Lemma~\ref{lemma:tail-unified} with $p=1/3$ gives
\[
\sum_{t=1}^{T-1}\eta_t^{4/3}R_t^{(T)} \le C_{1/3}\eta_T^{1/3},
\qquad
\sum_{t=1}^{T-1}\eta_{t+1}^{4/3}R_t^{(T)} \le C_{1/3}\eta_T^{1/3},
\]
where
\[
C_{1/3} = \frac{1}{\kappa}\left(2^{4/3}+\left(\frac{4q}{3e(1-q)\kappa\eta}\right)^{\frac{q}{3(1-q)}}\right).
\]
\textbf{Case 1: $\gamma=0$.}
By Lemma~\ref{lemma:weighted_A_MVR1},
\[
\sum_{t=1}^{T-1}\eta_t^{2/3}A_tR_t^{(T)} \le 4\eta^{2/3}A_1R_1^{(T)} + 4\eta^{-2/3}C_\sigma C_{1/3}\eta_T^{1/3}.
\]
Hence
\begin{align*}
\sum_{t=1}^{T-1}\Gamma_tR_t^{(T)}
&= \frac{1}{L}\sum_{t=1}^{T-1}\eta_t^{2/3}A_tR_t^{(T)}
+ \left(\frac{L}{2}+Ln\right)\sum_{t=1}^{T-1}\eta_t^{4/3}R_t^{(T)} \\
&\le \frac{4\eta^{2/3}A_1}{L}R_1^{(T)} + \left( \frac{4C_\sigma}{L\eta^{2/3}} + \frac{L}{2} + Ln \right) C_{1/3}\eta_T^{1/3}.
\end{align*}
Using the bound on $R_1^{(T)}$, we obtain
\[
\sum_{t=1}^{T-1}\Gamma_tR_t^{(T)} \le \frac{4\eta^{2/3}A_1}{L}e^{2^{1-q}c}e^{-cT^{1-q}}
+ \left(\frac{4C_\sigma}{L\eta^{2/3}} + \frac{L}{2} + Ln \right)C_{1/3}\eta_T^{1/3}.
\]
Substituting this and the bound on $R_0^{(T)}$ into
\eqref{eq:mvr1-unroll-explicit} yields
\[
\Delta_T \le \left(e^{c}\Delta_1 + \frac{4\eta^{2/3}A_1}{L}e^{2^{1-q}c}\right)e^{-cT^{1-q}}
+ \left(\frac{4C_\sigma}{L\eta^{2/3}}+\frac{L}{2}+Ln\right)C_{1/3}\eta_T^{1/3}.
\]
Note that we have $A_1=\mathbb{E}\|\mathbf{M}_1 - \nabla f(\mathbf{X}_1)\|_F^2 \le \sigma^2$, therefore $e^{c}\Delta_1 + \frac{4\eta^{2/3}A_1}{L}e^{2^{1-q}c}  \le e^{c}\Delta_1 + \frac{4\eta^{2/3}\sigma^2}{L}e^{2^{1-q}c}$. 
Finally, using
\[
C_\sigma = 2\sqrt{2}L^2n\eta^2+\sigma^2,
\]
we get
\[
\frac{4C_\sigma}{L\eta^{2/3}} = 8\sqrt{2}Ln\eta^{4/3} + \frac{4\sigma^2}{L\eta^{2/3}},
\]
which gives \eqref{eq:mvr1-final-case1}.

\textbf{Case 2: $\gamma\neq0$.}
By \eqref{eq:mvr1-prime-bound},
\[
\sum_{t=1}^{T-1}\eta_t^{2/3}A_t'R_t^{(T)}
\le 2\sum_{t=1}^{T-1}\eta_t^{2/3}A_tR_t^{(T)} + 2\sigma^2\sum_{t=1}^{T-1}\eta_t^{2/3}\epsilon_{t+1}^2R_t^{(T)}.
\]
Since
\[
\eta_t^{2/3}\le \eta^{2/3}
\qquad\text{and}\qquad
\eta_{t+1}^{4/3}=\eta^{4/3}\epsilon_{t+1}^2,
\]
we have
\[
\eta_t^{2/3}\epsilon_{t+1}^2 \le \eta^{-2/3}\eta_{t+1}^{4/3}.
\]
Therefore,
\[
\sum_{t=1}^{T-1}\eta_t^{2/3}A_t'R_t^{(T)} \le 2\sum_{t=1}^{T-1}\eta_t^{2/3}A_tR_t^{(T)} + 2\sigma^2\eta^{-2/3}\sum_{t=1}^{T-1}\eta_{t+1}^{4/3}R_t^{(T)}.
\]
Applying Lemma~\ref{lemma:weighted_A_MVR1} to the sequence $\{A_t\}$ and then using the tail bound above, we get
\[
\sum_{t=1}^{T-1}\eta_t^{2/3}A_t'R_t^{(T)}
\le 8\eta^{2/3}A_1R_1^{(T)} + (8C_\sigma+2\sigma^2)\eta^{-2/3}C_{1/3}\eta_T^{1/3}.
\]
Hence
\begin{align*}
\sum_{t=1}^{T-1}\Gamma_tR_t^{(T)}
&= \frac{1}{L}\sum_{t=1}^{T-1}\eta_t^{2/3}A_t'R_t^{(T)} + \left(\frac{L}{2}+Ln\right)\sum_{t=1}^{T-1}\eta_t^{4/3}R_t^{(T)} \\
&\le \frac{8\eta^{2/3}A_1}{L}R_1^{(T)} + \left(\frac{8C_\sigma+2\sigma^2}{L\eta^{2/3}} + \frac{L}{2} + Ln\right)C_{1/3}\eta_T^{1/3}.
\end{align*}
Using the bound on $R_1^{(T)}$, we obtain
\[
\sum_{t=1}^{T-1}\Gamma_tR_t^{(T)}
\le \frac{8\eta^{2/3}A_1}{L}e^{2^{1-q}c}e^{-cT^{1-q}} + \left(\frac{8C_\sigma+2\sigma^2}{L\eta^{2/3}}+\frac{L}{2}+Ln\right)C_{1/3}\eta_T^{1/3}.
\]
Substituting this and the bound on $R_0^{(T)}$ into
\eqref{eq:mvr1-unroll-explicit} yields
\[
\Delta_T \le \left(e^{c}\Delta_1 + \frac{8\eta^{2/3}A_1}{L}e^{2^{1-q}c} \right)e^{-cT^{1-q}}
+ \left(\frac{8C_\sigma+2\sigma^2}{L\eta^{2/3}} + \frac{L}{2} + Ln\right)C_{1/3}\eta_T^{1/3}.
\]
Note that we have $A_1=\mathbb{E}\|\mathbf{C}_1 - \nabla f(\mathbf{X}_1)\|_F^2 \le \sigma^2$, therefore $e^{c}\Delta_1 + \frac{8\eta^{2/3}A_1}{L}e^{2^{1-q}c}  \le e^{c}\Delta_1 + \frac{8\eta^{2/3}\sigma^2}{L}e^{2^{1-q}c} $. 
Finally, using
\[
8C_\sigma+2\sigma^2 = 16\sqrt{2}L^2n\eta^2+10\sigma^2,
\]
we get
\[
\frac{8C_\sigma+2\sigma^2}{L\eta^{2/3}} = 16\sqrt{2}Ln\eta^{4/3} + \frac{10\sigma^2}{L\eta^{2/3}},
\]
which gives \eqref{eq:mvr1-final-case2}. This completes the proof.
\end{proof}

\subsection{Proof of Theorem \ref{th-last-mvr2}}
\begin{theorem}
\label{thm:last_iter-mvr2}
Let $\Delta_t := \mathbb{E}[f(\mathbf{X}_t)] - f^*$. 
Assumptions \ref{ass:1}, \ref{ass:2.2}, \ref{ass:3}, \ref{ass:6} and 
\ref{ass:7} hold. Set $\kappa := \frac{2\mu}{G},
\eta_t := \eta t^{-q}, q\in(0,1), \eta \le \min\{1,\frac{1}{8\kappa}\}$, and define
\[
c := \frac{\kappa\eta}{1-q},
\qquad
C_{1/2} := \frac{1}{\kappa} \left(2^{3/2}+\left(\frac{2q}{e(1-q)\kappa\eta}\right)^{\frac{q}{2(1-q)}}\right).
\]
Let $A_t = \mathbb{E}\|\mathbf{M}_t - \nabla f(\mathbf{X}_t)\|_F^2$. For all $t\ge1$,
\begin{equation}
\Delta_{t+1} \le \Delta_t - \eta_t \mathbb{E}\|\nabla f(\mathbf{X}_t)\|_F + \frac{\sqrt{\eta_t}}{L}A_t + \left(Ln+\frac{L}{2}\right)\eta_t^{3/2},
\label{eq:mvr2-descent}
\end{equation}
and
\begin{equation}
A_{t+1} \le (1-\epsilon_{t+1})A_t + \left(8L^2n+2\sigma^2\eta^{-2}\right)\eta_{t+1}^2,
\qquad
\epsilon_t := t^{-q}.
\label{eq:mvr2-tracker}
\end{equation}

Then, for every $T\ge2$,
\begin{equation}
\Delta_T \le \left( e^c\Delta_1 + \frac{8\sqrt{\eta}A_1}{L}e^{2^{1-q}c} \right)e^{-cT^{1-q}}
+ \left(64\eta Ln + \frac{16\eta^{-1}\sigma^2}{L} + Ln + \frac{L}{2}\right)C_{1/2}\eta_T^{1/2}.
\label{eq:mvr2-final}
\end{equation}

In particular,
\[
\Delta_T = \mathcal{O}(\eta_T^{1/2}) = \mathcal{O}(T^{-q/2}).
\]
For the choice $q=2/3$, this gives
\[
\Delta_T = \mathcal{O}(T^{-1/3}).
\]
\end{theorem}

\begin{proof}
Let $a_t := \kappa\eta_t$. Since $\epsilon_t=t^{-q}$, we have $\eta_t = \eta\epsilon_t$, so Lemma~\ref{lemma:weighted_A} applies.

By Lemma~\ref{lemma:pl_linearize}, \eqref{eq:mvr2-descent} implies
\[
\Delta_{t+1} \le (1-a_t)\Delta_t + \frac{\sqrt{\eta_t}}{L}A_t + \left(Ln+\frac{L}{2}\right)\eta_t^{3/2}.
\]
Applying Lemma~\ref{lemma:unroll}, we obtain
\begin{equation}
\Delta_T \le R_0^{(T)}\Delta_1 + \frac{1}{L}\sum_{t=1}^{T-1}\sqrt{\eta_t}A_tR_t^{(T)} + \left(Ln+\frac{L}{2}\right)\sum_{t=1}^{T-1}\eta_t^{3/2}R_t^{(T)}.
\label{eq:mvr2-unroll}
\end{equation}

We first bound the homogeneous term. Since
\[
\kappa\eta_t \le \kappa\eta \le \frac18,
\]
we have
\[
R_0^{(T)} \le \exp\left(-\kappa\sum_{t=1}^{T-1}\eta_t\right) \le \exp\left( -\frac{\kappa\eta}{1-q}(T^{1-q}-1)\right) = e^c e^{-cT^{1-q}}.
\]
Similarly,
\[
R_1^{(T)} \le \exp\left(-\kappa\sum_{t=2}^{T-1}\eta_t\right) \le \exp\left(
-\frac{\kappa\eta}{1-q}(T^{1-q}-2^{1-q}) \right)
= e^{2^{1-q}c}e^{-cT^{1-q}}.
\]

Next, apply Lemma~\ref{lemma:weighted_A} with
\[
C_\sigma = 8L^2n+2\sigma^2\eta^{-2}.
\]
This gives
\[
\sum_{t=1}^{T-1}\sqrt{\eta_t}A_tR_t^{(T)} \le 8\sqrt{\eta}A_1R_1^{(T)} + 8\eta\left(8L^2n+2\sigma^2\eta^{-2}\right)
\sum_{t=1}^{T-1}\eta_{t+1}^{3/2}R_t^{(T)}.
\]
Also, Lemma~\ref{lemma:tail-unified} with $p=1/2$ yields
\[
\sum_{t=1}^{T-1}\eta_{t+1}^{3/2}R_t^{(T)} \le C_{1/2}\eta_T^{1/2},
\qquad
\sum_{t=1}^{T-1}\eta_t^{3/2}R_t^{(T)} \le C_{1/2}\eta_T^{1/2},
\]
where
\[
C_{1/2} = \frac{1}{\kappa}
\left( 2^{3/2} + \left( \frac{2q}{e(1-q)\kappa\eta} \right)^{\frac{q}{2(1-q)}}\right).
\]

Substituting these bounds into \eqref{eq:mvr2-unroll}, we get
\begin{align*}
\Delta_T
&\le e^c e^{-cT^{1-q}}\Delta_1 + \frac{8\sqrt{\eta}A_1}{L}R_1^{(T)} + \frac{8\eta\left(8L^2n+2\sigma^2\eta^{-2}\right)}{L}C_{1/2}\eta_T^{1/2} +
\left(Ln+\frac{L}{2}\right)C_{1/2}\eta_T^{1/2} \\
&\le e^c e^{-cT^{1-q}}\Delta_1 + \frac{8\sqrt{\eta}A_1}{L}e^{2^{1-q}c}e^{-cT^{1-q}}
+ \left[\frac{8\eta\left(8L^2n+2\sigma^2\eta^{-2}\right)}{L} + Ln+\frac{L}{2} \right]C_{1/2}\eta_T^{1/2}.
\end{align*}
Note that we have $A_1=\mathbb{E}\|\mathbf{M}_1 - \nabla f(\mathbf{X}_1)\|_F^2 \le \sigma^2$, therefore $e^c\Delta_1 + \frac{8\sqrt{\eta}A_1}{L}e^{2^{1-q}c} \le e^c\Delta_1 + \frac{8\sqrt{\eta}\sigma^2}{L}e^{2^{1-q}c}$. 

Finally, since
\[
\frac{8\eta\left(8L^2n+2\sigma^2\eta^{-2}\right)}{L} = 64\eta Ln+\frac{16\eta^{-1}\sigma^2}{L},
\]
we obtain
\[
\Delta_T \le \left(e^c\Delta_1 + \frac{8\sqrt{\eta}\sigma^2}{L}e^{2^{1-q}c}\right)e^{-cT^{1-q}} + \left(64\eta Ln + \frac{16\eta^{-1}\sigma^2}{L} + Ln + \frac{L}{2}\right)C_{1/2}\eta_T^{1/2},
\]
which is exactly \eqref{eq:mvr2-final}. This completes the proof.
\end{proof}

% \begin{remark}
% The additional assumption $\|\nabla f(\mathbf{X}_t)\|_F \le G$ is used only in Lemma~\ref{lemma:pl_linearize}, i.e., only to convert the stochastic PL inequality into a linear recursion in $\Delta_t$.
% The main nonconvex results are unchanged.
% In the deterministic or zero-variance setting, one can work directly with the stronger deterministic recursion and recover sharper last-iterate bounds than those stated in Theorems~\ref{thm:last_iter-mvr1} and \ref{thm:last_iter-mvr2}.
% \end{remark}

\section{Experimental Details}
\label{appendix:exp}
\subsection{Training on CIFAR10}
The ResNet18 model~\cite{he2016deep} undergoes pretraining on the CIFAR-10 dataset~\cite{krizhevsky2009learning} with comprehensive hyperparameter specifications provided in Table~\ref{tab:resnet_hyperparameters}. For each optimizer, the learning rate is selected via a grid search over the set $\{1\times10^{-4}, 5\times10^{-4}, 10^{-3}, 5\times10^{-3}, 10^{-2}, 5\times10^{-2}, 10^{-1}\}$. To ensure a robust comparison, all experiments are repeated over five different random seeds, and we report the mean results with one standard deviation shaded. For the ResNet-18 model, we reshape each convolutional kernel into a 2D matrix and apply a Muon-type optimizer to these parameters, while the remaining 1D vector parameters are optimized with AdamW.

\begin{table}[h!]
\centering
\caption{Hyperparameters used for training ResNet18 on CIFAR10}
\begin{tabular}{lccccc}
\hline
 & SGD & Adam & Muon & Muon-MVR1 & Muon-MVR2 \\
\hline
Model Size & \multicolumn{5}{c}{42.7MB} \\
Training Epochs & \multicolumn{5}{c}{100} \\
Batch Size & \multicolumn{5}{c}{128} \\
Learning Rate & 0.1 & 0.01  & \multicolumn{3}{c}{0.05} \\
Learning Rate Scheduling & \multicolumn{5}{c}{cosine to 10\%} \\
Numerical precision & \multicolumn{5}{c}{float32} \\
Weight Decay & \multicolumn{5}{c}{0.01}\\
$(\beta_1,\beta_2)$ & \XSolidBrush & (0.9,0.999) & \multicolumn{3}{c}{\XSolidBrush} \\
Muon-Momentum &  \XSolidBrush & \XSolidBrush & \multicolumn{3}{c}{0.9}\\
Gamma & \XSolidBrush & \XSolidBrush &  \XSolidBrush & \multicolumn{2}{c}{0.1} \\
\hline
\end{tabular}

\label{tab:resnet_hyperparameters}
\end{table}

\subsection{Pretraining on C4}
% The LLaMA2-130M model and LLaMA2-350M model\cite{Touvron2023Llama2O} were pre-trained on the C4 (Colossal Clean Crawled Corpus) dataset \cite{raffel2020exploring}. 

% Identical learning rates and scheduling protocols were systematically implemented across all optimizers during the training process. Comprehensive experimental specifications are tabulated in Table \ref{tab:llama_c4_hyperparameters}.

\begin{table}[h!]
\centering
\caption{Hyperparameters used for training LLaMA2-130M on C4}
\begin{tabular}{lcccccc}
\hline
Hyper-parameter & AdamW &MARS-AdamW & Muon & Muon-MVR1 & Muon-MVR2 \\
\hline
% Model Size & \multicolumn{3}{c}{130M} \\
% Hidden Size & \multicolumn{2}{c}{768} \\
% Head & \multicolumn{2}{c}{12} \\
% Depth & \multicolumn{2}{c}{12} \\
% Training Steps & \multicolumn{2}{c}{10000} \\

% \multirow{2}{*}{Max Learning Rate} & 6e-3 & 6e-3 & 4e-3 \\
Max Learning Rate & 8e-4 & 1e-3 & 8e-4 & 2e-3 & 2e-3 \\
% & 350M & 8e-3 & 8e-3 & 4e-3 \\
Warmup Ratio  & \multicolumn{5}{c}{0.1}\\
Batch Size & \multicolumn{5}{c}{128} \\
Maximum Length & \multicolumn{5}{c}{4096} \\
Weight Decay  & \multicolumn{5}{c}{0.1} \\
$(\beta_1,\beta_2)$  & \multicolumn{5}{c}{(0.9,0.98)} \\
Muon-Momentum  & \XSolidBrush  & \XSolidBrush  & \multicolumn{3}{c}{0.95} \\
Gamma  & \XSolidBrush  & 0.025 & \XSolidBrush & \multicolumn{2}{c}{0.05} \\
\hline
\end{tabular}

\label{tab:llama_c4_hyperparameters}
\end{table}

$\blacktriangleright$ \textbf{Experimental setup.} We use 8 Ascend 910C (64GB) NPUs for all experiments. For the additional experiments, we conduct hyperparameter sweeps for LLaMA2-130M~\cite{Touvron2023Llama2O} trained for 12B tokens on the C4 (Colossal Clean Crawled Corpus) dataset \cite{raffel2020exploring}. For all optimizers (AdamW, MARS-AdamW, Muon, Muon-MVR1, and Muon-MVR2), we keep the model architecture and training data fixed. For LLaMA2-130M, we use a global batch size of 128 and a maximum sequence length of 4096. 
% For LLaMA2-350M, we use a global batch size of 256 and a maximum sequence length of 4096. 
For each optimizer we train all configurations on the 4× Chinchilla data about 12B tokens for 22,000 steps and select the hyperparameters that achieve the best validation performance.

$\blacktriangleright$ \textbf{Hyperparameter search.} Comprehensive experimental specifications are tabulated in Table \ref{tab:llama_c4_hyperparameters}. For AdamW, we set $(\beta_1, \beta_2) = (0.9, 0.98)$, $\epsilon = 10^{-8}$, and a weight decay of 0.1. The learning rate $\eta$ is selected from the set $\{3\mathrm{e}{-4}, 5\mathrm{e}{-4}, 8\mathrm{e}{-4}, 1\mathrm{e}{-3}, 2\mathrm{e}{-3}, 4\mathrm{e}{-3}, 6\mathrm{e}{-3}, 8\mathrm{e}{-3}\}$. For MARS-AdamW, we use the same $(\beta_1, \beta_2) = (0.9, 0.98)$, $\epsilon = 10^{-8}$, and weight decay of 0.1, and we search over the same learning-rate set for $\eta$. In addition, we sweep over the algorithmic parameter $\gamma \in \{0.01, 0.025, 0.05\}$.

For Muon, we set $\beta = 0.95$ and a weight decay of 0.1, and we again choose the learning rate $\eta$ from $\{3\mathrm{e}{-4}, 5\mathrm{e}{-4}, 8\mathrm{e}{-4}, 1\mathrm{e}{-3}, 2\mathrm{e}{-3}, 4\mathrm{e}{-3}, 6\mathrm{e}{-3}, 8\mathrm{e}{-3}\}$. Muon-MVR1 and Muon-MVR2 use the same settings $\beta = 0.95$ and weight decay of 0.1, and share the same learning-rate search space as Muon. For both Muon-MVR1 and Muon-MVR2, we additionally perform a sweep over $\gamma \in \{0.01, 0.025, 0.05\}$.
We use the Muon implementation from Moonlight\footnote{\url{https://github.com/MoonshotAI/Moonlight}}. For LLaMA model, we optimize all 2D matrix parameters (except the embedding layers) using a Muon-type optimizer, while the remaining 1D vector parameters (including the embedding layers) are optimized with AdamW. 

\section{Limitations and future work}
\label{appendix:lim}
$\blacktriangleright$ \textbf{Limitations}: First, a systematic comparison with other Muon-type optimizers is currently lacking. Second, a gap remains between the theoretical assumption of an exact $\operatorname{msign}$ update and the practical use of finite Newton--Schulz iterations, particularly for the inexact Muon-MVR variant. Extending the horizon-free MVR analysis to inexact matrix-sign approximations and quantifying the interaction between Newton–Schulz approximation error and variance reduction remain important open directions.

$\blacktriangleright$ \textbf{Future work}: Future directions include conducting a large-scale, unified evaluation of Muon variants with thorough tuning. Furthermore, it is valuable to derive rigorous guarantees for finite-step Newton--Schulz approximations to $\operatorname{msign}(\cdot)$ and to improve the theoretical convergence rate of Muon-MVR1 beyond $\widetilde{\mathcal{O}}(T^{-1/4})$. The PL last-iterate guarantees require a uniform bound on the gradient norms along the generated iterates. This assumption is used only for the non-ergodic PL analysis and may not hold automatically for all deep-learning objectives; relaxing it is an important direction for future work.
% \newpage
% \input{checklist.tex}
\end{document}